\documentclass{article}

\PassOptionsToPackage{numbers,sort}{natbib}

\usepackage[preprint]{neurips_2026}

\usepackage[utf8]{inputenc} 
\usepackage[T1]{fontenc}    
\usepackage{hyperref}       
\usepackage{url}            
\usepackage{booktabs}       
\usepackage{amsfonts}       
\usepackage{nicefrac}       
\usepackage{microtype}      

\usepackage{enumitem}
\usepackage{float}

\usepackage{amssymb,amsthm,mathtools,bm}
\usepackage{multirow}
\usepackage{wrapfig}
\usepackage[table]{xcolor}

\theoremstyle{plain}
\newtheorem{theorem}{Theorem}
\newtheorem{proposition}{Proposition}
\newtheorem{lemma}{Lemma}
\newtheorem{corollary}{Corollary}
\theoremstyle{definition}
\newtheorem{definition}{Definition}
\newtheorem{assumption}{Assumption}
\theoremstyle{remark}

\usepackage{array}
\newcolumntype{L}[1]{>{\raggedright\let\newline \\\arraybackslash\hspace{0pt}}m{#1}}
\newcolumntype{C}[1]{>{\centering\let\newline \\\arraybackslash\hspace{0pt}}m{#1}}
\newcolumntype{R}[1]{>{\raggedleft\let\newline \\\arraybackslash\hspace{0pt}}m{#1}}

\title{Attention Sinks Induce Gradient Sinks: Massive Activations as Gradient Regulators in Transformers}
\author{
    Yihong Chen$^1$\qquad Zhouchen Lin$^2$\thanks{Co-corresponding authors.}\qquad Quanming Yao$^1$\footnotemark[1] \\
    $^1$Tsinghua University\qquad $^2$Peking University\\
    \texttt{chenyihong@tsinghua.edu.cn}\quad \texttt{zlin@pku.edu.cn}\quad \texttt{qyaoaa@tsinghua.edu.cn} \\
}

\begin{document}
\maketitle

\begin{abstract}
Attention sinks and massive activations are recurring and closely related phenomena in Transformer models. 
Existing explanations have largely focused on the forward pass, yet in pre-norm Transformers, large residual-stream norms play only an indirect forward role because sublayers operate on normalized inputs.
We study this relationship from the perspective of backpropagation. Empirically and theoretically, we show that under causal masking, attention sinks can induce pronounced gradient concentration, which we term \emph{gradient sinks}. Since the RMSNorm Jacobian attenuates gradients roughly in inverse proportion to input norm, massive activations can be understood as adaptive regulators of this localized gradient pressure during training.
This interpretation predicts that attenuating sink-induced gradients should weaken massive activations. We test this prediction with V-scale, a modification that adjusts backpropagated gradients on the value path. In V-scale models, attention sinks are preserved, whereas massive activations are suppressed. These results identify gradient sinks as a backward-pass counterpart of attention sinks, and massive activations as an adaptive RMSNorm-mediated response that attenuates the resulting localized training pressure. Our code is available at \href{https://anonymous.4open.science/r/GradientSinkCode-B309}{https://anonymous.4open.science/r/GradientSinkCode-B309}.
\end{abstract}

\section{Introduction}

Two prominent and frequently co-occurring phenomena in Transformer-based large language models (LLMs) are \emph{attention sinks} (AS), where a small number of tokens attract disproportionate attention mass~\cite{xiao2024streamingllm,gu2025when}, and \emph{massive activations} (MA), where activations become unusually large on a small set of tokens and features~\cite{bondarenko2023quantizable,sun2024massive,an2025systematic}. 
These phenomena are predominantly observed at the first token and are of significant practical importance for long-context inference~\cite{xiao2024streamingllm,su2025kvsink}, fine-tuning~\cite{liu2026sinktrack,liu2025all,fu2026attention,liu2026surgery}, and quantization~\cite{xiao2023smoothquant,bondarenko2023quantizable}. Their frequent co-occurrence has also attracted mechanistic interest. Existing work has provided several forward-pass explanations for why AS and MA may arise or persist. For example, AS has commonly been explained as a way to implement ``no-op'' attention heads that contribute little to the current token~\cite{gu2025when,barbero2025why,guo2025activedormant} under the sum-to-one constraint of the softmax operator, while MA has been described as helping form attention-sink behavior~\cite{sun2024massive,su2026unveiling,queipo-de-llano2026attention}.

However, these forward-pass accounts leave an important training-time aspect less fully explained. In modern pre-norm architectures, both attention and MLP sublayers operate on normalized inputs~\cite{zhang2019rmsnorm,xiong2020layernorm,touvron2023llama}. Locally, the forward attention computation depends much more strongly on the direction of the representation than on its raw norm. This makes the emergence of MA appear difficult to explain from a purely forward-pass perspective. Moreover, many interventions in the literature modify trained models post hoc by editing MLP outputs or token activations and observing joint changes in AS and MA. While informative, such interventions often perturb both magnitude and direction at the same time, making it difficult to isolate the causal mechanism linking these phenomena. Recent studies of training dynamics~\cite{gallegofeliciano2026hiddendynamics,queipo-de-llano2026attention} provide valuable additional perspectives, and architectural decoupling results~\cite{sun2026spike} show that AS and MA need not always appear together. Together, these valuable results clarify the forward roles and separability of AS and MA, but a training-time puzzle remains: \emph{why do standard pre-norm Transformers learn such extreme activations at sink tokens?}

We approach this question from the backward pass. Although pre-norm architectures make sublayer computations largely insensitive to residual-stream scale in the forward pass, that scale remains important during backpropagation because it controls how much gradient signal is transmitted through normalization. This matters precisely at sink tokens. In an attention head under causal masking, when many later tokens attend to a sink token, their training signals are routed back to that token as well. This turns a forward attention sink into a backward concentration of gradients, which we call a \emph{gradient sink} (GS).

In this view, massive activations can be understood as learned regulators that absorb sink-induced gradient pressure and help stable residual-stream gradient transport, rather than merely as direct forward causes of attention sinks. This interpretation further predicts that relieving such pressure through an alternative attenuation route should reduce reliance on MA even while AS remains present.
The remainder of the paper develops this mechanism from observation and analysis to intervention.

Our contributions are threefold:
\begin{itemize}
\item \textbf{Gradient sinks.} We identify gradient sinks as a backward-pass counterpart of attention sinks in pre-norm Transformers. We show that concentrated gradient pressure at the first token is widespread across models.
\item \textbf{Mechanism.} We provide an empirical and theoretical account of massive activations as adaptive regulators. In pre-norm architectures, RMSNorm makes activation scale an effective local factor to attenuate gradient pressure induced by attention sinks, thereby stabilizing gradient transport across the residual stream.
\item \textbf{Intervention.} We propose \emph{V-scale}, a value-path gradient valve that attenuates sink-induced gradients. V-scale substantially suppresses massive activations while preserving attention sinks, thus supporting our proposed mechanism. It also improves long-context retrieval robustness, including under several quantization settings.
\end{itemize}

\section{Setup}
\label{sec:setup}

We introduce the notation needed for the empirical measurements and theoretical analysis that follow.
In this paper, token positions are indexed by $t\in\{0,\dots,T-1\}$, and $\mathcal L$ denotes the training loss.

\paragraph{Pre-norm Transformer and attention}
We study decoder-only Llama-like Transformers with pre-norm residual blocks, RMSNorm, RoPE positional encoding, and SwiGLU MLPs~\cite{touvron2023llama,zhang2019rmsnorm,su2021rope,shazeer2020gluvariants}.
Let $H^\ell = (h_0^\ell,\dots,h_{T-1}^\ell)^\top \in \mathbb{R}^{T\times d_{\mathrm{model}}}$ denote the hidden states input to layer $\ell$.
A pre-norm block updates the sequence as
\[\begin{aligned}
    \widetilde H^\ell &= \mathrm{RMSNorm}(H^\ell), &
    R^{\mathrm{attn},\ell} &= \mathrm{Attention}(\widetilde H^\ell), &
    H^{\ell+\nicefrac12} &= H^\ell + R^{\mathrm{attn},\ell}, \\
    \widetilde H^{\ell+\nicefrac12} &= \mathrm{RMSNorm}(H^{\ell+\nicefrac12}), &
    R^{\mathrm{mlp},\ell} &= \mathrm{MLP}(\widetilde H^{\ell+\nicefrac12}), &
    H^{\ell+1} &= H^{\ell+\nicefrac12} + R^{\mathrm{mlp},\ell}.
\end{aligned}\]
We write $r_t^{\mathrm{attn},\ell}$ and $r_t^{\mathrm{mlp},\ell}$ for the $t$-th rows of $R^{\mathrm{attn},\ell}$ and $R^{\mathrm{mlp},\ell}$.

The attention block can be either multi-head attention (MHA)~\cite{vaswani2017attention} or a variant such as grouped-query attention (GQA)~\cite{ainslie2023gqa}. For simplicity, we fix one layer $\ell$ and one attention head $h$, and suppress the layer/head superscripts unless needed. With RoPE, the query, key, and value states at token position $t$ are $q_t = R_t W_Q \widetilde h_t$, $k_t = R_t W_K \widetilde h_t$, and $v_t = W_V \widetilde h_t$, where $R_t\in\mathbb{R}^{d_{\mathrm{head}}\times d_{\mathrm{head}}}$ is the RoPE rotation matrix. The causal attention logits and weights are computed by $z_{tj}=\langle q_t,k_j\rangle/\sqrt{d_{\mathrm{head}}} + m_{tj}$ and $a_{tj}=\mathrm{softmax}(z_{t,:})_j$, where $m_{tj}=0$ for $j\le t$ and $m_{tj}=-\infty$ otherwise. Finally, the value aggregation and head output are $y_t = \sum\nolimits_{j\le t} a_{tj} v_j$ and $o_t = W_O y_t$, respectively.

\paragraph{Attention sink and activation measurements}
For a candidate sink position $s$ (typically the first token, i.e., $s=0$), we follow~\cite{gu2025when,queipo-de-llano2026attention} and use thresholded criteria to count sink-like behavior:
\begin{equation}\label{eq:sink_rate}
    \mathrm{Sink}_s^{\epsilon,\ell} = \frac{1}{N_{\mathrm{head}}} \sum\nolimits_{h=1}^{N_{\mathrm{head}}} \mathbb{I}(\alpha_s^{\ell,h} > \epsilon),
    \quad
    \alpha_s^{\ell,h}=\frac{1}{T_\epsilon}\sum\nolimits_{t=s}^{s+T_\epsilon-1} a_{ts}^{\ell,h},
\end{equation}
where $N_{\mathrm{head}}$ is the number of attention heads in each layer.
Following prior work~\cite{gu2025when} we set the threshold $\epsilon=0.3$ and $T_\epsilon=64$.
For theory and training-dynamics measurements, we also use the attention-column mass and second moment
\begin{equation}\label{eq:sink_mass}
M_s^{\ell,h} = \sum\nolimits_{t=s}^{T-1} a_{ts}^{\ell,h},
\quad
S_s^{\ell,h} = \sum\nolimits_{t=s}^{T-1} (a_{ts}^{\ell,h})^2.
\end{equation}
Thus $M_s$ measures the total attention mass routed through token $s$ by all causally allowed future queries, while $S_s$ is its second-moment analogue.
In experiments, we focus on the first token ($s=0$).

For token-wise activation analyses, we track activation norms at several computational sites, including the input of Transformer layers $\|h_{t,\mathrm{in}}^\ell\|=\|h_t^\ell\|_2$, the hidden states after attention $\|h_{t,\mathrm{half}}^\ell\|=\|h_t^{\ell+\nicefrac12}\|_2$, and the output of Transformer layers $\|h_{t,\mathrm{out}}^\ell\|=\|h_t^{\ell+1}\|_2$.
We also measure branch outputs $\|r_t^{\mathrm{attn},\ell}\|_2$ and $\|r_t^{\mathrm{mlp},\ell}\|_2$, together with value-state norms $\|v_t^\ell\|_2$, when needed.

\section{Empirical Evidence}
\label{sec:as_to_gs}

This section establishes the empirical basis for viewing massive activations as gradient regulators.
We analyze typical open-source pretrained LLMs~\cite{qwen2.5,qwen3,grattafiori2024llama3herdmodels,glm2024chatglm}, together with \texttt{LlamaForCausalLM} baselines that we pretrain from scratch on the C4~\cite{raffel2020exploring} dataset at scales ranging from 0.1B to 1B parameters. Detailed model and training configurations are deferred to Appendix~\ref{app:config}.

\subsection{From attention sinks to gradient sinks}\label{sec:gs}

To trace how sink structure reappears in backpropagation, we first define gradient-sink behavior.

\begin{definition}[Gradient sink]\label{def:gradient_sink}
Fix a layer $\ell$, a candidate sink position $s$, and an early-token reference set $\{0,\dots,T_e\}$ where $0\le s\le T_e$.
Let $\|\nabla_{\widetilde h_t^\ell}\mathcal L\|_2$ denote the token-wise gradient norm at the post-RMSNorm attention input. The gradient-sink ratio of token $s$ at layer $\ell$, relative to $\{0,\dots,T_e\}$, is defined as
\begin{equation}\label{eq:gs_def}
R_{GS}^{\ell}(s;T_e) :=
\|\nabla_{\widetilde h_s^{\ell}} \mathcal{L}\|_2 /
\big( \nicefrac{1}{T_e} \sum\nolimits_{0 \le t \le T_e, t\ne s} \|\nabla_{\widetilde h_t^{\ell}} \mathcal{L}\|_2 \big).
\end{equation}
For a prescribed threshold $\tau>1$, token $s$ is called a $\tau$-gradient sink at layer $\ell$ if $R_{GS}^{\ell}(s;T_e)\ge \tau$.
\end{definition}

\begin{figure}[t]
	\centering
	\includegraphics[width=0.48\textwidth]{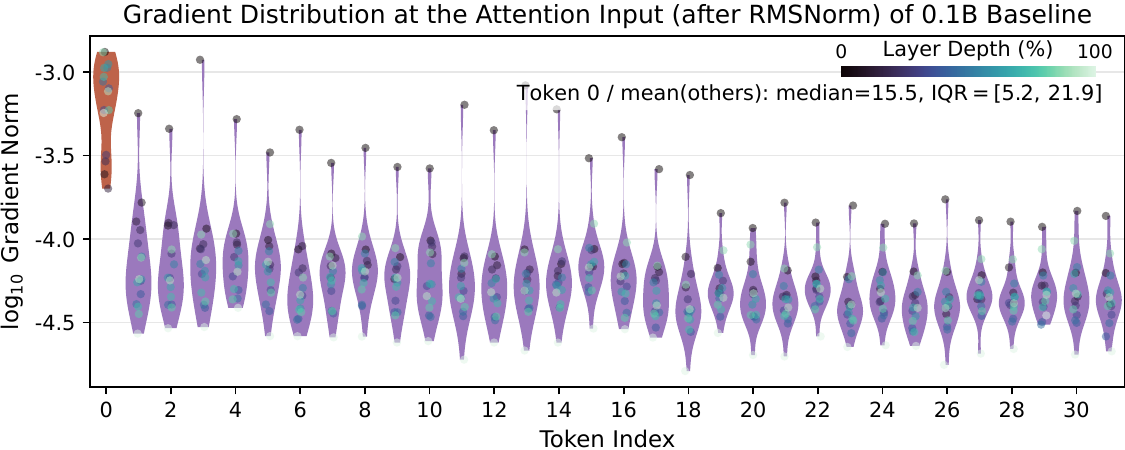}
	\includegraphics[width=0.48\textwidth]{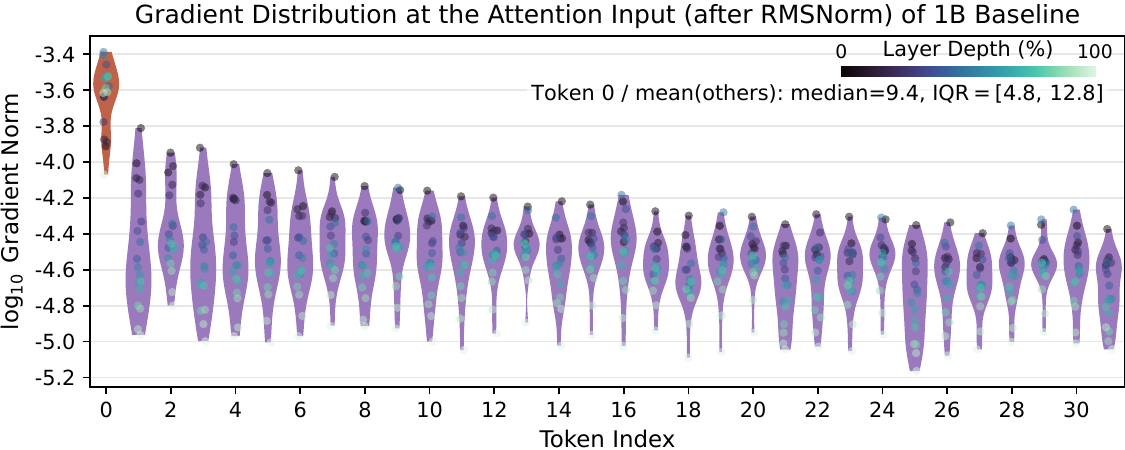}
	\includegraphics[width=0.48\textwidth]{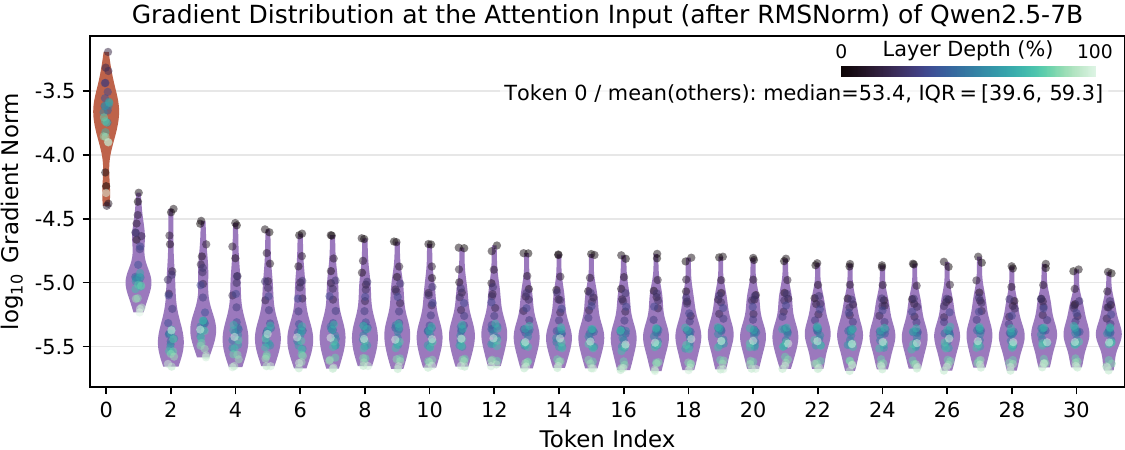}
	\includegraphics[width=0.48\textwidth]{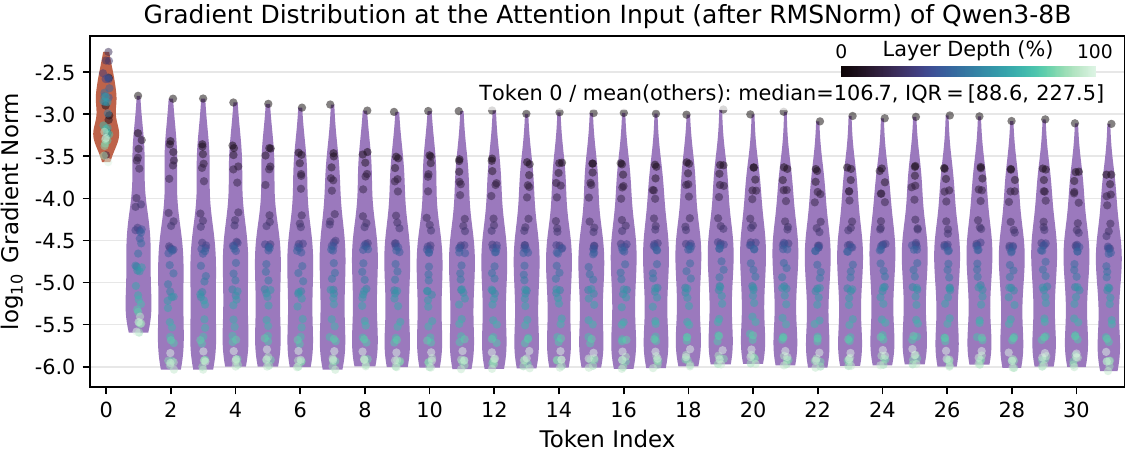}
	\includegraphics[width=0.48\textwidth]{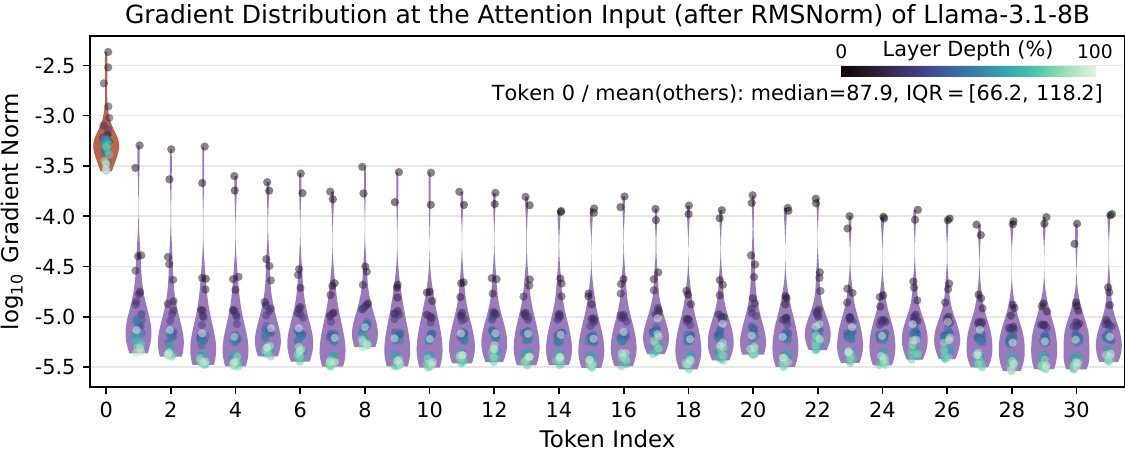}
	\includegraphics[width=0.48\textwidth]{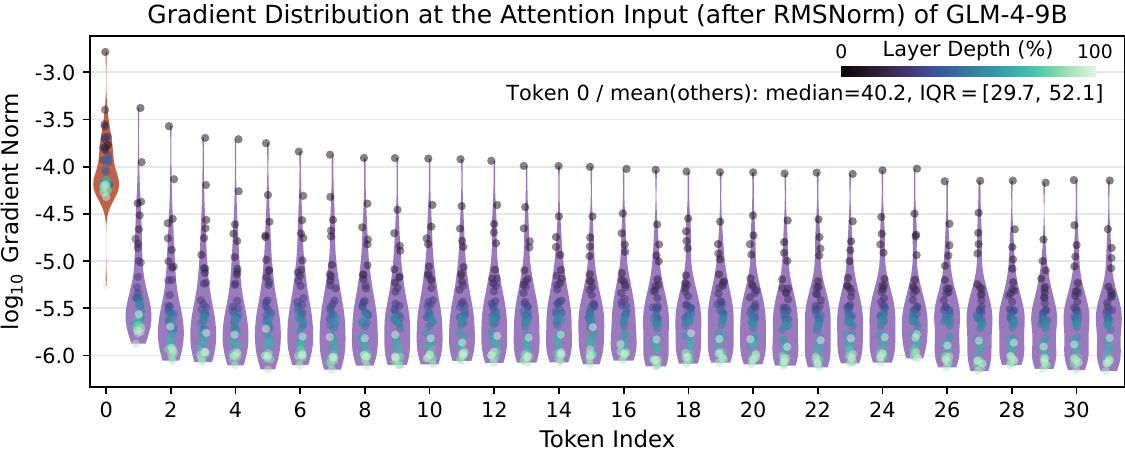}
	
	\caption{
		Gradient sinks across models. Each panel plots the token-wise distribution of $\log_{10}\|\nabla_{\widetilde h_t^{\ell}}\mathcal L\|_2$ at the post-RMSNorm attention input. Token 0 is highlighted. The reported median and interquartile range (IQR) of $R_{GS}^{\ell}(0)$ summarize how much larger the gradient on token 0 is than the mean over positions 1--31. Across both scratch-trained baselines and pretrained models, the first token consistently exhibits the strongest gradient concentration. Models include 0.1B and 1B baselines together with Qwen2.5-7B, Qwen3-8B, Llama-3.1-8B, and GLM-4-9B.
	}\label{fig:gs-definition}
\end{figure}

In this work, we primarily report the continuous ratio $R_{GS}^{\ell}(s;T_e)$ rather than fixing a universal threshold $\tau$, since the meaningful scale of gradient concentration can depend on model size, layer, and measurement protocol. In the experiments below, we take $s=0$ and $T_e=31$.
Gradients are first aggregated over the full batch as in one optimization step and then converted to token-wise $\ell_2$ norms, making the comparison reflect the effective training signal seen by the optimizer.
For scratch-trained baselines, we use the corresponding batch size and context length specified in Appendix~\ref{app:config}; for pretrained models, we use a diagnostic batch of 512 packed sequences with context length 2048. Note that for pretrained models, these gradients should be interpreted as post-hoc diagnostic probes rather than the exact gradients from their official training runs.

Figure~\ref{fig:gs-definition} shows that the sink token consistently exhibits substantially larger gradients than other early tokens, with the reported medians of $R_{GS}^{\ell}(0)$ typically lying in the $10$--$100$ range, i.e., roughly one to two orders of magnitude above the early-token average.
Gradient sinks therefore appear as a common empirical pattern during the training of Transformer language models.

To understand how this concentration is structured inside the attention block, we further decompose the gradients into query, key, and value pathways and track their token-wise norms across training checkpoints. For each token $s$, the incoming gradient splits as $\nabla_{\widetilde h_s^\ell}\mathcal{L} = W_Q^\top \nabla_{q_s^\ell}\mathcal{L} + W_K^\top \nabla_{k_s^\ell}\mathcal{L} + W_V^\top \nabla_{v_s^\ell}\mathcal{L}$, so the observed token-wise concentration can be examined pathway by pathway.

\begin{figure}[tb]
    \centering
    \includegraphics[width=0.3\textwidth]{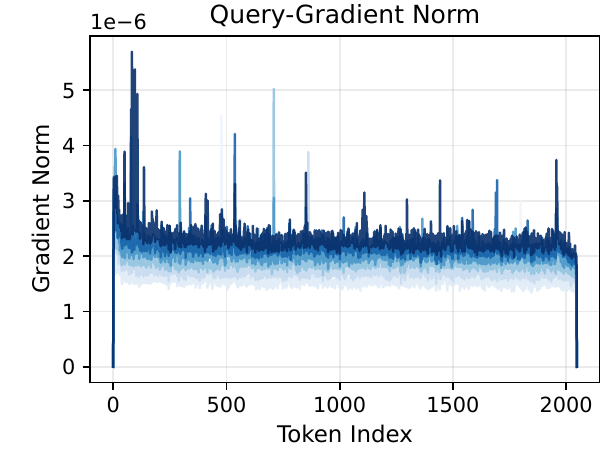}
    \includegraphics[width=0.3\textwidth]{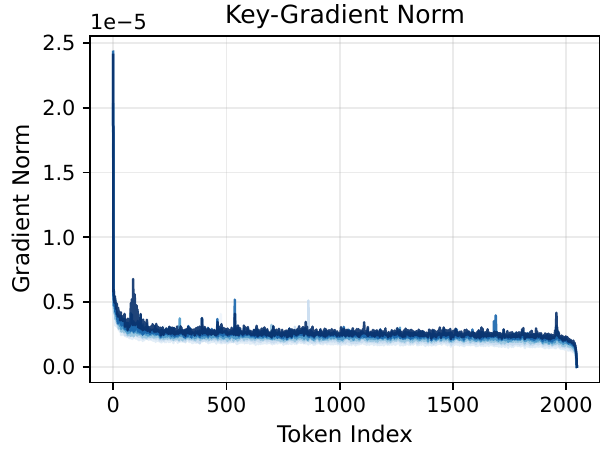}
    \includegraphics[width=0.3\textwidth]{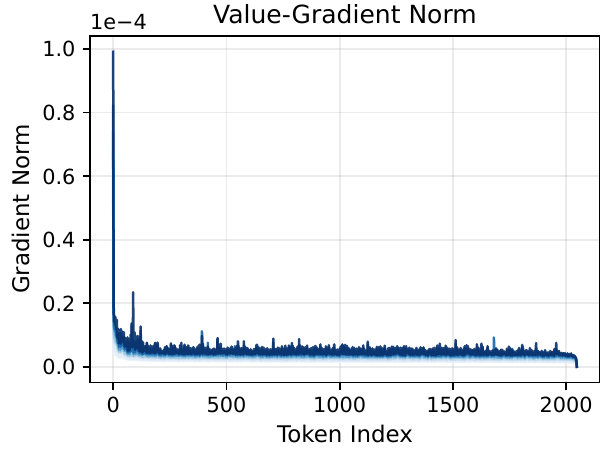}
    \includegraphics[width=0.06\textwidth]{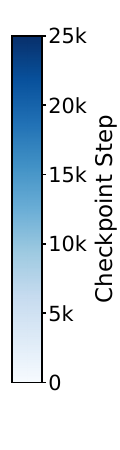}

    \caption{
    Token-wise gradient norms of query, key, and value across training checkpoints for the 1B model trained from scratch. Statistics are averaged over layers. Key and especially value gradients exhibit a pronounced spike at token 0, while query gradients remain comparatively flat.
    }\label{fig:qkv_grads_over_tokens}
\end{figure}

Figure~\ref{fig:qkv_grads_over_tokens} reveals a clear and highly structured asymmetry. On the key and value pathways, the gradient norm exhibits a pronounced spike at the first token across checkpoints, after which it drops rapidly and remains relatively flat over most of the sequence. Moreover, the value-path spike is consistently larger than the key-side one. By contrast, the query pathway looks qualitatively different: its token-wise gradient profile is much flatter, and the first-token query gradient is exactly zero due to causal masking. For each pathway, the statistics are averaged over layers.

These observations sharpen the empirical picture. Gradient sinks have a structured pathway profile, and the excess at the first token is carried mainly by the key and especially value gradients.

\subsection{Massive activations align with gradient reshaping}\label{sec:ma_grad_reshape}

We next ask how the pre-norm blocks transmit the local gradients and how this reshaping relates to activation norms.
In a pre-norm attention block, we have $h_t^{\ell+\nicefrac12} = h_t^\ell + r_t^{\mathrm{attn},\ell}$ and therefore $\nabla_{r_t^{\mathrm{attn},\ell}}\mathcal{L} = \nabla_{h_t^{\ell+\nicefrac12}}\mathcal{L}$. Moreover, the difference $\nabla_{h_t^\ell}\mathcal{L} - \nabla_{h_t^{\ell+\nicefrac12}}\mathcal{L}$ isolates the additional gradient contribution propagated through the attention branch, excluding the identity skip path. We introduce three diagnostic measurements. $\mathrm{Bloat}$ locates branch-local amplification, $\mathrm{Change}$ measures the net residual-stream effect, and $\mathrm{Compress}$ tracks how much of the post-RMSNorm gradient is retained in the residual-stream contribution:
\begin{align}
\mathrm{Bloat}_t^{\mathrm{attn},\ell}
&:= \|\nabla_{\widetilde h_t^\ell}\mathcal{L}\|_2 \,/\, \|\nabla_{r_t^{\mathrm{attn},\ell}}\mathcal{L}\|_2, 
\notag \\
\mathrm{Change}_t^{\mathrm{attn},\ell}
&:= \|\nabla_{h_t^\ell}\mathcal{L}\|_2 \,/\, \|\nabla_{h_t^{\ell+\nicefrac12}}\mathcal{L}\|_2, 
\label{eq:ratios_attn}\\
\mathrm{Compress}_t^{\mathrm{attn},\ell}
&:= \|\nabla_{h_t^\ell}\mathcal{L} - \nabla_{h_t^{\ell+\nicefrac12}}\mathcal{L}\|_2 \,/\, \|\nabla_{\widetilde h_t^\ell}\mathcal{L}\|_2.
\notag
\end{align}
Analogously, we define $\mathrm{Bloat}_t^{\mathrm{mlp},\ell}$, $\mathrm{Change}_t^{\mathrm{mlp},\ell}$, and $\mathrm{Compress}_t^{\mathrm{mlp},\ell}$ for the MLP branch; those measurements are deferred to Appendix~\ref{app:mlp_reshaping}.

\begin{figure}[tb]
    \centering
    \includegraphics[width=0.3\linewidth]{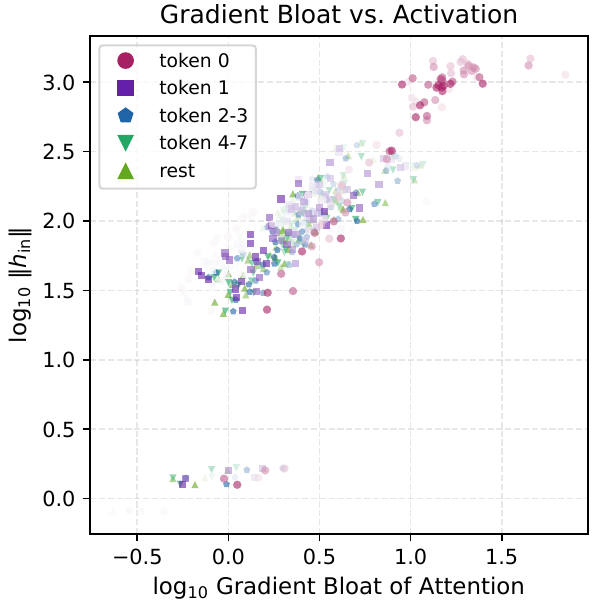}
    \includegraphics[width=0.3\linewidth]{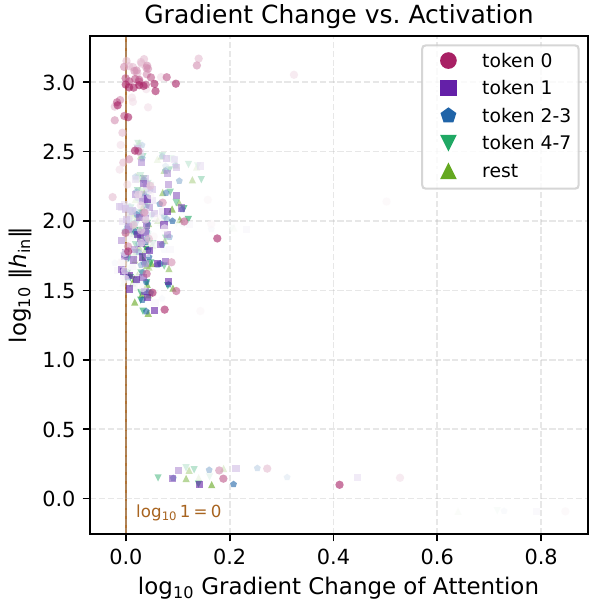}
    \includegraphics[width=0.3\linewidth]{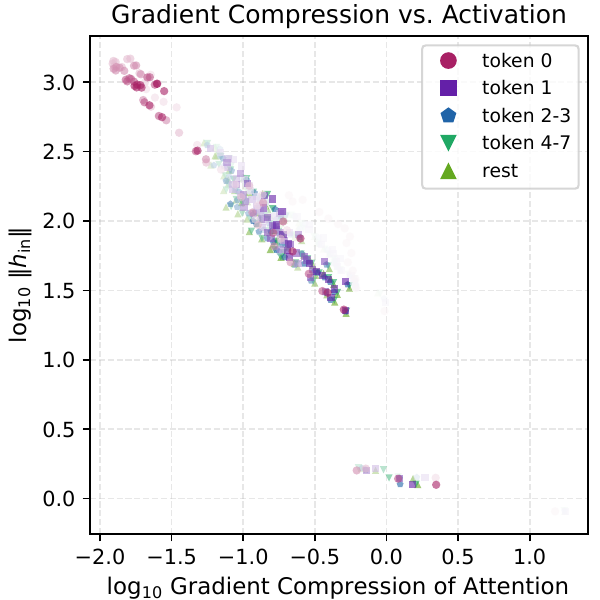}
    \includegraphics[width=0.06\linewidth]{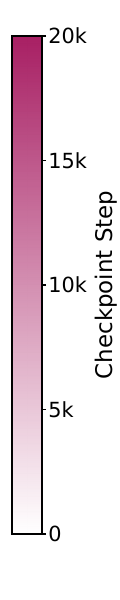}\\
    \includegraphics[width=0.3\linewidth]{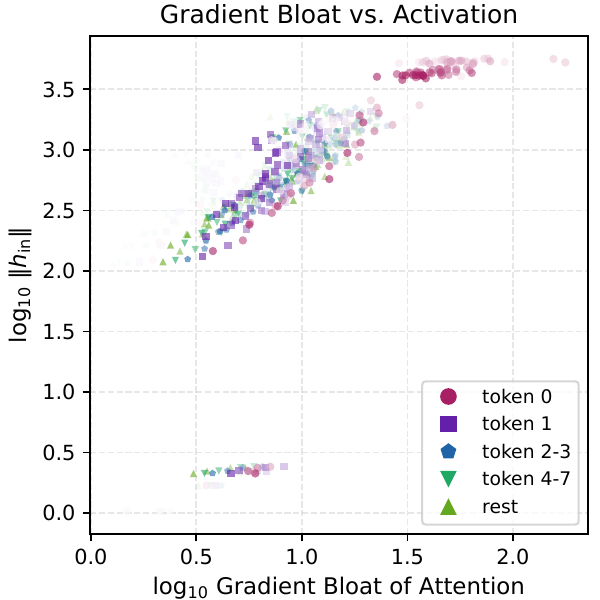}
    \includegraphics[width=0.3\linewidth]{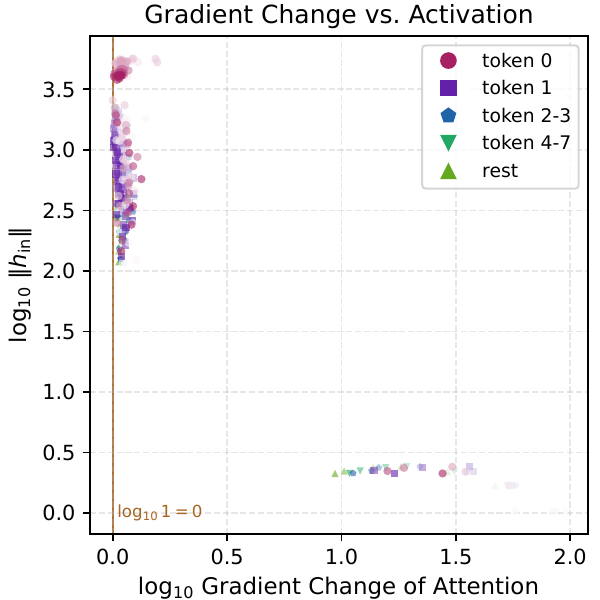}
    \includegraphics[width=0.3\linewidth]{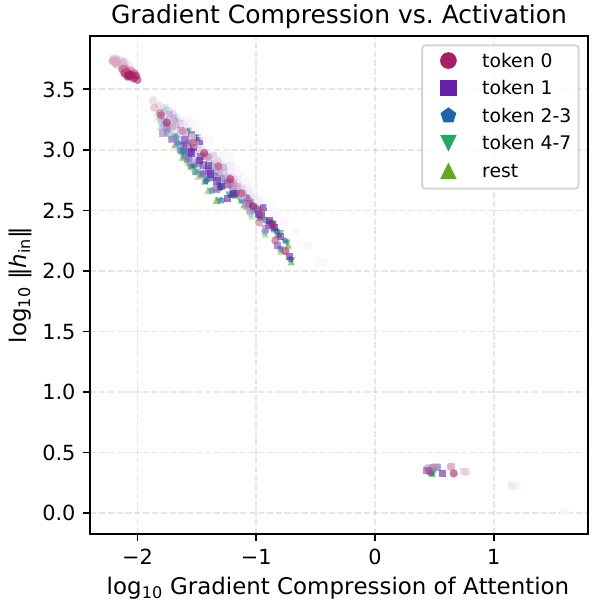}
    \includegraphics[width=0.06\linewidth]{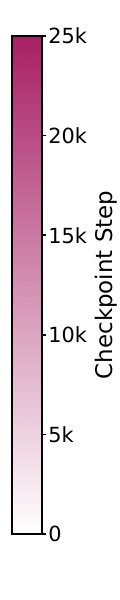}

    \caption{
    Scatter plots relating local gradient reshaping to input activation norms of the attention block. Top row: 0.1B model; bottom row: 1B model. From left to right, the panels show $\mathrm{Bloat}^{\mathrm{attn}}$, $\mathrm{Change}^{\mathrm{attn}}$, and $\mathrm{Compress}^{\mathrm{attn}}$ against $\|h_{\mathrm{in}}\|$ on a $\log_{10}$ scale. Each point is colored by token group (token 0, token 1, tokens 2--3, tokens 4--7, and the rest early tokens 8--15). 
    Large-activation points from token 0 populate the high-bloat regime, yet their residual-stream gradient norms change only mildly for most layers. The $\mathrm{Compress}$ ratio indicates stronger attenuation at massive activation sites.
    }\label{fig:attn_scatter_main}
\end{figure}

We compare each gradient-reshaping ratio with the corresponding input activation norm, where activation norms are averaged over evaluation batches. Figure~\ref{fig:attn_scatter_main} shows a highly structured pattern.
The left panels show that large-activation points, almost all from token 0, are precisely where branch-local amplification becomes extreme, while the remaining tokens stay in a much lower-amplification regime. Such large values of $\mathrm{Bloat}^{\mathrm{attn}}$ naturally raise the concern that local gradient amplification may distort residual-stream transport during backpropagation and thereby harm training stability~\cite{he2016deep,xiong2020layernorm,liu2020understanding,wang2024deepnet}.
The middle panels show that this is largely not the case. Even when $\mathrm{Bloat}^{\mathrm{attn}}$ becomes large, the corresponding $\mathrm{Change}^{\mathrm{attn}}$ values remain tightly concentrated near $1$ and therefore near $\log_{10}1 = 0$ on the plotted scale, aside from a small number of points outside the massive-activation regime.
Thus, the residual-stream gradient norm usually does not change dramatically despite strong local amplification inside the attention branch. The right panels suggest why this remains possible. Larger activation norms are consistently associated with smaller $\mathrm{Compress}^{\mathrm{attn}}$, revealing a strong negative relationship between activation scale and local gradient attenuation.
That is, the most amplified gradients are also the most strongly compressed before being passed back into the residual stream, avoiding uncontrolled gradient growth.

Taken together, these measurements give a backward-pass view of massive activations on the first token. They suggest that these large norms are not merely forward numerical outliers, but mark sites where gradient concentrations are locally reshaped.

\section{Theoretical Mechanism}\label{sec:theory}

This section explains the mechanism behind the empirical patterns above, making it clear how attention sinks, gradient sinks, and massive activations are connected in pre-norm Transformers. The key ingredients are attention routing and the scale sensitivity of RMSNorm. We present the main identities and bounds here, with the full theory and proofs deferred to Appendix~\ref{app:theory}.

\subsection{Attention weights route gradients backward}

\begin{wrapfigure}[9]{r}{0.3\linewidth}
	\centering
    \vspace{-15pt}
    \includegraphics[width=1\linewidth]{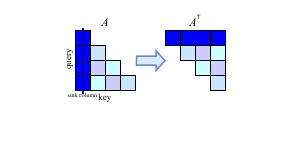}\caption{Attention weights as two-way routers.}
    \label{fig:as_motivation}
\end{wrapfigure}

We first formalize the basic intuition given in Figure~\ref{fig:as_motivation} that attention weights route not only values forward but also value-path gradients backward. Fix one layer and one attention head, and suppress their superscripts for simplicity. Let $A=(a_{tj})\in\mathbb{R}^{T\times T}$ be the causal attention matrix, with $a_{tj}=0$ for $j>t$. Stack the value states and head outputs row-wise as $V,Y\in\mathbb{R}^{T\times d_{\mathrm{head}}}$, so that $Y=AV$. Let $G:=\nabla_Y\mathcal{L}$ be the upstream gradient at $Y$, and write its $t$-th row as $g_t:=\nabla_{y_t}\mathcal{L}$. We now record the exact value-path identity induced by attention aggregation.

\begin{proposition}[Exact value-path aggregation]\label{prop:v_agg_main}
For one attention head, the value-path backward signal satisfies $\nabla_V \mathcal{L} = A^\top G$, or equivalently $\nabla_{v_s}\mathcal{L} = \sum\nolimits_{t=s}^{T-1} a_{ts} g_t$ for any token index $s$.
\end{proposition}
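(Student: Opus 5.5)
The plan is a direct chain-rule computation on the linear map $Y=AV$, holding the attention matrix $A$ fixed. This is the key point to justify. In the paper's setup, the value states enter the forward computation only through the aggregation $y_t=\sum_{j\le t}a_{tj}v_j$. The logits $z_{tj}$ depend on $q_t$ and $k_j$ but not on any $v_j$. The loss therefore depends on $V$ only through $Y$, and the partial gradient with respect to $V$ (the value-path signal) is obtained by differentiating through $Y=AV$ alone. The query and key pathways appear as the separate summands $W_Q^\top\nabla_{q_s}\mathcal L$ and $W_K^\top\nabla_{k_s}\mathcal L$ in the decomposition of $\nabla_{\widetilde h_s}\mathcal L$, so they do not contaminate $\nabla_{v_s}\mathcal L$.

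Next I will write the entrywise dependence $Y_{tc}=\sum_{j}a_{tj}V_{jc}$ and apply the chain rule:
\[
\frac{\partial\mathcal L}{\partial V_{sc}}=\sum_{t,c'}\frac{\partial\mathcal L}{\partial Y_{tc'}}\frac{\partial Y_{tc'}}{\partial V_{sc}}=\sum_{t}G_{tc}\,a_{ts}=(A^\top G)_{sc}.
\]
This uses $\partial Y_{tc'}/\partial V_{sc}=a_{ts}\,\mathbb I(c=c')$. In matrix form this gives $\nabla_V\mathcal L=A^\top G$. Reading off row $s$ gives $\nabla_{v_s}\mathcal L=\sum_{t=0}^{T-1}a_{ts}g_t$.

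Finally, I will use causal masking: $a_{ts}=0$ whenever $s>t$, equivalently $m_{ts}=-\infty$ forces a zero softmax weight. This truncates the sum to $t\ge s$, which yields the stated form $\sum_{t=s}^{T-1}a_{ts}g_t$.

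The only step needing care is the first one, namely being explicit that $\nabla_V$ is taken with $A$ held fixed, because $V$ does not influence $A$. Everything after that is routine index bookkeeping.
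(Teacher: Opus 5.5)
Your proposal is correct and follows essentially the same route as the paper. The paper uses the row-block Jacobian $\partial y_t/\partial v_s = a_{ts} I\,\mathbb{I}_{s\le t}$ and applies the chain rule $v_s \to y_t \to \mathcal{L}$; this is your entrywise computation written in blocks, with causal masking restricting the sum to $t \ge s$, and your explicit note that $A$ does not depend on $V$ is a justification the paper leaves implicit.
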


The identity shows the basic structural routing mechanism. A forward attention sink means that many later rows place substantial mass on one early column $s$ of $A$. On the value path, backpropagation uses the corresponding row $s$ of $A^\top$ and therefore aggregates the gradients of later tokens into the same early token. In other words, attention weights are two-way routers.

The full key and query identities are given in Appendix~\ref{app:theory:exact}, and they clarify the empirical asymmetry observed in Section~\ref{sec:as_to_gs}. Key gradients still depend on sink columns, while query gradients are row-local. In particular, under causal masking, $\nabla_{q_0}\mathcal{L}=0$ exactly.

\subsection{Sink statistics control localized gradient pressure}

We next quantify how much gradient pressure a sink token receives by relating value-path gradients to the attention-column mass $M_s=\sum_{t=s}^{T-1}a_{ts}$ and second moment $S_s=\sum_{t=s}^{T-1}a_{ts}^2$ defined in~\eqref{eq:sink_mass}.

\begin{theorem}[Value-path gradient control by sink statistics]\label{thm:v_second_moment_main}
Suppose that the upstream gradients on positions $t\ge s$ admit the decomposition $g_t=\mu+\varepsilon_t$, where $\mu\in\mathbb{R}^{d_{\mathrm{head}}}$ is a coherent component. The noise terms satisfy $\mathbb{E}[\varepsilon_t]=0$, $\mathrm{Tr}(\mathrm{Cov}(\varepsilon_t))\le\sigma^2$ for all $t$, and $|\mathrm{Tr}(\mathrm{Cov}(\varepsilon_t,\varepsilon_{t'}))|\le\rho$ for all $t\ne t'$.
Then,
$M_s^2\|\mu\|_2^2
\le
\mathbb{E}\big[\|\nabla_{v_s}\mathcal{L}\|_2^2\big]
\le
M_s^2\|\mu\|_2^2 + \sigma^2 S_s + \rho(M_s^2 - S_s)$.
\end{theorem}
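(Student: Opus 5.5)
Starting from Proposition~\ref{prop:v_agg_main}, we have $\nabla_{v_s}\mathcal{L}=\sum_{t=s}^{T-1}a_{ts}g_t$. We treat the attention weights $a_{ts}$ as fixed (deterministic conditional on the forward pass), so the only randomness is in the noise terms $\varepsilon_t$. Substituting $g_t=\mu+\varepsilon_t$ gives
\[
\nabla_{v_s}\mathcal{L}=M_s\,\mu+\xi_s,\qquad \xi_s:=\sum\nolimits_{t=s}^{T-1}a_{ts}\varepsilon_t .
\]
This is a bias--variance split. Since $\mathbb{E}[\varepsilon_t]=0$, we get $\mathbb{E}[\xi_s]=0$. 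Expanding the squared norm, the cross term $2M_s\langle\mu,\mathbb{E}\xi_s\rangle$ vanishes, leaving
\[
\mathbb{E}\|\nabla_{v_s}\mathcal{L}\|_2^2=M_s^2\|\mu\|_2^2+\mathbb{E}\|\xi_s\|_2^2 .
\]
The lower bound follows at once because $\mathbb{E}\|\xi_s\|_2^2\ge 0$.

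For the upper bound we expand $\mathbb{E}\|\xi_s\|_2^2$ as a double sum. Using $\mathbb{E}\langle\varepsilon_t,\varepsilon_{t'}\rangle=\mathrm{Tr}(\mathrm{Cov}(\varepsilon_t,\varepsilon_{t'}))$, which holds because the means are zero, we obtain
\[
\mathbb{E}\|\xi_s\|_2^2=\sum\nolimits_{t}a_{ts}^2\,\mathrm{Tr}(\mathrm{Cov}(\varepsilon_t))+\sum\nolimits_{t\ne t'}a_{ts}a_{t's}\,\mathrm{Tr}(\mathrm{Cov}(\varepsilon_t,\varepsilon_{t'})).
\]
We then bound the two sums separately.
\begin{itemize}
\item \textbf{Diagonal sum.} Each trace is at most $\sigma^2$, so this sum is at most $\sigma^2\sum_t a_{ts}^2=\sigma^2 S_s$.
\item \textbf{Off-diagonal sum.} Since $a_{ts}\ge0$, each term is bounded in absolute value by $a_{ts}a_{t's}\rho$. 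Summing gives at most $\rho\sum_{t\ne t'}a_{ts}a_{t's}=\rho\big(M_s^2-S_s\big)$, using $(\sum_t a_{ts})^2=\sum_t a_{ts}^2+\sum_{t\ne t'}a_{ts}a_{t's}$.
\end{itemize}
Combining these two bounds with the bias term gives the stated upper bound.

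The main point needing care is the modeling assumption that $A$ is independent of the noise, i.e.\ that the expectation is taken conditional on the attention pattern with $a_{ts}$ non-random. This is what lets us pull the $a_{ts}$ outside the expectation. We will state it explicitly, interpreting the expectation as over the upstream gradient fluctuations for a fixed forward pass. The other ingredient, which we will also note, is nonnegativity of the softmax weights; it is needed to convert $|\mathrm{Tr}\,\mathrm{Cov}|\le\rho$ into a bound weighted by $a_{ts}a_{t's}$.
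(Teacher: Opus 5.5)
Your proposal is correct and follows essentially the same route as the paper. Both substitute $g_t=\mu+\varepsilon_t$ into the value-path identity, drop the cross term via $\mathbb{E}[\varepsilon_t]=0$, and split the noise term into a diagonal part ($\le\sigma^2 S_s$) and an off-diagonal part ($\le\rho(M_s^2-S_s)$); your added remarks (fixed, nonnegative $a_{ts}$, and the lower bound from $\mathbb{E}\|\xi_s\|_2^2\ge 0$) make explicit steps the paper leaves implicit.
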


Theorem~\ref{thm:v_second_moment_main} shows that sink structure provides a natural and quantitatively explicit route to value-path gradient concentration. The term $M_s^2\|\mu\|_2^2$ is the coherent accumulation along the sink column, while $S_s$ and $\rho(M_s^2-S_s)$ bound the accumulated variance and cross-token covariance. Thus a token with large attention-column mass receives a larger value-path training signal.

The dominant value-path spike seen in Section~\ref{sec:as_to_gs} is thus not accidental. The value path is the pathway where sink statistics most transparently govern gradient aggregation. Appendix~\ref{app:theory:kq} further provides supporting results for the key and query pathways. The key bound has a similar dependence on $M_s$, whereas the query bound depends on within-row key dispersion.

\subsection{RMSNorm turns activation scale into gradient regulation}

It remains to connect this localized pressure to massive activations. In a pre-norm block, the relevant map is the RMSNorm applied before the attention or MLP branch.

\begin{theorem}[Activation-dependent compression under RMSNorm]\label{thm:rms_compress_main}
Denote $y=\mathrm{RMSNorm}(x) = \gamma \odot \frac{x}{\mathrm{rms}(x)}$ with $\mathrm{rms}(x)=\sqrt{\frac{1}{d}\|x\|_2^2+\epsilon_{\mathrm{rms}}}$, where $x\in \mathbb{R}^d$ is the input and $\gamma\in \mathbb{R}^d$ is the scale parameter. Let $g_y=\nabla_y\mathcal{L}$ be the upstream gradient at the RMSNorm output.
Then we have $\nabla_x\mathcal{L}=J_{\mathrm{rms}}(x)^\top g_y$, and the Jacobian satisfies $\|J_{\mathrm{rms}}(x)\|_{\mathrm{op}} \le \frac{\|\gamma\|_\infty}{\mathrm{rms}(x)}$.
\end{theorem}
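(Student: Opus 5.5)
We compute the Jacobian of the RMSNorm map explicitly and then bound its operator norm. Write $r=\mathrm{rms}(x)$ and $\Gamma=\mathrm{diag}(\gamma)$, so $y=\Gamma x/r$. The identity $\nabla_x\mathcal L=J_{\mathrm{rms}}(x)^\top g_y$ is just the chain rule, so all the work is in the Jacobian. Differentiating $r=\sqrt{\tfrac1d\|x\|_2^2+\epsilon_{\mathrm{rms}}}$ gives $\nabla_x r = x/(d\,r)$. The quotient rule then gives
\[
J_{\mathrm{rms}}(x)=\frac{1}{r}\,\Gamma\Big(I-\frac{x x^\top}{d\,r^2}\Big)=\frac{1}{r}\,\Gamma P,
\qquad P:=I-\frac{xx^\top}{\|x\|_2^2+d\,\epsilon_{\mathrm{rms}}}.
\]

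Next we bound the two factors separately using submultiplicativity, $\|J_{\mathrm{rms}}(x)\|_{\mathrm{op}}\le r^{-1}\|\Gamma\|_{\mathrm{op}}\|P\|_{\mathrm{op}}$.

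The factor $\Gamma$ is diagonal, so $\|\Gamma\|_{\mathrm{op}}=\max_i|\gamma_i|=\|\gamma\|_\infty$.

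The factor $P$ is symmetric, so its operator norm is its largest absolute eigenvalue. It acts as the identity on $x^\perp$, giving eigenvalue $1$. On $\mathrm{span}(x)$ it has eigenvalue $1-\|x\|_2^2/(\|x\|_2^2+d\epsilon_{\mathrm{rms}})=d\epsilon_{\mathrm{rms}}/(\|x\|_2^2+d\epsilon_{\mathrm{rms}})\in(0,1]$. Hence $\|P\|_{\mathrm{op}}\le1$, with equality whenever $d\ge2$. In the degenerate case $x=0$, $P=I$ and the bound still holds.

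Combining the two bounds yields $\|J_{\mathrm{rms}}(x)\|_{\mathrm{op}}\le\|\gamma\|_\infty/\mathrm{rms}(x)$. Since the transpose has the same operator norm, this immediately gives $\|\nabla_x\mathcal L\|_2\le \|\gamma\|_\infty\|g_y\|_2/\mathrm{rms}(x)$, which is the attenuation statement used in the discussion.

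The only step needing care is obtaining the Jacobian in the clean form $\frac1r\Gamma P$. Two details matter there: keeping track of the $\epsilon_{\mathrm{rms}}$ term so that $P$ is a contraction rather than an exact projection, and placing $\Gamma$ on the left of $P$ rather than the right. Once that form is established, the rest is elementary spectral reasoning.
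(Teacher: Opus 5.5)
Your proposal is correct and follows essentially the same route as the paper: factor the Jacobian as $\frac{1}{r}\,\mathrm{diag}(\gamma)\bigl(I - \frac{xx^\top}{d r^2}\bigr)$, note that the symmetric bracketed factor has eigenvalues $1$ and $\epsilon_{\mathrm{rms}}/r^2$ (your $d\epsilon_{\mathrm{rms}}/(\|x\|_2^2+d\epsilon_{\mathrm{rms}})$ is the same quantity), and bound the diagonal factor by $\|\gamma\|_\infty$. Combining these by submultiplicativity gives $\|J_{\mathrm{rms}}(x)\|_{\mathrm{op}} \le \|\gamma\|_\infty/\mathrm{rms}(x)$, exactly as in the paper's argument.
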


Theorem~\ref{thm:rms_compress_main} formalizes that a larger token-wise activation norm directly lowers the worst-case gain from upstream gradients to earlier layers. Consequently, for any target gradient scale $\tau>0$, the condition $\mathrm{rms}(x)\ge \frac{\|\gamma\|_\infty}{\tau}\|g_y\|_2$ is sufficient to ensure $\|\nabla_x\mathcal{L}\|_2\le\tau$. This is the mathematical sense in which large activations can act as gradient regulators rather than merely as pathological outliers.

Combining this with Proposition~\ref{prop:v_agg_main} and Theorem~\ref{thm:v_second_moment_main}, attention sinks create localized value-path pressure, and RMSNorm makes activation scale an available local attenuation factor. Massive activations are therefore well positioned to emerge in response to sink-induced pressure, helping preserve stable residual-stream gradient transport by absorbing the pressure created by gradient sinks.

\section{Practical Application}
\label{sec:vscale}

The previous sections suggest a concrete and testable prediction of our mechanism. If massive activations are primarily responses to localized gradient pressure, then reducing that pressure should weaken MA even when attention sinks are largely preserved. Such an intervention is also practically relevant because activation outliers are a major source of difficulty for quantized inference~\cite{xiao2023smoothquant,bondarenko2023quantizable}.

The value path is the natural target channel for three reasons.
First, modifying only value states does not affect the attention weights that depend only on $Q$ and $K$.
Second, our observations in Section~\ref{sec:as_to_gs} confirm that value-path gradient norms of the sink token are several times larger than key-path ones. 
Third, prior studies have repeatedly reported that sink tokens tend to have unusually small value norms~\cite{guo2025activedormant,su2025kvsink,bu2025valuestategated}.
The same pattern is observed and provides a useful structural prior: a radial transform depending on $\|v\|_2$ can selectively target sink tokens.

\subsection{V-scale: a value-path gradient valve}

We now introduce the intervention used to test the mechanism.
For each attention head, after the standard value projection and before value aggregation, we replace $y_t = \sum\nolimits_{j\le t} a_{tj} v_j$ by
\[
\hat v_j = \phi(\|v_j\|_2^2)\, v_j,
\quad
\phi(r)=\frac{r}{r+C},
\quad
\hat y_t = \sum\nolimits_{j\le t} a_{tj} \hat v_j,
\]
where $C>0$ is a scale parameter. We call this modification \emph{V-scale}, as illustrated in Figure~\ref{fig:architecture}.

\begin{wrapfigure}[12]{r}{0.36\linewidth}
	\centering
	\vspace{-10pt}
	\includegraphics[width=1\linewidth]{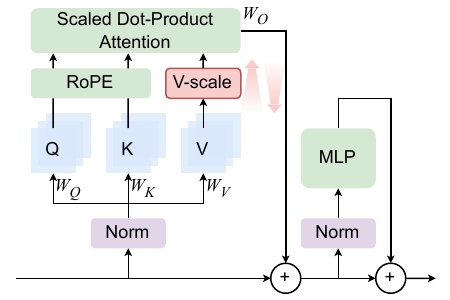}\caption{Schematic of V-scale inside a pre-norm Transformer block.}
	\label{fig:architecture}
\end{wrapfigure}

In practice we use a reparameterization $C_{\ell,h} = d_{\mathrm{model}} \cdot d_{\mathrm{head}} \cdot \sigma^2\lambda^{\ell,h}$, where $\sigma=0.02$ is the initialization standard deviation used for the value projection in our \texttt{LlamaForCausalLM} baselines. This scale matches the typical value norm at initialization. After RMSNorm, $\|\widetilde h\|_2^2$ is on the order of $d_{\mathrm{model}}$, so a value projection with entrywise variance $\sigma^2$ gives $\mathbb{E}\|v\|_2^2$ on the order of $d_{\mathrm{model}} \cdot d_{\mathrm{head}} \cdot \sigma^2$. We learn the positive factor as $\lambda^{\ell,h}=\exp(\theta_{\ell,h})$ with $\theta_{\ell,h}$ initialized to $0$. This introduces only one parameter per layer and per head, which is negligible relative to the total parameter count. The backward behavior of V-scale is mathematically explicit:

\begin{proposition}[Jacobian spectrum of V-scale]\label{prop:vscale_valve_main}
Let $r=\|v\|_2^2$ and $\hat v=\phi(r)v$ with $\phi(r)=\frac{r}{r+C}$.
Then the Jacobian of the V-scale map $v\mapsto \hat v$ is $J_{\phi}(v) = \phi(r) I + \frac{2C}{(r+C)^2} vv^\top$ with eigenvalue $\lambda_\perp(r)=\frac{r}{r+C}$ on the $(d_{\mathrm{head}}-1)$-dimensional subspace orthogonal to $v$ and $\lambda_\parallel(r) = \frac{r^2+3Cr}{(r+C)^2}$ along the radial direction $v$.
\end{proposition}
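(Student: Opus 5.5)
The plan is a direct differentiation of $\hat v=\phi(r)v$ by the product rule, followed by reading off the spectrum of the resulting rank-one perturbation of a scalar matrix. First, since $r=\|v\|_2^2$, we have $\nabla_v r = 2v$. Hence
\[
J_\phi(v)=\frac{\partial \hat v}{\partial v}=\phi(r)I + v\,(\nabla_v \phi(r))^\top=\phi(r)I+2\phi'(r)\,vv^\top .
\]
Next we compute $\phi'(r)=\frac{(r+C)-r}{(r+C)^2}=\frac{C}{(r+C)^2}$, which yields the stated form $J_\phi(v)=\phi(r)I+\frac{2C}{(r+C)^2}vv^\top$. This matrix is symmetric, so it is diagonalizable in an orthonormal basis.

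For the spectrum, the case $v\neq 0$ is handled by decomposing $\mathbb{R}^{d_{\mathrm{head}}}=\mathrm{span}(v)\oplus v^\perp$.
\begin{itemize}
\item For any $u\perp v$, we have $vv^\top u=0$. Hence $J_\phi u=\phi(r)u$, giving eigenvalue $\lambda_\perp=\frac{r}{r+C}$ with multiplicity $d_{\mathrm{head}}-1$.
\item For $u=v$, we use $vv^\top v=rv$ to get
\[
J_\phi v=\Big(\frac{r}{r+C}+\frac{2Cr}{(r+C)^2}\Big)v .
\]
\end{itemize}
Putting the radial eigenvalue over the common denominator gives $\frac{r(r+C)+2Cr}{(r+C)^2}=\frac{r^2+3Cr}{(r+C)^2}=\lambda_\parallel$. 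At $v=0$ the radial direction is undefined and $J_\phi=0$. This is consistent with both formulas evaluating to $0$ at $r=0$, and we would note the degenerate case in a remark.

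No step is a real obstacle; the only point needing care is the algebra for $\lambda_\parallel$. As a sanity check, we would also verify $\lambda_\parallel=\frac{d}{d\|v\|}\big(\phi(\|v\|^2)\|v\|\big)$, which follows from the radial nature of the map.
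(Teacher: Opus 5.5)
Your proposal is correct and follows the paper's proof essentially line by line: you differentiate by the product rule using $\phi'(r)=C/(r+C)^2$ and $\nabla_v r=2v$, then read off the eigenvalue $\phi(r)$ on $v^\perp$ and the eigenvalue $\phi(r)+2Cr/(r+C)^2$ along $v$. Your treatment of the degenerate case $v=0$ and the radial-derivative check $\frac{d}{d\|v\|}\bigl(\phi(\|v\|^2)\|v\|\bigr)=\lambda_\parallel$ are small additions the paper omits.
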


Full derivations are given in Appendix~\ref{app:theory:vscale}.
Proposition~\ref{prop:vscale_valve_main} makes the backward effect of V-scale explicit. If $\|v\|_2^2\ll C$ for the sink token, the value-path backward signal is strongly attenuated. Conversely, both eigenvalues are close to $1$ when $\|v\|_2^2\gg C$, so V-scale nearly preserves gradient transmission on large-norm value states.
On the forward side, because small-norm value states already contribute little to the forward value aggregation, this intervention is strongest where its forward perturbation is naturally limited. Thus V-scale is a targeted valve rather than a uniform shrinkage.

Several recent architectural variants mitigate AS or MA by changing the softmax function, inserting gates, or modifying normalization structure~\cite{zuhri2025softpick,qiu2025gated,bu2025valuestategated,sun2026spike}.
Unlike these broader architectural changes, V-scale is intended as a mechanism test that adds a transparent value-path gradient valve.

\subsection{Experiments}

We train V-scale models using the same data, optimizer, and model configurations as the corresponding baselines. We first verify the mechanism using token-wise gradient and activation measurements, and then evaluate models on downstream tasks.

\begin{figure}[tb]
    \centering
    \includegraphics[width=0.48\textwidth]{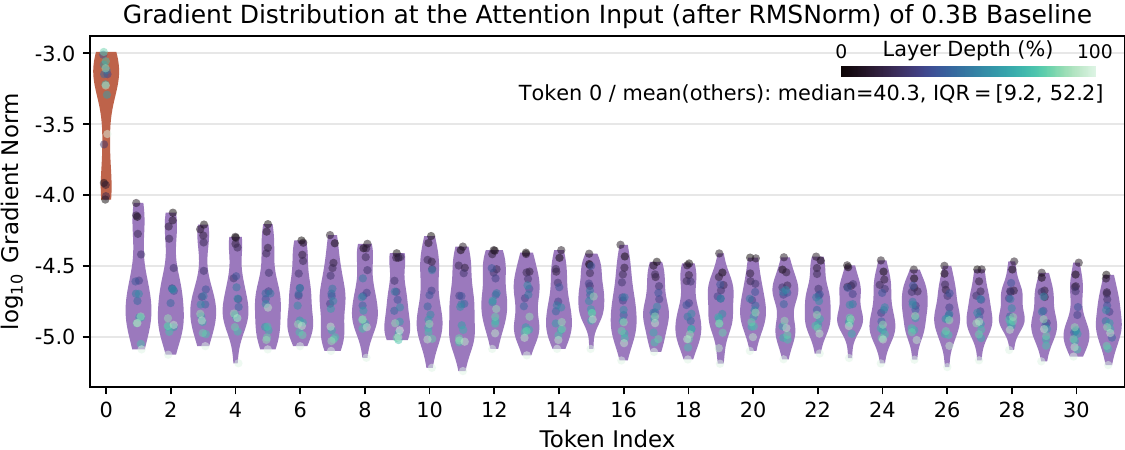}
    \includegraphics[width=0.48\textwidth]{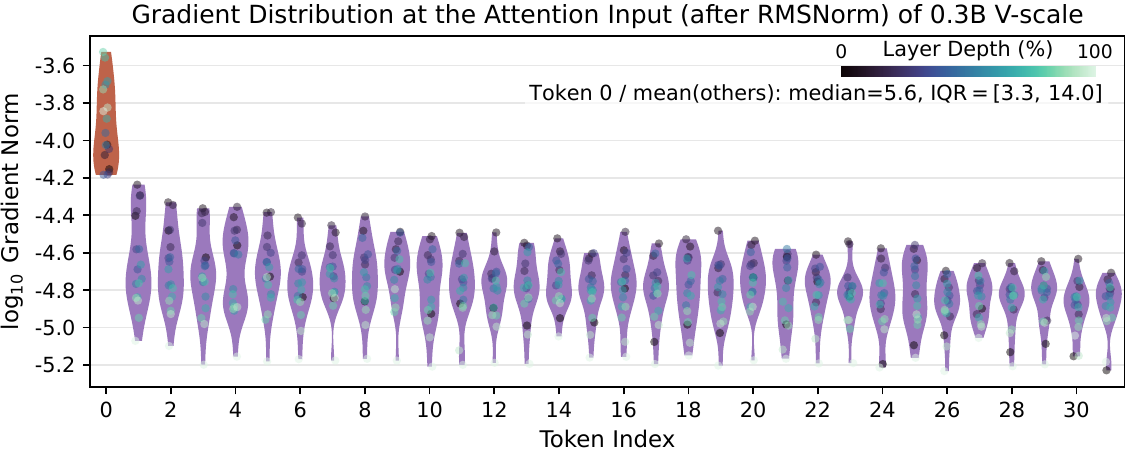}

    \caption{
    Gradient sinks in the 0.3B baseline and V-scale models. Each panel plots the token-wise distribution of $\log_{10}\|\nabla_{\widetilde h_t^{\ell}}\mathcal L\|_2$ at the post-RMSNorm attention input. Token 0 is highlighted. V-scale still leaves a visible gradient sink but reduces the concentration relative to the baseline.
    }\label{fig:vscale-gs}
\end{figure}

\begin{figure}[tb]
    \centering
    \includegraphics[width=0.24\linewidth]{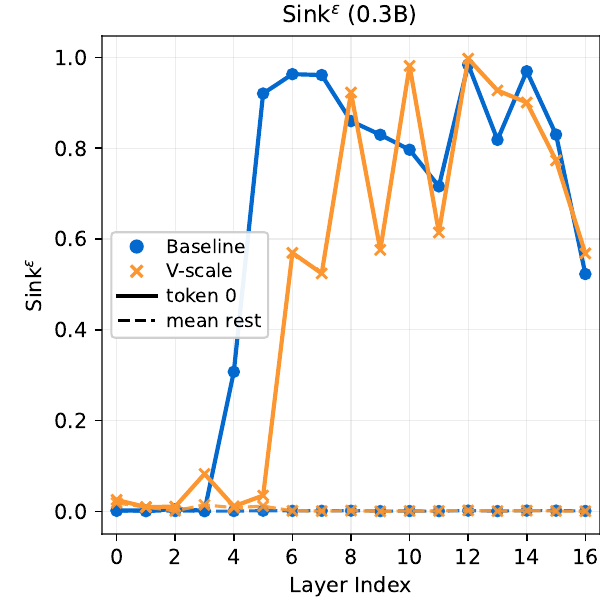}
    \includegraphics[width=0.24\linewidth]{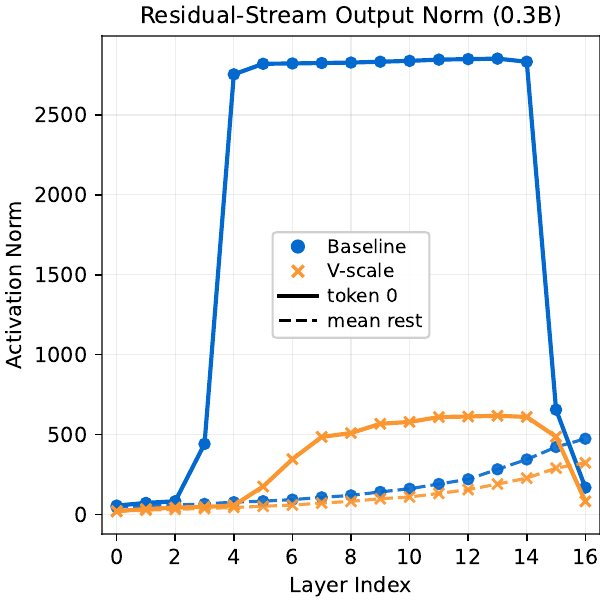}
    \includegraphics[width=0.24\linewidth]{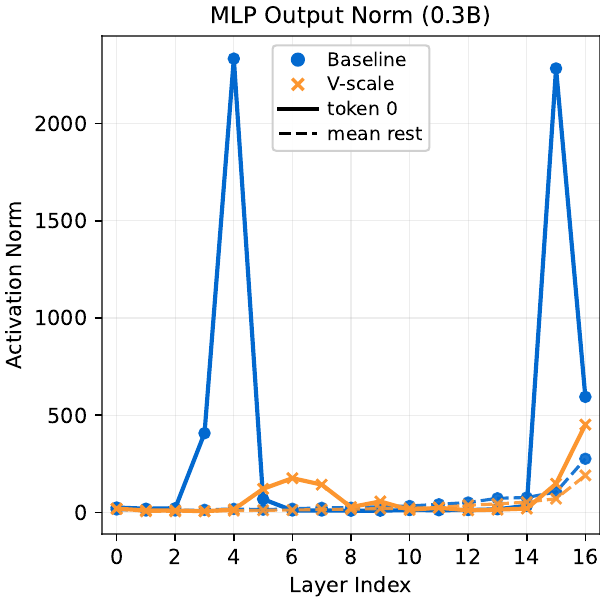}
    \includegraphics[width=0.24\linewidth]{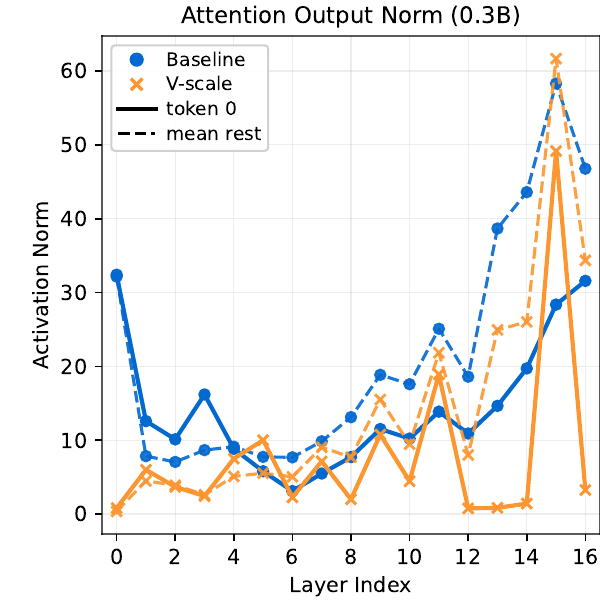}
    \\
    \includegraphics[width=0.24\linewidth]{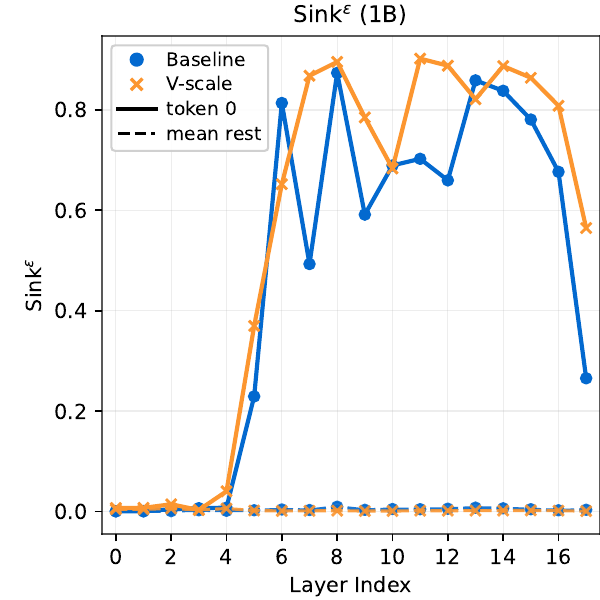}
    \includegraphics[width=0.24\linewidth]{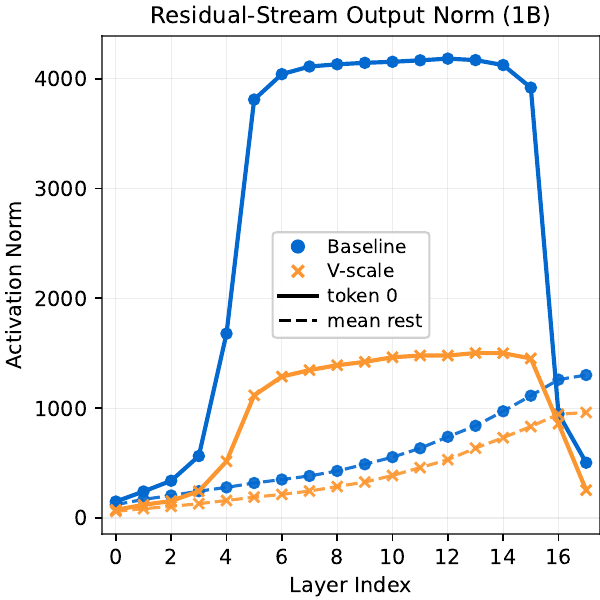}
    \includegraphics[width=0.24\linewidth]{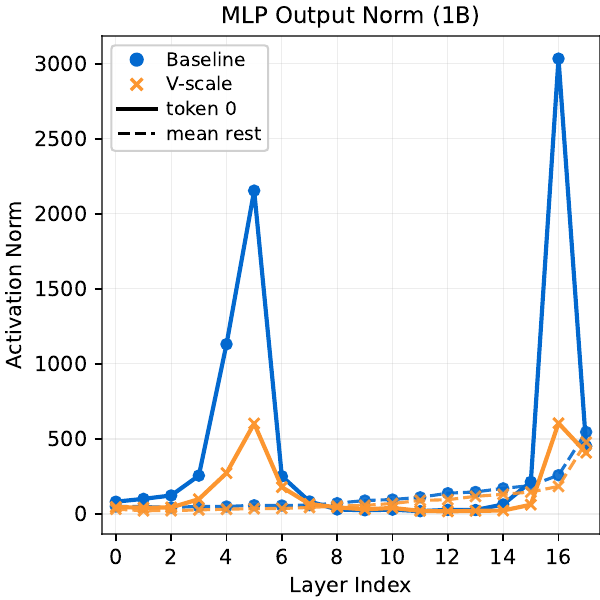}
    \includegraphics[width=0.24\linewidth]{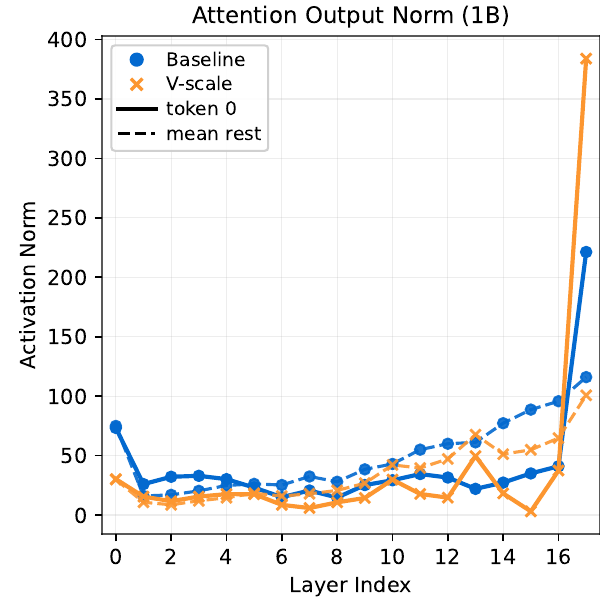}
    
    \caption{
    Forward phenomena in baseline and V-scale models.
    Top row: 0.3B models; bottom row: 1B models.
    From left to right: thresholded sink rate, residual-stream output norm, MLP output norm, and attention output norm.
    We compare the first token with the mean over the rest early tokens (positions 1--15).
    Across both scales, V-scale largely preserves attention sinks while reducing the activation norms of token 0, with the larger reduction appearing in MLP outputs.
    }\label{fig:vscale_as_ma_compare}
\end{figure}

\paragraph{Mechanism verification}
We re-measure gradient sinks as defined in Definition~\ref{def:gradient_sink}. Figure~\ref{fig:vscale-gs} shows that in the 0.3B V-scale model, the gradient sink remains present but is substantially weaker, with the median $R_{GS}^{\ell}(0)$ defined in~\eqref{eq:gs_def} dropping from roughly $40.3$ in the baseline to $5.6$. This is the expected outcome of a partial intervention on the value-path gradient.

Moreover, Figure~\ref{fig:vscale_as_ma_compare} compares baseline and V-scale models at 0.3B and 1B on the most relevant token-wise forward observables for AS and MA, including thresholded Sink Rate $\mathrm{Sink}_s^{\epsilon,\ell}$ in~\eqref{eq:sink_rate} and output norms at different sites. These statistics are averaged over a batch of 512 inputs with 2048 tokens.
Across both model scales, V-scale preserves strong AS behavior and in several layers even slightly strengthens it. In contrast, on the MA metrics, the residual-stream and MLP output norms of token 0 are clearly reduced relative to the baselines. 
The attention output norms of token 0 are also reduced, which is expected since V-scale directly contracts the value path. However, the MA reduction cannot be attributed solely to a local shrinkage of the attention output, since the much stronger effect appears in MLP outputs where V-scale has no direct forward scaling.

Overall, these observations match the qualitative prediction of our mechanism. By providing an additional gradient valve on the dominant value path, V-scale reduces the pressure for the model to realize MA through large MLP outputs.

\paragraph{Downstream performance}
Beyond mechanism verification, we next examine how V-scale behaves on downstream evaluations.
We evaluate the 1B baseline and V-scale models on standard language-modeling and reasoning tasks using the LM Evaluation Harness~\cite{eval-harness}.
This serves as a basic capability check, since suppressing massive activations should not come at the cost of ordinary downstream behavior.
Table~\ref{tab:main_standard_1b} shows that V-scale remains comparable to the baseline under \texttt{bfloat16}.

\begin{table}[t]
    \centering
    \caption{
    Representative standard downstream evaluation of 1B baseline and V-scale models under \texttt{bfloat16}.
    Lower perplexity (ppl) is better, while higher accuracy (acc) is better.
    }\label{tab:main_standard_1b}
	\setlength\tabcolsep{3pt}
    \begin{tabular}{lcccccccccc}
        \toprule
        \multirow{2}{*}{Model} & Wiki & \multicolumn{2}{c}{Lambada} & ARC-C & BoolQ & COPA & HellaSwag & OBQA & PIQA & WinoGrande \\
        & ppl$\downarrow$ & ppl$\downarrow$ & acc$\uparrow$ & acc$\uparrow$ & acc$\uparrow$ & acc$\uparrow$ & acc$\uparrow$ & acc$\uparrow$ & acc$\uparrow$ & acc$\uparrow$ \\
        \midrule
        Baseline & 22.84 & 15.27 & 43.28 & 24.91 & 51.38 & 69.00 & 51.59 & 31.40 & 72.52 & 54.22 \\
        V-scale  & 22.83 & 15.91 & 43.37 & 27.05 & 56.27 & 70.00 & 51.29 & 32.80 & 71.49 & 53.28 \\
        \bottomrule
    \end{tabular}
\end{table}

As a further diagnostic, we use a multi-key Needle-in-a-Haystack (NIAH) task.
We evaluate under \texttt{bfloat16} and four PTQ settings: SmoothQuant (W8A8)~\cite{xiao2023smoothquant}, BitsAndBytes (BNB, W4A16)~\cite{dettmers2023qlora}, GPTQ (W4A16)~\cite{frantar2023optq}, and AWQ (W4A16)~\cite{lin2024awq}, where W$x$A$y$ denotes $x$-bit weight and $y$-bit activation quantization. Table~\ref{tab:main_niah_multikey_1b} compares the retrieval accuracy of the 1B baseline and V-scale models, averaged over $5$ random seeds. V-scale shows positive gains in every listed precision, quantization method, and context length, with the largest gains appearing in the YaRN-extended 8192-token setting~\cite{peng2024yarn}. Additional results are deferred to Appendix~\ref{app:additional_downstream}.

\begin{table}[t]
    \centering
    \setlength\tabcolsep{2.5pt}
    \caption{
    Main multi-key Needle-in-a-Haystack result for 1B models.
    Entries are mean retrieval accuracy percentages over $5$ random seeds; higher is better.
    We directly compare the baseline (B) and V-scale (V) models under native-context evaluation and YaRN extension with a factor of $4$.
    }\label{tab:main_niah_multikey_1b}
    \begin{tabular}{l
        @{\hspace{6pt}}cc
        @{\hspace{6pt}}cc
        @{\hspace{6pt}}cc
        @{\hspace{6pt}}cc
        @{\hspace{6pt}}cc
        @{\hspace{6pt}}cc}
        \toprule
        & \multicolumn{4}{c}{Native}
        & \multicolumn{8}{c}{YaRN-extension} \\
        \cmidrule(lr){2-5}\cmidrule(lr){6-13}
        \multirow{2}{*}{Quantization} & \multicolumn{2}{c}{1024}
        & \multicolumn{2}{c}{2048}
        & \multicolumn{2}{c}{1024}
        & \multicolumn{2}{c}{2048}
        & \multicolumn{2}{c}{4096}
        & \multicolumn{2}{c}{8192} \\
        \cmidrule(lr){2-3}\cmidrule(lr){4-5}\cmidrule(lr){6-7}\cmidrule(lr){8-9}\cmidrule(lr){10-11}\cmidrule(lr){12-13}
        & B & V & B & V & B & V & B & V & B & V & B & V \\
        \midrule
        \texttt{bfloat16} & 67.96 & 73.88 & 65.04 & 71.64 & 57.76 & 59.28 & 53.88 & 59.72 & 48.00 & 56.16 & 48.32 & 62.00 \\
        SmoothQuant & 64.44 & 71.64 & 61.64 & 69.60 & 52.44 & 57.92 & 49.64 & 57.92 & 45.76 & 52.88 & 48.96 & 57.64 \\
        BNB & 66.16 & 74.04 & 61.60 & 70.56 & 54.76 & 58.00 & 46.28 & 58.40 & 42.72 & 55.88 & 47.56 & 62.60 \\
        GPTQ & 66.20 & 67.88 & 63.00 & 65.56 & 56.28 & 56.68 & 50.24 & 57.08 & 45.20 & 51.16 & 46.72 & 58.96 \\
        AWQ & 63.96 & 66.84 & 61.68 & 67.76 & 54.04 & 59.24 & 49.32 & 59.00 & 42.52 & 49.64 & 47.04 & 58.48 \\
        \bottomrule
    \end{tabular}
\end{table}

\section{Conclusion}
\label{sec:conclusion}

In this work, we revisit the relationship between attention sinks and massive activations from the perspective of backpropagation.
Our empirical and theoretical results suggest that the connection between the two is not merely a forward-pass correlation, but is mediated by a backward mechanism: under causal masking, AS can induce concentrated gradient pressure on the sink token, forming gradient sinks, and MA emerges as an adaptive regulator of this localized pressure in pre-norm Transformers.
To test this hypothesis, we introduce V-scale, a value-path intervention that selectively modulates backpropagated gradients.
Across model sizes, we find that V-scale preserves AS but substantially suppresses MA, illustrating that the two phenomena can be separated once gradient sinks are controlled.
On downstream tasks, V-scale remains comparable to the baseline on standard evaluations and improves long-context retrieval.

Our findings also suggest that some phenomena in LLMs may not be best understood purely as forward functional structures, but also as reflections of how optimization adapts to architectural constraints. We believe this perspective may be useful for future designs of architectures that aim to preserve the benefits of sink structure while reducing its costs.

\paragraph{Limitations}
This work has some limitations.
First, our study is restricted to Llama-like dense language models. It remains unclear whether and to what extent the same mechanism transfers to other settings, such as mixture-of-experts architectures or multimodal Transformers.
Second, our analysis does not exclude other factors, such as data distribution, optimizer, or numerical precision, from shaping AS and MA during training.
These limitations are left for future work.

\bibliographystyle{plainnat}
\bibliography{neurips_2026}

@InProceedings{vaswani2017attention,
    author    = {Vaswani, Ashish and Shazeer, Noam and Parmar, Niki and Uszkoreit, Jakob and Jones, Llion and Gomez, Aidan N. and Kaiser, Lukasz and Polosukhin, Illia},
    booktitle = {Advances in Neural Information Processing Systems},
    title     = {Attention is All you Need},
    year      = {2017},
}

@Article{touvron2023llama,
    author  = {Touvron, Hugo and Martin, Louis and Stone, Kevin and Albert, Peter and Almahairi, Amjad and Babaei, Yasmine and Bashlykov, Nikolay and Batra, Soumya and Bhargava, Prajjwal and Bhosale, Shruti and others},
    title   = {Llama 2: Open Foundation and Fine-Tuned Chat Models},
    year    = {2023},
    journal = {arXiv preprint arXiv:2307.09288},
}

@Article{su2021rope,
    author  = {Su, Jianlin and Ahmed, Murtadha and Lu, Yu and Pan, Shengfeng and Bo, Wen and Liu, Yunfeng},
    title   = {RoFormer: Enhanced Transformer with Rotary Position Embedding},
    year    = {2024},
    journal = {Neurocomputing},
    volume  = {568},
    pages   = {127063},
}

@inproceedings{zhang2019rmsnorm,
    title     = {Root Mean Square Layer Normalization},
    author    = {Biao Zhang and Rico Sennrich},
    year      = {2019},
    booktitle = {Advances in Neural Information Processing Systems},
}

@inproceedings{xiong2020layernorm,
    title     = {On Layer Normalization in the Transformer Architecture},
    author    = {Ruibin Xiong and Yunchang Yang and Di He and Kai Zheng and Shuxin Zheng and Chen Xing and Huishuai Zhang and Yanyan Lan and Liwei Wang and Tie{-}Yan Liu},
    year      = {2020},
    booktitle = {International Conference on Machine Learning},
}

@inproceedings{xiao2024streamingllm,
    title     = {Efficient Streaming Language Models with Attention Sinks},
    author    = {Guangxuan Xiao and Yuandong Tian and Beidi Chen and Song Han and Mike Lewis},
    year      = {2024},
    booktitle = {International Conference on Learning Representations},
}

@InProceedings{bondarenko2023quantizable,
    author    = {Bondarenko, Yelysei and Nagel, Markus and Blankevoort, Tijmen},
    title     = {Quantizable Transformers: Removing Outliers by Helping Attention Heads Do Nothing},
    booktitle = {Advances in Neural Information Processing Systems},
    year      = {2023},
}

@InProceedings{sun2024massive,
    author    = {Sun, Mingjie and Chen, Xinlei and Kolter, J. Zico and Liu, Zhuang},
    title     = {Massive Activations in Large Language Models},
    booktitle = {Conference on Language Modeling},
    year      = {2024},
}

@InProceedings{guo2025activedormant,
    author    = {Guo, Tianyu and Pai, Druv and Bai, Yu and Jiao, Jiantao and Jordan, Michael I. and Mei, Song},
    title     = {Active-Dormant Attention Heads: Mechanistically Demystifying Extreme-Token Phenomena in LLMs},
    booktitle = {Conference on Parsimony and Learning},
    year      = {2025},
}

@Article{zuhri2025softpick,
    author  = {Zuhri, Zayd M. K. and Fuadi, Erland Hilman and Aji, Alham Fikri},
    title   = {Softpick: No Attention Sink, No Massive Activations with Rectified Softmax},
    journal = {arXiv preprint arXiv:2504.20966},
    year    = {2025},
}

@InProceedings{qiu2025gated,
    author    = {Qiu, Zihan and Wang, Zekun and Zheng, Bo and Huang, Zeyu and Wen, Kaiyue and Yang, Songlin and Men, Rui and Yu, Le and Huang, Fei and Huang, Suozhi and others},
    title     = {Gated Attention for Large Language Models: Non-linearity, Sparsity, and Attention-Sink-Free},
    booktitle = {Advances in Neural Information Processing Systems},
    year      = {2025},
}

@Article{qiu2026unified,
    author  = {Qiu, Zihan and Huang, Zeyu and Wen, Kaiyue and Jin, Peng and Zheng, Bo and Zhou, Yuxin and Huang, Haofeng and Wang, Zekun and Li, Xiao and Zhang, Huaqing and others},
    title   = {A Unified View of Attention and Residual Sinks: Outlier-Driven Rescaling is Essential for Transformer Training},
    journal = {arXiv preprint arXiv:2601.22966},
    year    = {2026},
}

@Article{sun2026spike,
    author  = {Sun, Shangwen and Canziani, Alfredo and LeCun, Yann and Zhu, Jiachen},
    title   = {The Spike, the Sparse and the Sink: Anatomy of Massive Activations and Attention Sinks},
    journal = {arXiv preprint arXiv:2603.05498},
    year    = {2026},
}

@Article{mandt2017sgd,
    author  = {Mandt, Stephan and Hoffman, Matthew D. and Blei, David M.},
    title   = {Stochastic Gradient Descent as Approximate Bayesian Inference},
    journal = {Journal of Machine Learning Research},
    year    = {2017},
    volume  = {18},
    pages   = {134:1--134:35},
}

@Article{mccandlish2018noise,
    author  = {McCandlish, Sam and Kaplan, Jared and Amodei, Dario and OpenAI Dota Team},
    title   = {An Empirical Model of Large-Batch Training},
    journal = {arXiv preprint arXiv:1812.06162},
    year    = {2018},
}

@Article{shazeer2020gluvariants,
    author  = {Shazeer, Noam},
    title   = {GLU Variants Improve Transformer},
    journal = {arXiv preprint arXiv:2002.05202},
    year    = {2020},
}

@InProceedings{queipo-de-llano2026attention,
    author    = {Queipo-de-Llano, Enrique and Arroyo, Alvaro and Barbero, Federico and Dong, Xiaowen and Bronstein, Michael M. and LeCun, Yann and Shwartz-Ziv, Ravid},
    title     = {Attention Sinks and Compression Valleys in LLMs are Two Sides of the Same Coin},
    booktitle = {International Conference on Learning Representations},
    year      = {2026},
}

@InProceedings{su2025kvsink,
    author    = {Su, Zunhai and Yuan, Kehong},
    title     = {KVSink: Understanding and Enhancing the Preservation of Attention Sinks in KV Cache Quantization for LLMs},
    booktitle = {Conference on Language Modeling},
    year      = {2025},
}

@InProceedings{gu2025when,
    author    = {Gu, Xiangming and Pang, Tianyu and Du, Chao and Liu, Qian and Zhang, Fengzhuo and Du, Cunxiao and Wang, Ye and Lin, Min},
    title     = {When Attention Sink Emerges in Language Models: An Empirical View},
    booktitle = {International Conference on Learning Representations},
    year      = {2025},
}

@InProceedings{barbero2025why,
    author    = {Barbero, Federico and Arroyo, Alvaro and Gu, Xiangming and Perivolaropoulos, Christos and Velickovic, Petar and Pascanu, Razvan and Bronstein, Michael M.},
    title     = {Why do LLMs Attend to the First Token?},
    booktitle = {Conference on Language Modeling},
    year      = {2025},
}

@InProceedings{an2025systematic,
    author    = {An, Yongqi and Zhao, Xu and Yu, Tao and Tang, Ming and Wang, Jinqiao},
    title     = {Systematic Outliers in Large Language Models},
    booktitle = {International Conference on Learning Representations},
    year      = {2025},
}

@InProceedings{su2026unveiling,
    author    = {Su, Zunhai and Li, Qingyuan and Zhang, Hao and Ye, Weihao and Xue, Qibo and Qian, Yulei and Wong, Ngai and Yuan, Kehong},
    title     = {Unveiling Super Experts in Mixture-of-Experts Large Language Models},
    booktitle = {International Conference on Learning Representations},
    year      = {2026},
}

@Article{owen2025refined,
    author  = {Owen, Louis and Roy Chowdhury, Nilabhra and Kumar, Abhay and Gura, Fabian},
    title   = {A Refined Analysis of Massive Activations in LLMs},
    journal = {arXiv preprint arXiv:2503.22329},
    year    = {2025},
}

@Article{gallegofeliciano2026hiddendynamics,
    author  = {Gallego-Feliciano, Jorge and McClendon, S. Aaron and Morinelli, Juan and Zervoudakis, Stavros and Saravanos, Antonios},
    title   = {Hidden Dynamics of Massive Activations in Transformer Training},
    journal = {arXiv preprint arXiv:2508.03616},
    year    = {2025},
}

@InProceedings{liu2026sinktrack,
    author    = {Liu, Xu and Chen, Guikun and Wang, Wenguan},
    title     = {SinkTrack: Attention Sink Based Context Anchoring for Large Language Models},
    booktitle = {International Conference on Learning Representations},
    year      = {2026},
}

@InProceedings{liu2025all,
    author    = {Liu, Yiyang and Liang, James Chenhao and Fan, Heng and Yang, Wenhao and Cui, Yiming and Han, Xiaotian and Huang, Lifu and Liu, Dongfang and Wang, Qifan and Han, Cheng},
    title     = {All You Need is One: Capsule Prompt Tuning with a Single Vector},
    booktitle = {Advances in Neural Information Processing Systems},
    year      = {2025},
}

@Article{fu2026attention,
    author  = {Fu, Zizhuo and Zeng, Wenxuan and Wang, Runsheng and Li, Meng},
    title   = {Attention Sink Forges Native MoE in Attention Layers: Sink-Aware Training to Address Head Collapse},
    journal = {arXiv preprint arXiv:2602.01203},
    year    = {2026},
}

@Article{liu2026surgery,
    author  = {Liu, Guozhi and Lin, Weiwei and Huang, Tiansheng and Mo, Ruichao and Mu, Qi and Wang, Xiumin and Shen, Li},
    title   = {Surgery: Mitigating Harmful Fine-Tuning for Large Language Models via Attention Sink},
    journal = {arXiv preprint arXiv:2602:05228},
    year    = {2026},
}

@InProceedings{kaul2025from,
    author    = {Kaul, Prannay and Ma, Chengcheng and Elezi, Ismail and Deng, Jiankang},
    title     = {From Attention to Activation: Unraveling the Enigmas of Large Language Models},
    booktitle = {International Conference on Learning Representations},
    year      = {2025},
}

@Misc{miller2023attention,
    author = {Miller, Evan},
    title  = {Attention Is Off By One},
    year   = {2023},
    url    = {https://www.evanmiller.org/attention-is-off-by-one.html},
}

@Article{bu2025valuestategated,
    author  = {Bu, Rui and Zhong, Haofeng and Chen, Wenzheng and Li, Yangyan},
    title   = {Value-State Gated Attention for Mitigating Extreme-Token Phenomena in Transformers},
    journal = {arXiv preprint arXiv:2510.09017},
    year    = {2025},
}

@InProceedings{ainslie2023gqa,
    author    = {Ainslie, Joshua and Lee-Thorp, James and de Jong, Michiel and Zemlyanskiy, Yury and Lebron, Federico and Sanghai, Sumit},
    title     = {GQA: Training Generalized Multi-Query Transformer Models from Multi-Head Checkpoints},
    booktitle = {Empirical Methods in Natural Language Processing},
    year      = {2023},
}

@Article{raffel2020exploring,
    author  = {Raffel, Colin and Shazeer, Noam and Roberts, Adam and Lee, Katherine and Narang, Sharan and Matena, Michael and Zhou, Yanqi and Li, Wei and Liu, Peter J.},
    title   = {Exploring the Limits of Transfer Learning with a Unified Text-to-Text Transformer},
    journal = {Journal of Machine Learning Research},
    year    = {2020},
    volume  = {21},
    pages   = {140:1--140:67},
}

@InProceedings{he2016deep,
    author    = {He, Kaiming and Zhang, Xiangyu and Ren, Shaoqing and Sun, Jian},
    title     = {Deep Residual Learning for Image Recognition},
    booktitle = {IEEE Conference on Computer Vision and Pattern Recognition},
    year      = {2016},
}

@InProceedings{liu2020understanding,
    author    = {Liu, Liyuan and Liu, Xiaodong and Gao, Jianfeng and Chen, Weizhu and Han, Jiawei},
    title     = {Understanding the Difficulty of Training Transformers},
    booktitle = {Empirical Methods in Natural Language Processing},
    year      = {2020},
    publisher = {Association for Computational Linguistics},
    pages     = {5747--5763},
}

@Article{wang2024deepnet,
    author  = {Wang, Hongyu and Ma, Shuming and Dong, Li and Huang, Shaohan and Zhang, Dongdong and Wei, Furu},
    title   = {DeepNet: Scaling Transformers to 1,000 Layers},
    journal = {IEEE Transactions on Pattern Analysis and Machine Intelligence},
    year    = {2024},
    volume  = {46},
    number  = {10},
    pages   = {6761--6774},
}

@InProceedings{xiao2023smoothquant,
    title     = {{S}mooth{Q}uant: Accurate and Efficient Post-Training Quantization for Large Language Models},
    author    = {Xiao, Guangxuan and Lin, Ji and Seznec, Mickael and Wu, Hao and Demouth, Julien and Han, Song},
    booktitle = {Proceedings of the 40th International Conference on Machine Learning},
    pages     = {38087--38099},
    year      = {2023},
    volume    = {202},
    series    = {Proceedings of Machine Learning Research},
    month     = {23--29 Jul},
    publisher = {PMLR},
}

@inproceedings{dettmers2023qlora,
    title     = {{QL}o{RA}: Efficient Finetuning of Quantized {LLM}s},
    author    = {Tim Dettmers and Artidoro Pagnoni and Ari Holtzman and Luke Zettlemoyer},
    booktitle = {Thirty-seventh Conference on Neural Information Processing Systems},
    year      = {2023},
}

@misc{eval-harness,
    author    = {Gao, Leo and Tow, Jonathan and Abbasi, Baber and Biderman, Stella and Black, Sid and DiPofi, Anthony and Foster, Charles and Golding, Laurence and Hsu, Jeffrey and Le Noac'h, Alain and others},
    title     = {A framework for few-shot language model evaluation},
    year      = {2023},
    publisher = {Zenodo},
    url       = {https://zenodo.org/records/10256836},
}

@inproceedings{lin2024awq,
    author    = {Lin, Ji and Tang, Jiaming and Tang, Haotian and Yang, Shang and Chen, Wei-Ming and Wang, Wei-Chen and Xiao, Guangxuan and Dang, Xingyu and Gan, Chuang and Han, Song},
    booktitle = {Proceedings of Machine Learning and Systems},
    pages     = {87--100},
    title     = {AWQ: Activation-aware Weight Quantization for On-Device LLM Compression and Acceleration},
    volume    = {6},
    year      = {2024},
}

@inproceedings{frantar2023optq,
    title     = {{OPTQ}: Accurate Quantization for Generative Pre-trained Transformers},
    author    = {Elias Frantar and Saleh Ashkboos and Torsten Hoefler and Dan Alistarh},
    year      = {2023},
    booktitle = {The Eleventh International Conference on Learning Representations},
}

@article{ba2016layernorm,
    title   = {Layer Normalization},
    author  = {Jimmy Lei Ba and Jamie Ryan Kiros and Geoffrey E. Hinton},
    year    = {2016},
    journal = {arXiv preprint arXiv:1607.06450},
}

@inproceedings{nguyen2019transformers,
    title     = {Transformers without Tears: Improving the Normalization of Self-Attention},
    author    = {Nguyen, Toan Q. and Salazar, Julian},
    booktitle = {Proceedings of the 16th International Conference on Spoken Language Translation},
    year      = {2019},
    publisher = {Association for Computational Linguistics},
}

@inproceedings{dettmers2022gptint,
    title     = {{GPT}3.int8(): 8-bit Matrix Multiplication for Transformers at Scale},
    author    = {Tim Dettmers and Mike Lewis and Younes Belkada and Luke Zettlemoyer},
    booktitle = {Advances in Neural Information Processing Systems},
    year      = {2022},
}

@inproceedings{peng2024yarn,
    title     = {Ya{RN}: Efficient Context Window Extension of Large Language Models},
    author    = {Bowen Peng and Jeffrey Quesnelle and Honglu Fan and Enrico Shippole},
    booktitle = {International Conference on Learning Representations},
    year      = {2024},
}

@article{qwen2.5,
    title   = {Qwen2.5 Technical Report},
    author  = {An Yang and Baosong Yang and Beichen Zhang and Binyuan Hui and Bo Zheng and Bowen Yu and Chengyuan Li and Dayiheng Liu and Fei Huang and Haoran Wei and others},
    year    = {2024},
    journal = {arXiv preprint arXiv:2412.15115},
}

@article{qwen3,
    title   = {Qwen3 Technical Report},
    author  = {Yang, An and Li, Anfeng and Yang, Baosong and Zhang, Beichen and Hui, Binyuan and Zheng, Bo and Yu, Bowen and Gao, Chang and Huang, Chengen and Lv, Chenxu and others},
    journal = {arXiv preprint arXiv:2505.09388},
    year    = {2025},
}

@article{glm2024chatglm,
    title   = {ChatGLM: A Family of Large Language Models from GLM-130B to GLM-4 All Tools},
    author  = {Aohan Zeng and Bin Xu and Bowen Wang and Chenhui Zhang and Da Yin and Dan Zhang and Diego Rojas and Guanyu Feng and Hanlin Zhao and Hanyu Lai and others},
    year    = {2024},
    journal = {arXiv preprint arXiv:2406.12793},
}

@article{grattafiori2024llama3herdmodels,
    title   = {The Llama 3 Herd of Models},
    author  = {Grattafiori, Aaron and Dubey, Abhimanyu and Jauhri, Abhinav and Pandey, Abhinav and Kadian, Abhishek and Al-Dahle, Ahmad and Letman, Aiesha and Mathur, Akhil and Schelten, Alan and Vaughan, Alex and others},
    year    = {2024},
    journal = {arXiv preprint arXiv:2407.21783},
}

@article{radford2019language,
    title  = {Language Models are Unsupervised Multitask Learners},
    author = {Radford, Alec and Wu, Jeff and Child, Rewon and Luan, David and Amodei, Dario and Sutskever, Ilya},
    year   = {2019},
}

\clearpage
\appendix

\section{Related work}
\label{app:related_work}

\paragraph{Phenomena and utilization}
Attention sinks (AS) refer to the tendency of Transformer-based LLMs to route disproportionate attention mass to a small set of early or otherwise uninformative tokens~\cite{xiao2024streamingllm,gu2025when}. Their practical importance was first made especially clear by StreamingLLM, which showed that preserving sink tokens in the KV cache enables stable streaming inference beyond the training context window~\cite{xiao2024streamingllm}. Subsequent work has used or analyzed sink tokens for KV-cache compression, long-context inference, context anchoring, prompt tuning, and fine-tuning robustness~\cite{su2025kvsink,liu2026sinktrack,liu2025all,fu2026attention,liu2026surgery}. 
Massive activations (MA) are sparse but extremely large activation values that recur in LLMs and often appear at specific token positions or features~\cite{sun2024massive,an2025systematic,owen2025refined}. They are closely related to activation outliers that dominate numerical ranges and complicate low-bit inference~\cite{dettmers2022gptint,bondarenko2023quantizable,xiao2023smoothquant}.
AS and MA are extreme-token phenomena whose functional roles and training origins have become active mechanistic questions~\cite{guo2025activedormant,qiu2026unified}. Our work identifies gradient sinks as a missing backward-pass phenomenon.

\paragraph{Mechanistic interpretations}
The study of extreme-token phenomena has produced several complementary explanations. 
One influential view interprets attention sinks as a way for attention heads to implement ``no-op'' behavior under the sum-to-one constraint of softmax attention. When a head has little useful information to mix into the current representation, it can place most of its attention on low-information tokens whose value vectors contribute little to the residual update~\cite{bondarenko2023quantizable,gu2025when}. 
Other work argues that first-token attention can serve useful information-propagation roles, such as reducing over-mixing in deep Transformers~\cite{barbero2025why}. 
Closely related work studies value-state drains, where sink tokens receive high attention but have unusually small value norms, and active-dormant head dynamics and value-state gating both emphasize this attention-value coupling~\cite{guo2025activedormant,bu2025valuestategated}. 
Another line connects attention sinks to massive activations, compression valleys, and outlier-driven rescaling, explaining how extreme residual or activation structures can shape forward attention behavior~\cite{sun2024massive,kaul2025from,queipo-de-llano2026attention,sun2026spike,qiu2026unified}. 
These accounts clarify why AS and MA often co-occur and why they can be useful or stable forward structures.
Our work complements these explanations by treating attention sinks as sources of backward gradient concentration.

\paragraph{Mitigation and intervention}
Many studies regard attention sinks or activation outliers as harmful to LLMs and seek to mitigate them.
One family changes the softmax mechanism itself, using clipped softmax, rectified alternatives, or related normalizers to reduce forced attention allocation and activation outliers~\cite{bondarenko2023quantizable,miller2023attention,kaul2025from,zuhri2025softpick}. 
Another family introduces gates after the attention output or on the value states, giving heads an explicit route for suppressing no-op updates without relying on extreme attention logits~\cite{qiu2025gated,bu2025valuestategated}. 
Recent work further studies the anatomy of AS and MA across architectures, showing that architectural choices can decouple the two phenomena~\cite{sun2026spike}. 
Our proposed V-scale has a narrower purpose. It is designed as a mechanism test that keeps query-key attention scores unchanged while attenuating value-path gradient pressure, allowing us to ask whether MA can be suppressed while AS is largely preserved.

\paragraph{Normalization and pre-norm Transformers}
As a central component in modern Transformers, layer normalization was introduced to stabilize hidden-state dynamics, and subsequent work showed that normalization placement in Transformer residual blocks strongly affects optimization and gradient transport~\cite{ba2016layernorm,xiong2020layernorm,nguyen2019transformers}. 
RMSNorm removes mean-centering while preserving rescaling invariance, and it has become standard in Llama-like decoder-only models~\cite{zhang2019rmsnorm,touvron2023llama,grattafiori2024llama3herdmodels,qwen2.5,qwen3,glm2024chatglm}.
In modern pre-norm RMSNorm Transformers, attention and MLP branches operate on normalized inputs, so very large residual-stream norms have a limited direct role in the forward branch computation. This naturally motivates us to ask how normalization shapes backward signal propagation during training.

\clearpage

\section{Model and training configuration}\label{app:config}

Table~\ref{tab:config} summarizes the configurations for the scratch-trained baselines and their matched V-scale variants. Unless otherwise specified, all models are decoder-only Transformers with RMSNorm, RoPE positional encoding, SwiGLU feed-forward blocks, and causal self-attention. We use the GPT-2 tokenizer~\cite{radford2019language}. All training runs were implemented in PyTorch using NVIDIA A100 GPUs (80GB). During preprocessing, documents are packed by inserting \texttt{<eos>} as the separator, without introducing dedicated \texttt{<bos>}. Weight decay is applied only to 2-D weight matrices. V-scale runs use the same data, optimizer, and schedule as their corresponding baselines.

\begin{table}[ht]
	\centering
	\caption{Model and training configuration.}\label{tab:config}
	\begin{tabular}{p{0.25\linewidth}p{0.15\linewidth}p{0.15\linewidth}p{0.15\linewidth}}
		\toprule
		Item & 0.1B runs & 0.3B runs & 1B runs \\
		\midrule
		\emph{Model} & & & \\
		total parameters & $\sim$101M & $\sim$304M & $\sim$1.04B \\
		vocab size & 50257 & 50257 & 50257 \\
		hidden layers & 16 & 17 & 18 \\
		hidden size & 512 & 1024 & 2048 \\
		attention heads & 8 & 8 & 16 \\
		key/value heads & 8 & 4 & 8 \\
		head dimensions & 64 & 128 & 128 \\
		intermediate size & 1408 & 2816 & 5504 \\
		RMSNorm $\epsilon$ & 1e-5 & 1e-5 & 1e-5 \\
		RoPE base & 10000 & 10000 & 10000 \\
		\midrule
		\emph{Optimizer} & & & \\
		optimizer & AdamW & AdamW & AdamW \\
		AdamW $\beta_1$ & 0.9 & 0.9 & 0.9 \\
		AdamW $\beta_2$ & 0.95 & 0.95 & 0.95 \\
		AdamW $\epsilon$ & 1e-8 & 1e-8 & 1e-8 \\
		weight decay & 0.1 & 0.1 & 0.1 \\
		gradient clipping norm & 1.0 & 1.0 & 1.0 \\
		max learning rate & 2e-3 & 1e-3 & 1e-3 \\
		min learning rate & 2e-4 & 1e-4 & 1e-4 \\
		\midrule
		\emph{Training} & & & \\
		learning rate schedule & cosine decay & cosine decay & cosine decay \\
		total steps & 20000 & 20000 & 25000 \\
		warmup steps & 1000 & 1000 & 1000 \\
		context length & 1024 & 2048 & 2048 \\
		global batch size & 512 & 512 & 512 \\
		tokens per step & 0.52M & 1.05M & 1.05M \\
		total tokens & 10.49B & 20.97B & 26.21B \\
		training data & C4 & C4 & C4 \\
		\bottomrule
	\end{tabular}
\end{table}

\clearpage

\section{Additional empirical results}\label{app:empirical}

\subsection{Supplementary results for QKV gradients}

This subsection supplements the QKV-path decomposition in Figure~\ref{fig:qkv_grads_over_tokens}.
Figure~\ref{fig:qkv_grads_scratch} extends the diagnostic to smaller scratch-trained models, using the same layout as the 1B result in the main text.
Figure~\ref{fig:qkv_grads_pretrained} reports the corresponding measurements for pretrained LLMs.
Across these settings, the query pathway remains comparatively flat, whereas key and especially value gradients show a pronounced spike at token 0.

\begin{figure}[ht]
	\centering
	\includegraphics[width=0.3\textwidth]{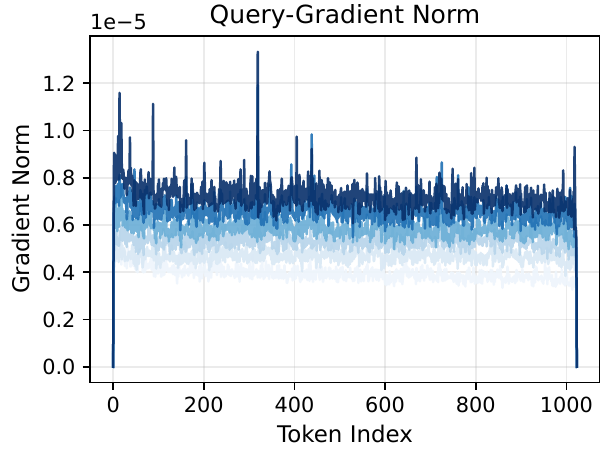}
	\includegraphics[width=0.3\textwidth]{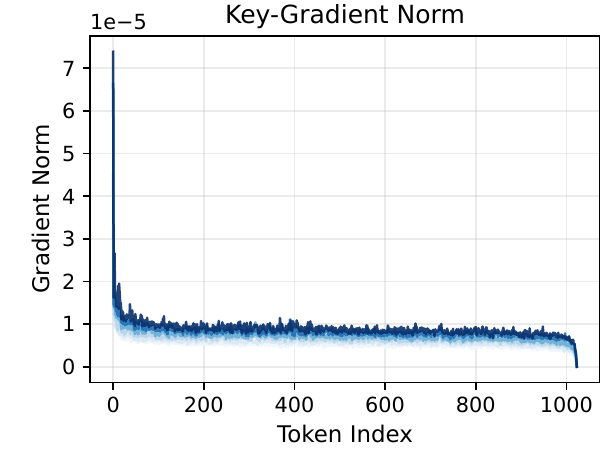}
	\includegraphics[width=0.3\textwidth]{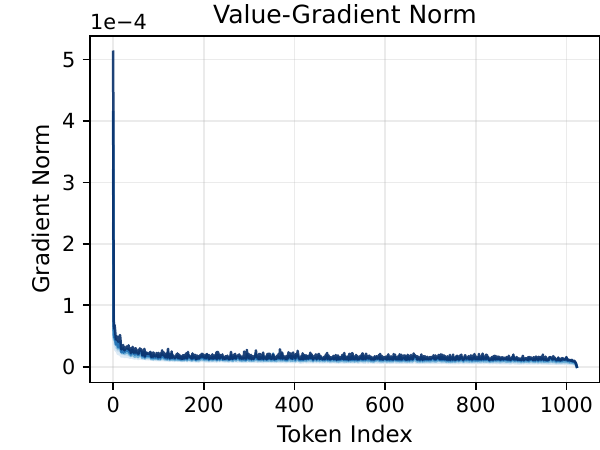}
	\includegraphics[width=0.06\textwidth]{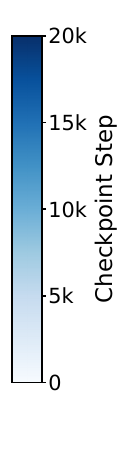} \\
	\includegraphics[width=0.3\textwidth]{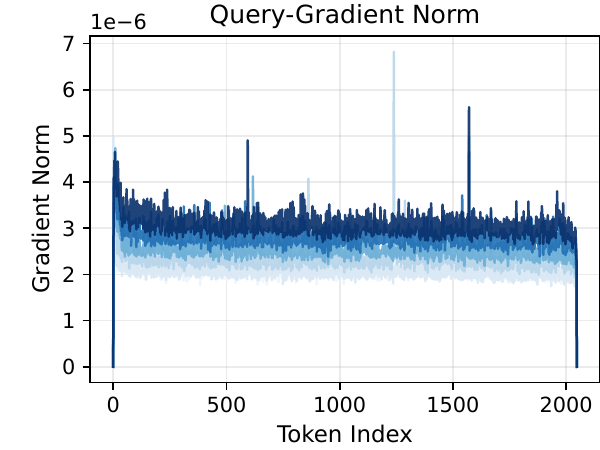}
	\includegraphics[width=0.3\textwidth]{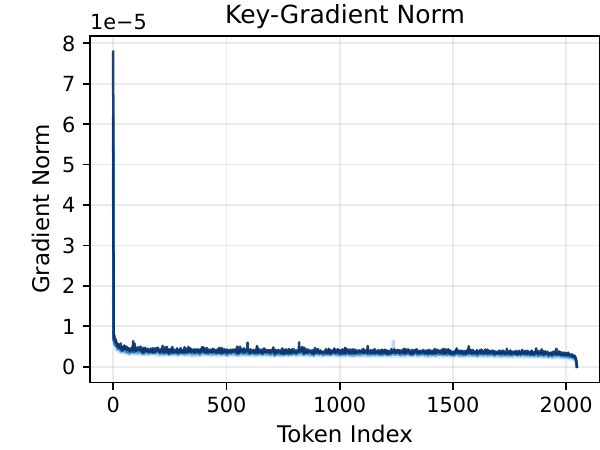}
	\includegraphics[width=0.3\textwidth]{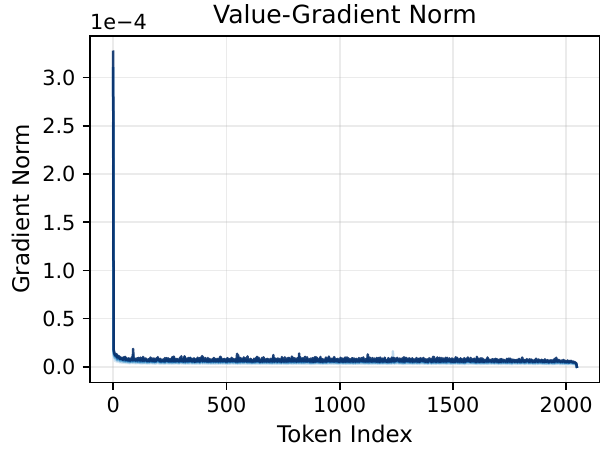}
	\includegraphics[width=0.06\textwidth]{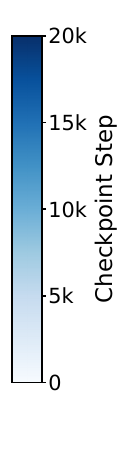}
	
    \caption{
		Token-wise QKV gradient norms across training checkpoints for smaller scratch-trained models, averaged over layers.
		Top row: 0.1B model; bottom row: 0.3B model.
		From left to right: query, key, and value gradient norms as functions of token position.
		Both models reproduce the pathway asymmetry observed in the 1B model: key and especially value gradients exhibit a strong spike at token 0, while query gradients remain comparatively flat.
	}\label{fig:qkv_grads_scratch}
\end{figure}

\begin{figure}[ht]
	\centering
	\setlength{\tabcolsep}{0pt}
	\newcommand{\gradrow}[2]{%
		\rotatebox[origin=l]{90}{\quad #1} &
		\includegraphics[width=0.3\textwidth]{pics/#2/LINE_Dq_over_tokens.pdf} &
		\includegraphics[width=0.3\textwidth]{pics/#2/LINE_Dk_over_tokens.pdf} &
		\includegraphics[width=0.3\textwidth]{pics/#2/LINE_Dv_over_tokens.pdf}
		\\
	}
	\begin{tabular}{cccc}
		\gradrow{Qwen2.5-7B}{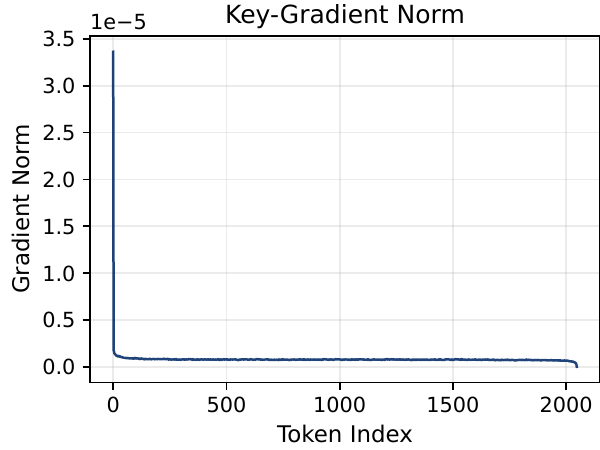}
		\gradrow{Qwen3-8B}{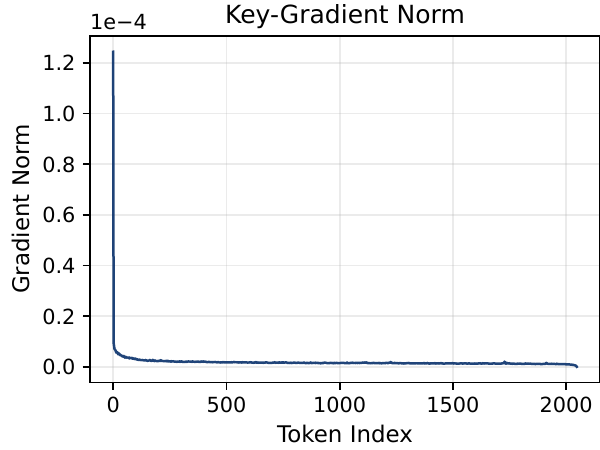}
		\gradrow{Llama-3.1-8B}{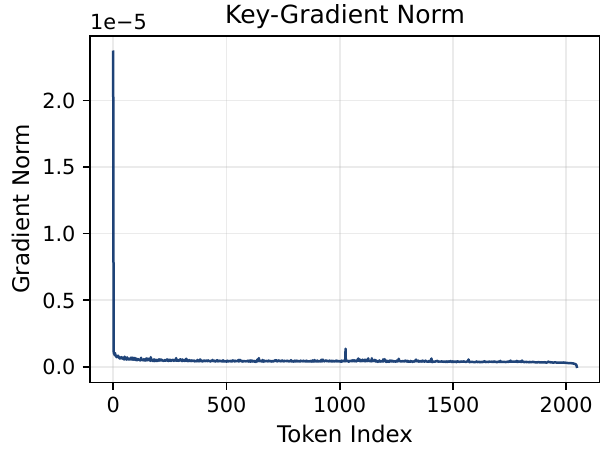}
		\gradrow{GLM-4-9B}{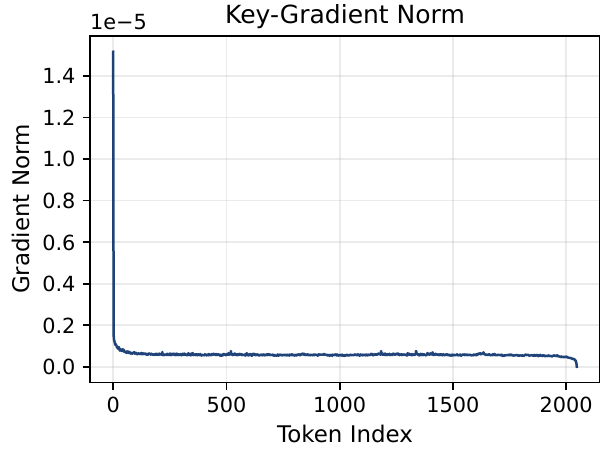}
	\end{tabular}

	\caption{
		Token-wise gradient norms of QKV for pretrained LLMs.
		Each row corresponds to one model.
		From left to right: gradient norms of query, key, and value as functions of token position, averaged over layers.
		Across models, key and especially value gradients consistently exhibit a pronounced spike at token 0.
	}\label{fig:qkv_grads_pretrained}
\end{figure}

\subsection{Supplementary results for gradient reshaping}\label{app:mlp_reshaping}

This subsection complements the gradient-reshaping measurements in Section~\ref{sec:ma_grad_reshape} with the analogous MLP-branch diagnostics.
Unlike the attention-branch results in Figure~\ref{fig:attn_scatter_main}, the left panel of Figure~\ref{fig:mlp_scatter_main} shows that $\mathrm{Bloat}^{\mathrm{mlp}}$ is much less dominated by token 0.
Thus, the MLP branch does not appear to be the primary location where localized pressure is generated.
The middle panel shows that $\mathrm{Change}^{\mathrm{mlp}}$ remains concentrated near zero ($\log_{10}1$), and the right panel again shows a tight alignment between activation norms and compression, as observed in attention blocks.
Figure~\ref{fig:reshaping_supplement} gives the corresponding attention-branch and MLP-branch measurements for the 0.3B baseline and pretrained LLMs.
The same qualitative contrast persists: bloat at token 0 is sharper in the attention branch than in the MLP branch, whereas compression remains strongly tied to activation scale in both branches.
Qwen3-8B is somewhat less aligned with the other pretrained models, which may reflect its post-training procedure.

\begin{figure}[tb]
    \centering
    \includegraphics[width=0.3\linewidth]{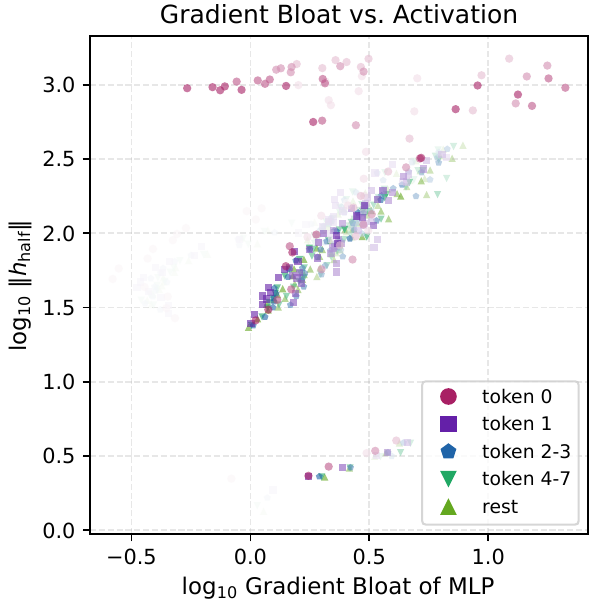}
    \includegraphics[width=0.3\linewidth]{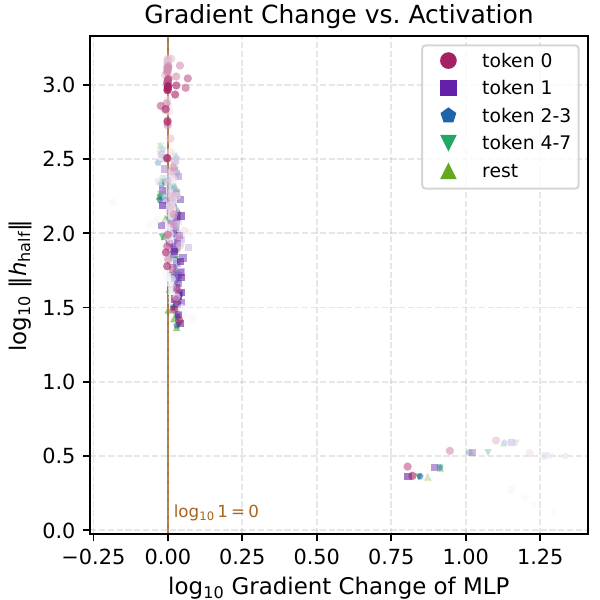}
    \includegraphics[width=0.3\linewidth]{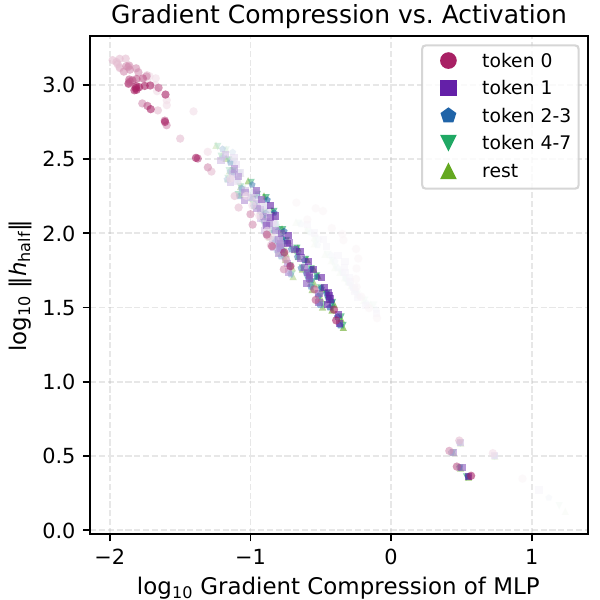}
    \includegraphics[width=0.06\linewidth]{pics/01B_baseline/scatter_group_steps.pdf}\\
    \includegraphics[width=0.3\linewidth]{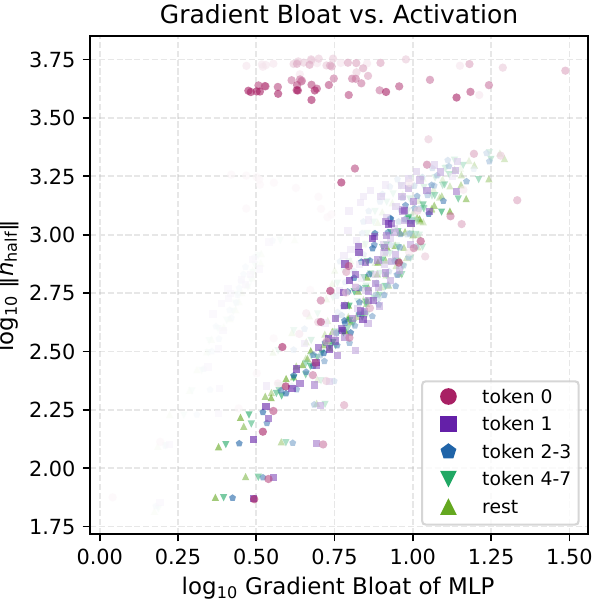}
    \includegraphics[width=0.3\linewidth]{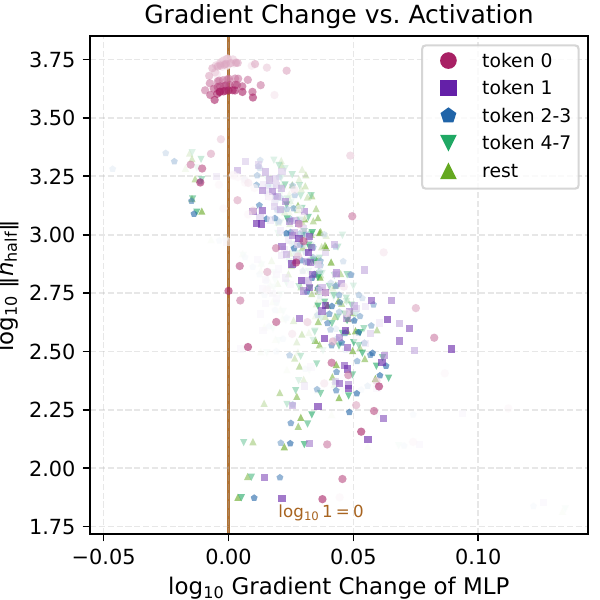}
    \includegraphics[width=0.3\linewidth]{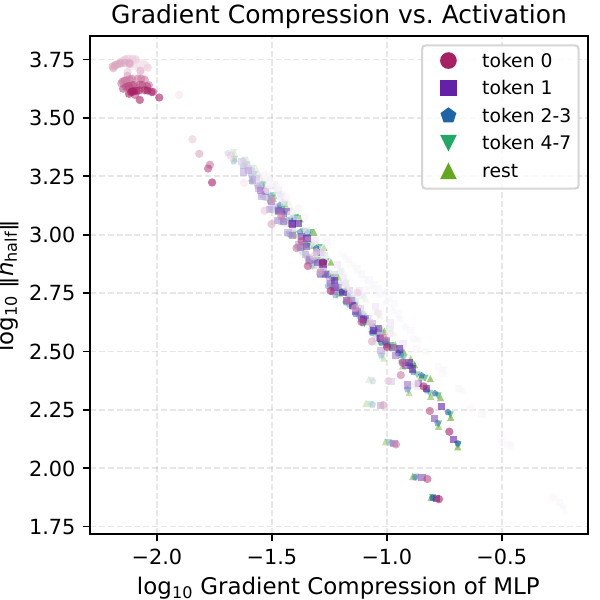}
    \includegraphics[width=0.06\linewidth]{pics/1B_baseline/scatter_group_steps.pdf}

    \caption{
    Scatter plots relating gradient reshaping to input activation norms of the MLP block. Top row: 0.1B model; bottom row: 1B model.
    From left to right: $\log_{10}\mathrm{Bloat}^{\mathrm{mlp}}$ vs.\ $\log_{10}\|h_{\mathrm{half}}\|$, 
    $\log_{10}\mathrm{Change}^{\mathrm{mlp}}$ vs.\ $\log_{10}\|h_{\mathrm{half}}\|$, and
    $\log_{10}\mathrm{Compress}^{\mathrm{mlp}}$ vs.\ $\log_{10}\|h_{\mathrm{half}}\|$.
    Each point is colored by token group (token 0, token 1, tokens 2--3, tokens 4--7, and the rest early tokens 8--15).
    Compared with the attention branch, token 0 is less dominant in MLP bloat, while compression again aligns tightly with activation scale and the $\mathrm{Change}$ ratio remains close to one.
    }\label{fig:mlp_scatter_main}
\end{figure}

\begin{figure}[tb]
    \centering
    \setlength{\tabcolsep}{0pt}
    \newcommand{\gradrow}[2]{%
        \rotatebox[origin=l]{90}{ #1} &
        \includegraphics[width=0.163\textwidth]{pics/#2/Ea3_Bloat.pdf} &
        \includegraphics[width=0.163\textwidth]{pics/#2/Ea4_Change.pdf} &
        \includegraphics[width=0.163\textwidth]{pics/#2/Ea1_compress.pdf} &
        \includegraphics[width=0.163\textwidth]{pics/#2/Em3_Bloat.pdf} &
        \includegraphics[width=0.163\textwidth]{pics/#2/Em4_Change.pdf} &
        \includegraphics[width=0.163\textwidth]{pics/#2/Em1_compress.pdf}
        \\
    }
    \begin{tabular}{ccccccc}
        \gradrow{0.3B Baseline}{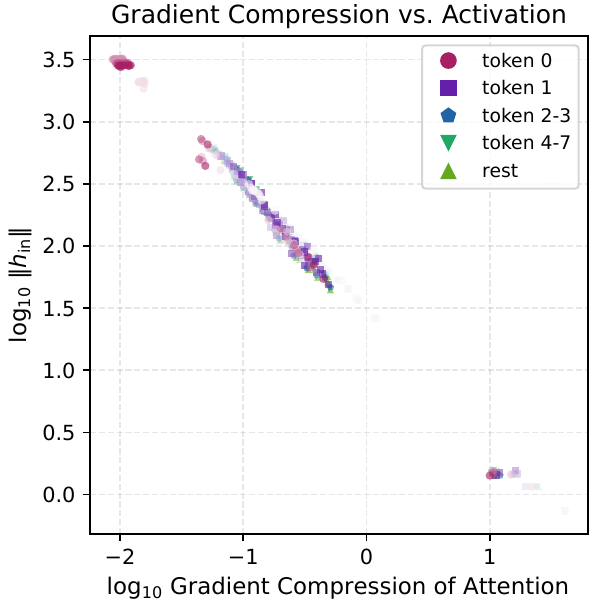}
        \gradrow{Qwen2.5-7B}{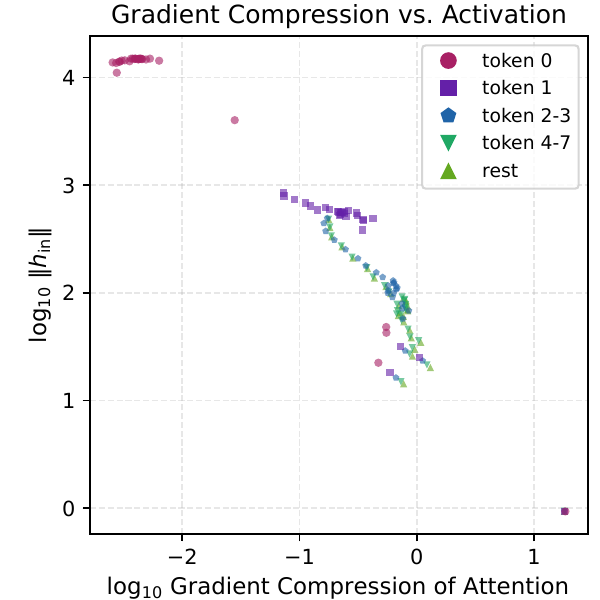}
        \gradrow{Qwen3-8B}{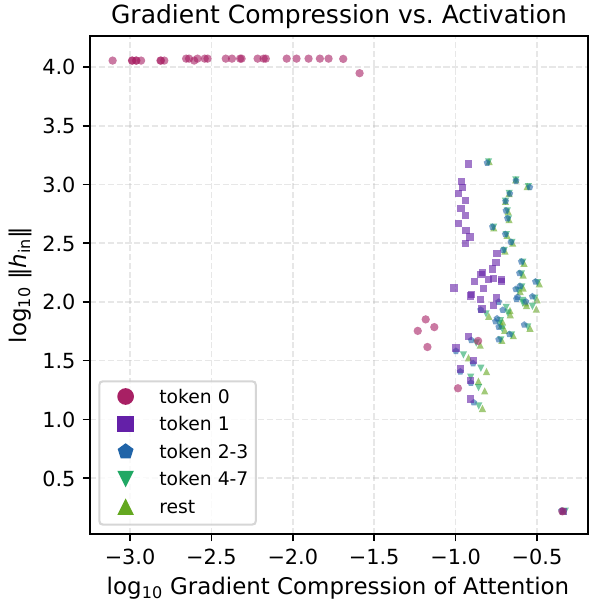}
        \gradrow{Llama-3.1-8B}{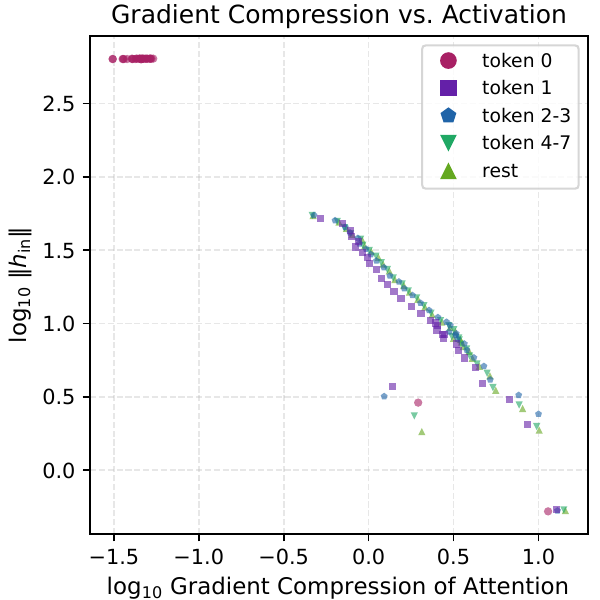}
        \gradrow{GLM-4-9B}{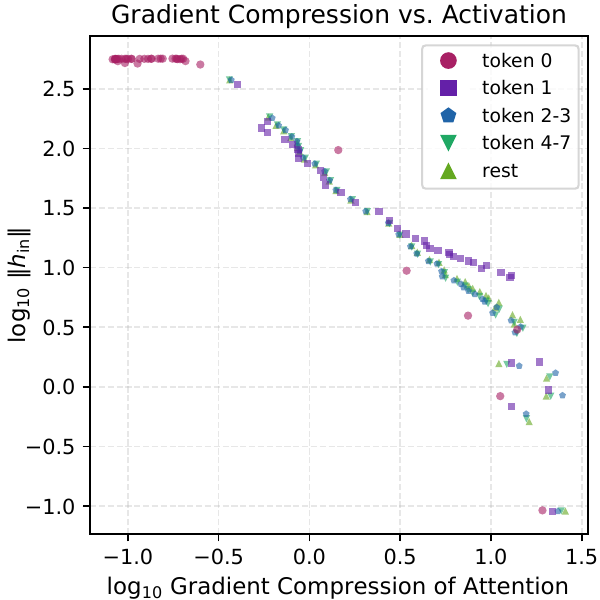}
    \end{tabular}
    
    \caption{
    Supplementary gradient-reshaping measurements for the 0.3B baseline and pretrained models. From left to right, each row shows attention-branch $\mathrm{Bloat}$, $\mathrm{Change}$, and $\mathrm{Compress}$, followed by MLP-branch $\mathrm{Bloat}$, $\mathrm{Change}$, and $\mathrm{Compress}$.
    }\label{fig:reshaping_supplement}
\end{figure}

\subsection{Supporting observations for V-scale design}

This subsection provides additional diagnostics for the design of V-scale.
This intervention is intended to act as a value-path gradient valve. It should be strongest when the sink token has a small value-state norm, and the learned scale parameter should adapt across layers and heads.
Figure~\ref{fig:value_state_drain} checks the first condition in baseline models.
Across training checkpoints, the value state of token 0 is usually much smaller than the mean and maximum over other early tokens, especially in middle and later layers.
Since the V-scale Jacobian attenuates gradients most strongly when $\|v\|_2^2 \ll C_{\ell,h}$, this pattern makes the intervention selective for the sink token rather than a uniform shrinkage of all value states.

\begin{figure}[t]
    \centering
    \includegraphics[width=0.3\linewidth]{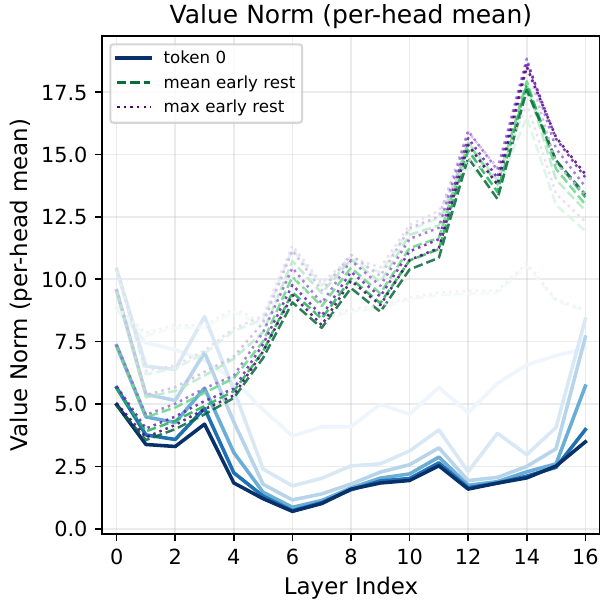}
    \includegraphics[width=0.06\linewidth]{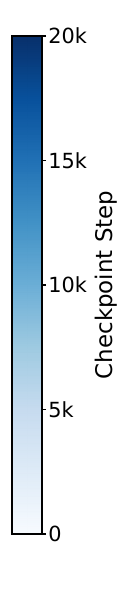}
    \includegraphics[width=0.3\linewidth]{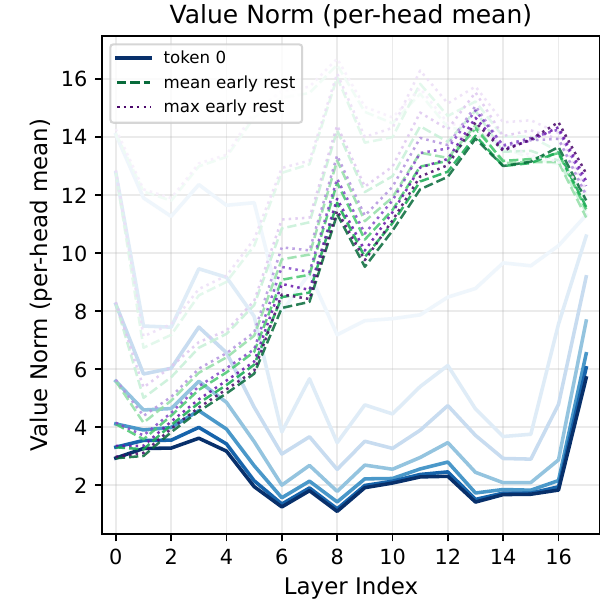}
    \includegraphics[width=0.06\linewidth]{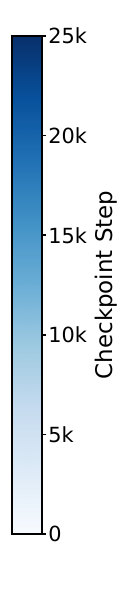}
    \caption{
    Value-state norms in baseline models, averaged over heads and shown across training checkpoints.
    Left: 0.3B baseline; right: 1B baseline.
    The solid curve is token 0, the dashed curve is the mean over other early tokens, and the dotted curve is their maximum, where ``early'' refers to positions 1--15.
    Token 0 usually has a much smaller value-state norm, placing it in the regime where V-scale can induce stronger value-path gradient attenuation.
    }\label{fig:value_state_drain}
\end{figure}

Figure~\ref{fig:vscale_theta_heatmap} reports the final learned V-scale parameters.
Since $\lambda^{\ell,h}=\exp(\theta_{\ell,h})$ is initialized to one, positive $\theta_{\ell,h}$ increases the value-state scale parameter $C_{\ell,h}$ and strengthens attenuation for a fixed small value norm, while negative $\theta_{\ell,h}$ decreases $C_{\ell,h}$.
Both scales show a clear layer-dependent pattern, with larger positive values concentrated in earlier layers and mostly negative values in later layers.
This indicates that V-scale is not uniform but instead learns layer- and head-specific strength for the value-path valve.

\begin{figure}[tb]
    \centering
    \includegraphics[width=0.4\linewidth]{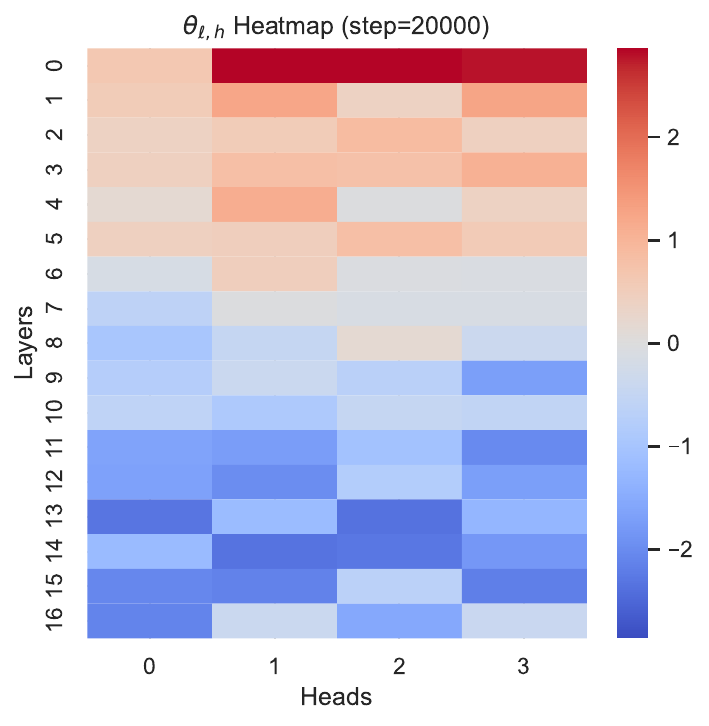}
    \includegraphics[width=0.4\linewidth]{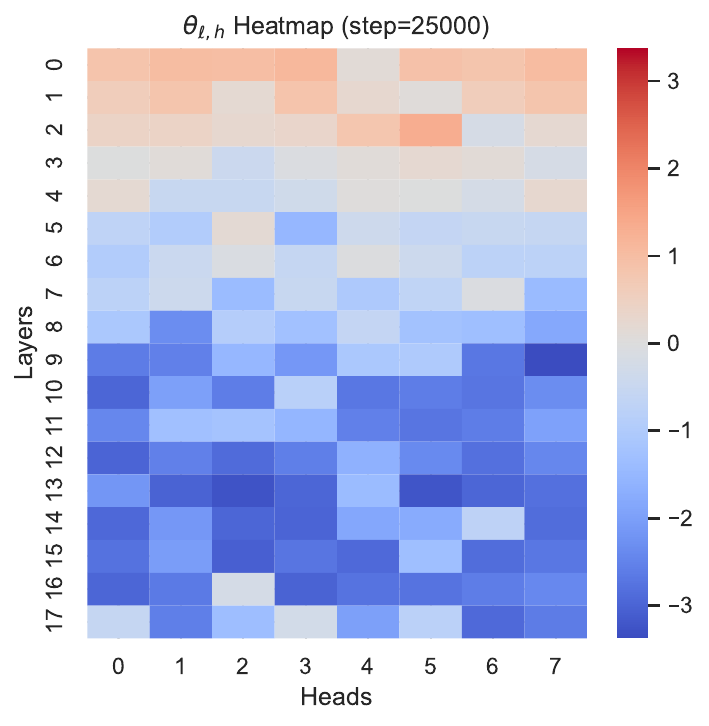}
    \caption{
    Final learned V-scale parameters $\theta_{\ell,h}$.
    Left: 0.3B V-scale model; right: 1B V-scale model.
    Each heatmap indexes layers by rows and attention heads by columns, with the color scale centered at the initialization value $\theta=0$.
    Positive values correspond to larger $C_{\ell,h}$ and stronger attenuation for fixed small-norm value states, while negative values correspond to smaller $C_{\ell,h}$.
    }\label{fig:vscale_theta_heatmap}
\end{figure}

Finally, Figure~\ref{fig:vscale_gs_supplement} complements the 0.3B comparison in Figure~\ref{fig:vscale-gs} by showing the distributions of gradient sinks for additional V-scale models.
Token 0 remains the largest-gradient position but the remaining concentration is moderate compared to the matched baselines in Figure~\ref{fig:gs-definition}.

\begin{figure}[t]
    \centering
    \includegraphics[width=0.48\textwidth]{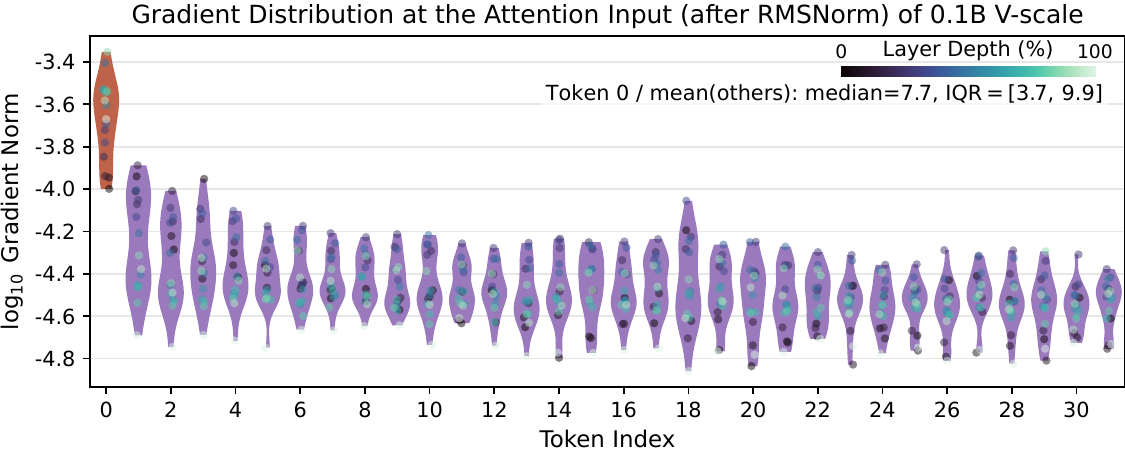}
    \includegraphics[width=0.48\textwidth]{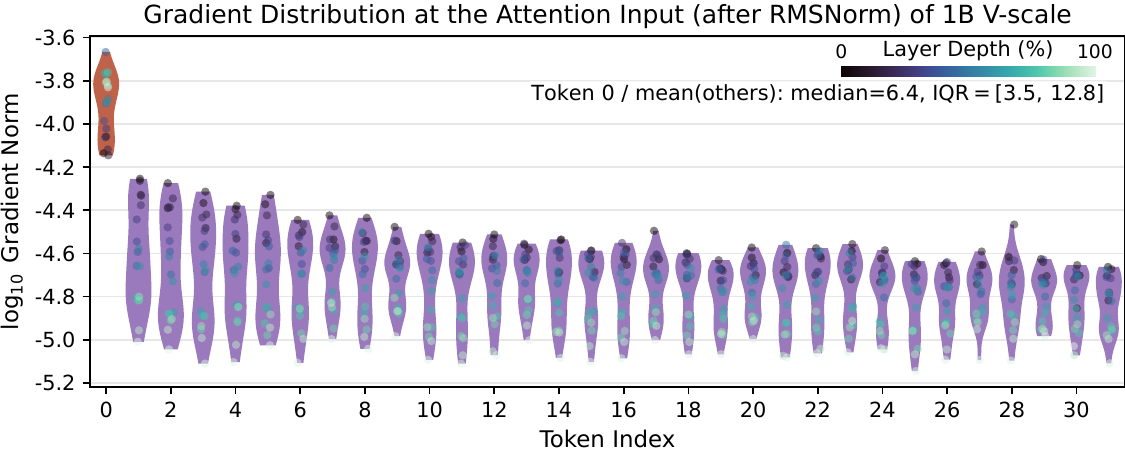}
    \caption{
    Gradient sinks in additional V-scale models.
    Left: 0.1B V-scale model; right: 1B V-scale model.
    Each panel plots the token-wise distribution of $\log_{10}\|\nabla_{\widetilde h_t^{\ell}}\mathcal L\|_2$ at the post-RMSNorm attention input, with token 0 highlighted.
    The sink token remains visible but is more moderate.
    }\label{fig:vscale_gs_supplement}
\end{figure}

\subsection{Additional downstream tasks}\label{app:additional_downstream}

This subsection reports additional downstream evaluations, supplementing the compact summary in Section~\ref{sec:vscale}.
We focus on 1B models because this is the largest scratch-trained setting in our study, avoiding a largely redundant expansion of the appendix.
We use the same comparison protocol, pairing each baseline with its V-scale counterpart and evaluating each pair in \texttt{bfloat16} and under post-training quantization. The goal of these measurements is not to claim that V-scale uniformly improves every benchmark, but to check whether suppressing massive activations preserves ordinary language-modeling and reasoning behavior across precision settings.

Table~\ref{tab:additional_downstream_1b} shows that V-scale remains competitive on standard evaluations. Perplexities are nearly unchanged, and most accuracy columns move only modestly. V-scale improves ARC-C, BoolQ, COPA, and OpenBookQA under most precision settings, while several other tasks decrease slightly. Thus the value-path intervention does not appear to sacrifice basic downstream capability.

\begin{table}[ht]
    \centering
    \caption{
    Standard downstream evaluation of 1B baseline and V-scale models with unquantized \texttt{bfloat16} and post-training quantization.
    \texttt{bfloat16} denotes unquantized evaluation; SmoothQuant is W8A8 quantization, and BNB, GPTQ, and AWQ are W4A16 quantization methods.
    Wiki ppl and Lambada ppl are perplexities, where lower is better.
    Lambada acc, ARC-C/E (AI2 Reasoning Challenge and Easy splits), BoolQ, COPA, HellaSwag, OBQA (OpenBookQA), PIQA, SCIQ, and WinoGrande are zero-shot accuracies in percent, where higher is better.
    V-scale keeps perplexity nearly unchanged and remains broadly comparable to the baseline across reasoning tasks, with gains on ARC-C, BoolQ, COPA, and OBQA and small drops on several other tasks.
    }\label{tab:additional_downstream_1b}
    \setlength\tabcolsep{3pt}
    \begin{tabular}{llccccccc}
        \toprule
        \multicolumn{9}{l}{\emph{Panel A: language modeling and core reasoning}} \\
        \midrule
        \multirow{2}{*}{Model} & \multirow{2}{*}{Quantization} & Wiki & \multicolumn{2}{c}{Lambada} & ARC-C & ARC-E & BoolQ & COPA \\
        & & ppl$\downarrow$ & ppl$\downarrow$ & acc$\uparrow$ & acc$\uparrow$ & acc$\uparrow$ & acc$\uparrow$ & acc$\uparrow$ \\
        \midrule
        Baseline & bfloat16 & 22.84 & 15.27 & 43.28 & 24.91 & 47.98 & 51.38 & 69.00 \\
        V-scale & bfloat16 & 22.83 & 15.91 & 43.37 & 27.05 & 47.47 & 56.27 & 70.00 \\
        Baseline & SmoothQuant & 23.17 & 16.61 & 43.02 & 25.26 & 47.47 & 50.92 & 67.00 \\
        V-scale & SmoothQuant & 23.01 & 16.58 & 43.14 & 26.11 & 46.76 & 55.57 & 69.00 \\
        Baseline & BNB & 23.49 & 16.40 & 42.73 & 25.00 & 47.43 & 54.19 & 72.00 \\
        V-scale & BNB & 23.40 & 15.86 & 43.55 & 27.56 & 47.05 & 55.96 & 74.00 \\
        Baseline & GPTQ & 23.21 & 15.70 & 43.26 & 25.77 & 48.48 & 52.54 & 68.00 \\
        V-scale & GPTQ & 23.17 & 16.48 & 43.76 & 26.79 & 46.80 & 56.45 & 70.00 \\
        Baseline & AWQ & 23.32 & 16.34 & 42.44 & 25.26 & 48.06 & 53.64 & 68.00 \\
        V-scale & AWQ & 23.36 & 16.95 & 43.06 & 26.71 & 47.10 & 55.96 & 69.00 \\
        \bottomrule
    \end{tabular}
    \begin{tabular}{llccccc}
        \toprule
        \multicolumn{7}{l}{\emph{Panel B: additional commonsense and science QA tasks}} \\
        \midrule
        Model & Quantization & HellaSwag & OBQA & PIQA & SCIQ & WinoGrande \\
        \midrule
        Baseline & bfloat16 & 51.59 & 31.40 & 72.52 & 69.90 & 54.22 \\
        V-scale & bfloat16 & 51.29 & 32.80 & 71.49 & 69.30 & 53.28 \\
        Baseline & SmoothQuant & 51.49 & 30.80 & 72.47 & 69.00 & 55.17 \\
        V-scale & SmoothQuant & 51.13 & 32.40 & 71.00 & 68.30 & 54.46 \\
        Baseline & BNB & 51.59 & 31.40 & 73.01 & 69.20 & 55.56 \\
        V-scale & BNB & 51.29 & 32.80 & 71.27 & 69.50 & 54.14 \\
        Baseline & GPTQ & 51.61 & 31.60 & 72.85 & 69.20 & 54.70 \\
        V-scale & GPTQ & 51.31 & 32.40 & 71.44 & 68.40 & 54.70 \\
        Baseline & AWQ & 51.23 & 30.60 & 72.47 & 69.40 & 54.30 \\
        V-scale & AWQ & 50.88 & 33.20 & 71.65 & 69.10 & 53.51 \\
        \bottomrule
    \end{tabular}
\end{table}

We next evaluate long-context retrieval using Needle-in-a-Haystack (NIAH) variants. The native 1B models are evaluated up to 2048 tokens. To test longer contexts, we additionally apply YaRN with an extension factor of $4$ and evaluate up to 8192 tokens. Tables~\ref{tab:niah_single2_1b}--\ref{tab:niah_multikey1_1b} report three templates of increasing difficulty. All NIAH entries are means and standard deviations over $5$ random seeds. The single-needle setting in Table~\ref{tab:niah_single2_1b} is close to saturated at short context lengths, but V-scale improves the longest 8192-token YaRN setting for every precision/quantization mode. A harder single-needle variant in Table~\ref{tab:niah_single3_1b} is more mixed. V-scale substantially improves the native 2048-token setting except under AWQ, while some YaRN-extended settings degrade. The clearest long-context benefit appears in the multi-key retrieval setting in Table~\ref{tab:niah_multikey1_1b}, where V-scale consistently outperforms the baseline across all listed context lengths and quantization methods.

\begin{table}[ht]
    \centering
    \caption{
    Needle-in-a-Haystack retrieval accuracy on the \texttt{niah\_single\_2} single-needle template for 1B models.
    Columns denote context length in tokens, and entries are mean accuracy percentages with standard deviations over $5$ random seeds; higher is better.
    The first block evaluates native-context 1B models, for which only 1024 and 2048 tokens are tested.
    Blank cells therefore indicate untested longer contexts rather than failed runs.
    The second block evaluates the same models after YaRN extension with a factor of $4$, enabling 4096- and 8192-token tests.
    This template is nearly saturated at shorter lengths, but V-scale gives consistently higher accuracy at 8192 tokens after YaRN extension.
    }\label{tab:niah_single2_1b}
    \begin{tabular}{llcccc}
        \toprule
        Model & Quantization & 1024 & 2048 & 4096 & 8192 \\
        \midrule
        \rowcolor[gray]{0.9}\multicolumn{6}{l}{\emph{1B Models}} \\
        Baseline & bfloat16 & $99.96_{\pm 0.09}$ & $98.00_{\pm 0.69}$ & & \\
        V-scale & bfloat16 & $100.00_{\pm 0.00}$ & $97.80_{\pm 0.42}$ & & \\
        Baseline & SmoothQuant & $99.96_{\pm 0.09}$ & $95.92_{\pm 2.09}$ & & \\
        V-scale & SmoothQuant & $100.00_{\pm 0.00}$ & $95.64_{\pm 0.99}$ & & \\
        Baseline & BNB & $99.88_{\pm 0.18}$ & $98.36_{\pm 0.86}$ & & \\
        V-scale & BNB & $100.00_{\pm 0.00}$ & $95.76_{\pm 0.84}$ & & \\
        Baseline & GPTQ & $99.92_{\pm 0.11}$ & $94.64_{\pm 1.56}$ & & \\
        V-scale & GPTQ & $100.00_{\pm 0.00}$ & $96.68_{\pm 0.46}$ & & \\
        Baseline & AWQ & $99.92_{\pm 0.11}$ & $96.56_{\pm 1.24}$ & & \\
        V-scale & AWQ & $100.00_{\pm 0.00}$ & $97.20_{\pm 0.71}$ & & \\
        \midrule
        \rowcolor[gray]{0.9}\multicolumn{6}{l}{\emph{1B Models with YaRN Extension} (yarn\_factor$=4$)} \\
        Baseline & bfloat16 & $99.84_{\pm 0.22}$ & $98.64_{\pm 0.26}$ & $98.20_{\pm 0.66}$ & $94.20_{\pm 0.75}$ \\
        V-scale & bfloat16 & $99.40_{\pm 0.40}$ & $99.56_{\pm 0.38}$ & $98.32_{\pm 0.41}$ & $99.04_{\pm 0.33}$ \\
        Baseline & SmoothQuant & $99.72_{\pm 0.23}$ & $97.48_{\pm 0.84}$ & $93.24_{\pm 1.21}$ & $88.40_{\pm 1.67}$ \\
        V-scale & SmoothQuant & $98.52_{\pm 0.30}$ & $97.96_{\pm 0.17}$ & $93.72_{\pm 0.63}$ & $96.16_{\pm 0.70}$ \\
        Baseline & BNB & $99.60_{\pm 0.37}$ & $99.20_{\pm 0.37}$ & $98.12_{\pm 0.87}$ & $93.32_{\pm 0.77}$ \\
        V-scale & BNB & $99.76_{\pm 0.17}$ & $99.96_{\pm 0.09}$ & $99.40_{\pm 0.35}$ & $98.52_{\pm 0.50}$ \\
        Baseline & GPTQ & $99.80_{\pm 0.20}$ & $97.32_{\pm 0.61}$ & $96.92_{\pm 0.44}$ & $93.08_{\pm 1.18}$ \\
        V-scale & GPTQ & $99.64_{\pm 0.22}$ & $98.96_{\pm 0.43}$ & $93.52_{\pm 1.14}$ & $97.72_{\pm 0.27}$ \\
        Baseline & AWQ & $99.72_{\pm 0.23}$ & $97.44_{\pm 0.54}$ & $94.60_{\pm 1.02}$ & $90.84_{\pm 1.07}$ \\
        V-scale & AWQ & $96.96_{\pm 0.77}$ & $94.56_{\pm 0.52}$ & $92.36_{\pm 1.40}$ & $98.32_{\pm 0.84}$ \\
        \bottomrule
    \end{tabular}
\end{table}

\begin{table}[ht]
    \centering
    \caption{
    Needle-in-a-Haystack retrieval accuracy on the harder \texttt{niah\_single\_3} single-needle template for 1B models.
    Columns denote context length in tokens, and entries are mean accuracy percentages with standard deviations over $5$ random seeds; higher is better.
    The first block reports native-context evaluation up to 2048 tokens, while the second block uses YaRN extension with a factor of $4$ to evaluate up to 8192 tokens.
    On native 2048-token evaluation, V-scale improves over the baseline for \texttt{bfloat16}, SmoothQuant, BNB, and GPTQ, with AWQ as the exception.
    After YaRN extension, the result is mixed, with gains in several 2048-token and BNB settings but degradation under some 4096- and 8192-token quantized settings.
    }\label{tab:niah_single3_1b}
    \begin{tabular}{llcccc}
        \toprule
        Model & Quantization & 1024 & 2048 & 4096 & 8192 \\
        \midrule
        \rowcolor[gray]{0.9}\multicolumn{6}{l}{\emph{1B Models}} \\
        Baseline & bfloat16 & $99.68_{\pm 0.11}$ & $85.48_{\pm 1.24}$ & & \\
        V-scale & bfloat16 & $98.60_{\pm 0.79}$ & $95.60_{\pm 0.95}$ & & \\
        Baseline & SmoothQuant & $99.00_{\pm 0.28}$ & $79.28_{\pm 0.93}$ & & \\
        V-scale & SmoothQuant & $96.00_{\pm 0.84}$ & $88.48_{\pm 0.46}$ & & \\
        Baseline & BNB & $99.32_{\pm 0.18}$ & $79.80_{\pm 1.29}$ & & \\
        V-scale & BNB & $99.44_{\pm 0.48}$ & $96.40_{\pm 0.51}$ & & \\
        Baseline & GPTQ & $99.44_{\pm 0.26}$ & $85.84_{\pm 1.88}$ & & \\
        V-scale & GPTQ & $98.84_{\pm 0.75}$ & $95.84_{\pm 1.13}$ & & \\
        Baseline & AWQ & $99.44_{\pm 0.26}$ & $96.16_{\pm 1.25}$ & & \\
        V-scale & AWQ & $91.96_{\pm 1.01}$ & $85.20_{\pm 1.54}$ & & \\
        \midrule
        \rowcolor[gray]{0.9}\multicolumn{6}{l}{\emph{1B Models with YaRN Extension} (yarn\_factor$=4$)} \\
        Baseline & bfloat16 & $98.96_{\pm 0.26}$ & $91.56_{\pm 2.03}$ & $79.48_{\pm 1.08}$ & $75.68_{\pm 1.51}$ \\
        V-scale & bfloat16 & $99.08_{\pm 0.70}$ & $94.36_{\pm 1.01}$ & $77.16_{\pm 1.42}$ & $73.52_{\pm 1.10}$ \\
        Baseline & SmoothQuant & $97.00_{\pm 0.82}$ & $80.12_{\pm 2.11}$ & $71.96_{\pm 2.07}$ & $64.00_{\pm 1.38}$ \\
        V-scale & SmoothQuant & $98.56_{\pm 0.84}$ & $84.84_{\pm 1.15}$ & $64.72_{\pm 1.57}$ & $66.92_{\pm 1.20}$ \\
        Baseline & BNB & $98.60_{\pm 0.63}$ & $89.64_{\pm 1.11}$ & $75.68_{\pm 1.14}$ & $75.52_{\pm 2.39}$ \\
        V-scale & BNB & $99.16_{\pm 0.48}$ & $98.64_{\pm 0.46}$ & $82.52_{\pm 0.94}$ & $71.72_{\pm 1.98}$ \\
        Baseline & GPTQ & $98.92_{\pm 0.23}$ & $90.68_{\pm 1.30}$ & $75.72_{\pm 1.32}$ & $73.68_{\pm 0.70}$ \\
        V-scale & GPTQ & $99.36_{\pm 0.57}$ & $94.36_{\pm 0.99}$ & $61.04_{\pm 1.58}$ & $69.20_{\pm 1.97}$ \\
        Baseline & AWQ & $98.76_{\pm 0.52}$ & $96.56_{\pm 0.50}$ & $78.24_{\pm 0.74}$ & $74.20_{\pm 1.83}$ \\
        V-scale & AWQ & $98.24_{\pm 0.75}$ & $87.84_{\pm 1.11}$ & $49.80_{\pm 1.73}$ & $70.56_{\pm 1.98}$ \\
        \bottomrule
    \end{tabular}
\end{table}

\begin{table}[ht]
    \centering
    \caption{
    Needle-in-a-Haystack retrieval accuracy on the \texttt{niah\_multikey\_1} multi-key template for 1B models.
    This setting requires retrieving information when multiple key-value associations are present, making it more sensitive to long-context attention allocation than the saturated single-needle cases.
    Columns denote context length in tokens, and entries are mean accuracy percentages with standard deviations over $5$ random seeds; higher is better.
    Native-context models are evaluated at 1024 and 2048 tokens, and YaRN-extended models with a factor of $4$ are evaluated up to 8192 tokens.
    V-scale improves every listed precision/quantization and context-length combination, with especially large gains at 8192 tokens after YaRN extension.
    }\label{tab:niah_multikey1_1b}
    \begin{tabular}{llcccc}
        \toprule
        Model & Quantization & 1024 & 2048 & 4096 & 8192 \\
        \midrule
        \rowcolor[gray]{0.9}\multicolumn{6}{l}{\emph{1B Models}} \\
        Baseline & bfloat16 & $67.96_{\pm 1.34}$ & $65.04_{\pm 1.09}$ & & \\
        V-scale & bfloat16 & $73.88_{\pm 1.06}$ & $71.64_{\pm 3.46}$ & & \\
        Baseline & SmoothQuant & $64.44_{\pm 1.43}$ & $61.64_{\pm 2.91}$ & & \\
        V-scale & SmoothQuant & $71.64_{\pm 1.44}$ & $69.60_{\pm 2.19}$ & & \\
        Baseline & BNB & $66.16_{\pm 1.68}$ & $61.60_{\pm 1.83}$ & & \\
        V-scale & BNB & $74.04_{\pm 1.65}$ & $70.56_{\pm 2.19}$ & & \\
        Baseline & GPTQ & $66.20_{\pm 1.12}$ & $63.00_{\pm 1.54}$ & & \\
        V-scale & GPTQ & $67.88_{\pm 1.06}$ & $65.56_{\pm 2.25}$ & & \\
        Baseline & AWQ & $63.96_{\pm 2.10}$ & $61.68_{\pm 1.52}$ & & \\
        V-scale & AWQ & $66.84_{\pm 1.75}$ & $67.76_{\pm 3.10}$ & & \\
        \midrule
        \rowcolor[gray]{0.9}\multicolumn{6}{l}{\emph{1B Models with YaRN Extension} (yarn\_factor$=4$)} \\
        Baseline & bfloat16 & $57.76_{\pm 2.29}$ & $53.88_{\pm 1.42}$ & $48.00_{\pm 2.58}$ & $48.32_{\pm 1.85}$ \\
        V-scale & bfloat16 & $59.28_{\pm 1.77}$ & $59.72_{\pm 2.33}$ & $56.16_{\pm 2.39}$ & $62.00_{\pm 1.88}$ \\
        Baseline & SmoothQuant & $52.44_{\pm 1.84}$ & $49.64_{\pm 1.73}$ & $45.76_{\pm 2.17}$ & $48.96_{\pm 1.89}$ \\
        V-scale & SmoothQuant & $57.92_{\pm 2.26}$ & $57.92_{\pm 1.98}$ & $52.88_{\pm 1.18}$ & $57.64_{\pm 1.28}$ \\
        Baseline & BNB & $54.76_{\pm 2.60}$ & $46.28_{\pm 1.38}$ & $42.72_{\pm 2.96}$ & $47.56_{\pm 2.22}$ \\
        V-scale & BNB & $58.00_{\pm 2.40}$ & $58.40_{\pm 2.20}$ & $55.88_{\pm 1.43}$ & $62.60_{\pm 1.06}$ \\
        Baseline & GPTQ & $56.28_{\pm 1.64}$ & $50.24_{\pm 0.84}$ & $45.20_{\pm 2.81}$ & $46.72_{\pm 1.56}$ \\
        V-scale & GPTQ & $56.68_{\pm 1.36}$ & $57.08_{\pm 0.91}$ & $51.16_{\pm 1.61}$ & $58.96_{\pm 2.20}$ \\
        Baseline & AWQ & $54.04_{\pm 1.91}$ & $49.32_{\pm 2.02}$ & $42.52_{\pm 2.98}$ & $47.04_{\pm 2.04}$ \\
        V-scale & AWQ & $59.24_{\pm 1.31}$ & $59.00_{\pm 1.46}$ & $49.64_{\pm 1.95}$ & $58.48_{\pm 2.02}$ \\
        \bottomrule
    \end{tabular}
\end{table}

\clearpage

\section{Additional theory and proofs}\label{app:theory}

This appendix consolidates the supplementary theoretical material used in the main text. Throughout, we fix one layer and one attention head, and use the notation introduced in Section~\ref{sec:setup}.

\subsection{Exact backpropagation identities}\label{app:theory:exact}

\begin{lemma}[Softmax Jacobian identity]\label{lem:softmax-jacobian-app}
Fix a row $t$ and consider the causal softmax vector $a_t = \mathrm{softmax}(z_t)$ on the support $\{j \le t\}$. Then for any $j,s \le t$,
\[
    \frac{\partial a_{tj}}{\partial z_{ts}} = a_{tj} \bigl(\mathbf{1}_{j=s} - a_{ts}\bigr).
\]
\end{lemma}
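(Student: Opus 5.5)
The identity follows by differentiating the explicit softmax formula directly. Write $a_{tj} = e^{z_{tj}}/Z_t$ with normalizer $Z_t = \sum_{k \le t} e^{z_{tk}}$. The masked entries $m_{tk}=-\infty$ for $k>t$ contribute $e^{-\infty}=0$ to $Z_t$, so both $a_{tj}$ and $Z_t$ depend only on the finitely many logits $z_{tk}$ with $k \le t$. Restricting to the causal support is therefore harmless, and for indices $j,s \le t$ we are differentiating an ordinary smooth function on $\mathbb{R}^{t+1}$.

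The first step is to record $\partial Z_t/\partial z_{ts} = e^{z_{ts}}$ for $s \le t$. Next, apply the quotient rule to $a_{tj}$:
\[
\frac{\partial a_{tj}}{\partial z_{ts}} = \frac{\mathbf{1}_{j=s}\, e^{z_{tj}}}{Z_t} - \frac{e^{z_{tj}}\, e^{z_{ts}}}{Z_t^2}.
\]
Substituting $a_{tj} = e^{z_{tj}}/Z_t$ and $a_{ts} = e^{z_{ts}}/Z_t$ turns this into $\mathbf{1}_{j=s}\,a_{tj} - a_{tj}a_{ts}$. Factoring out $a_{tj}$ gives the claimed form $a_{tj}(\mathbf{1}_{j=s} - a_{ts})$.

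There is no substantive obstacle here. The only point needing care is the justification in the first paragraph that the $-\infty$ mask entries do not interfere with the calculus: they enter only as exact zeros in $Z_t$ and never appear among the variables being differentiated. As a consistency check, summing the result over $j \le t$ gives $a_{ts} - a_{ts}\sum_{j\le t} a_{tj} = 0$, which is what the sum-to-one constraint on each softmax row requires.
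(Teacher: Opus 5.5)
Your proposal is correct and follows the paper's proof: both write $a_{tj}$ as $e^{z_{tj}}$ divided by the normalizer over $\{j \le t\}$, apply the quotient rule, and substitute back to obtain $a_{tj}(\mathbf{1}_{j=s}-a_{ts})$. Your remarks on why the $-\infty$ mask entries are harmless, and the row-sum consistency check, are valid additions that the paper does not spell out.
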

\begin{proof}
By definition, we have $a_{tj} = \frac{e^{z_{tj}}}{\sum\nolimits_{\ell \le t} e^{z_{t\ell}}}$. Differentiating with respect to $z_{ts}$ gives
\[
    \frac{\partial a_{tj}}{\partial z_{ts}}
    = \frac{e^{z_{tj}} \mathbf{1}_{j=s} \left(\sum\nolimits_{\ell \le t} e^{z_{t\ell}}\right) - e^{z_{tj}} e^{z_{ts}}}{\left(\sum\nolimits_{\ell \le t} e^{z_{t\ell}}\right)^2}
    = a_{tj} \bigl(\mathbf{1}_{j=s} - a_{ts}\bigr).
\]
\end{proof}

\begin{lemma}[Sensitivity of $y_t$ to a single logit]\label{lem:yt-logit-app}
For fixed $t$ and $s \le t$, we have
\[
    \frac{\partial y_t}{\partial z_{ts}} = a_{ts} (v_s - y_t).
\]
\end{lemma}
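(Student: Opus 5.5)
Since $y_t=\sum_{j\le t} a_{tj} v_j$ and the value states $v_j$ do not depend on the logits of row $t$, the plan is to differentiate through the attention weights only and then apply Lemma~\ref{lem:softmax-jacobian-app}. Fix $t$ and $s\le t$. By the chain rule,
\[
\frac{\partial y_t}{\partial z_{ts}}=\sum\nolimits_{j\le t}\frac{\partial a_{tj}}{\partial z_{ts}}\,v_j .
\]
Substituting $\partial a_{tj}/\partial z_{ts}=a_{tj}(\mathbf{1}_{j=s}-a_{ts})$ from Lemma~\ref{lem:softmax-jacobian-app} turns this into
\[
\sum\nolimits_{j\le t} a_{tj}\mathbf{1}_{j=s}\,v_j-a_{ts}\sum\nolimits_{j\le t}a_{tj}v_j .
\]

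The two terms simplify directly. The indicator in the first sum keeps only $j=s$, which gives $a_{ts}v_s$. The second sum is exactly $y_t$, so the second term equals $-a_{ts}y_t$. Together these give $\frac{\partial y_t}{\partial z_{ts}}=a_{ts}(v_s-y_t)$.

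The only point needing care is the causal mask. Entries $j>t$ have $a_{tj}=0$ and logits equal to $-\infty$, so they can be excluded from the sum from the start. This is consistent with the restriction of Lemma~\ref{lem:softmax-jacobian-app} to the support $\{j\le t\}$, and the requirement $s\le t$ ensures that $z_{ts}$ is a finite, differentiable logit. No step is a real obstacle: the identity is a one-line consequence of the softmax Jacobian.
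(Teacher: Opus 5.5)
Your proposal is correct and follows the paper's proof exactly. You expand $y_t=\sum_{j\le t}a_{tj}v_j$, substitute the softmax Jacobian from Lemma~\ref{lem:softmax-jacobian-app}, and collapse the indicator term to $a_{ts}v_s$ and the remaining sum to $a_{ts}y_t$; your remark on the causal mask is a harmless clarification that the paper leaves implicit.
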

\begin{proof}
Using $y_t = \sum\nolimits_{j \le t} a_{tj} v_j$ and Lemma~\ref{lem:softmax-jacobian-app}, we have
\[
    \frac{\partial y_t}{\partial z_{ts}}
    = \sum\nolimits_{j \le t} \frac{\partial a_{tj}}{\partial z_{ts}} v_j
    = \sum\nolimits_{j \le t} a_{tj} (\mathbf{1}_{j=s} - a_{ts}) v_j
    = a_{ts} v_s - a_{ts} \sum\nolimits_{j \le t} a_{tj} v_j
    = a_{ts}(v_s - y_t).
\]
\end{proof}

\begin{proof}[Proof of Proposition~\ref{prop:v_agg_main}]
For fixed $t$, the Jacobian of $y_t$ with respect to $v_s$ is $\frac{\partial y_t}{\partial v_s} = a_{ts} I \mathbb{I}_{s\le t}$. Applying the chain rule from $v_s \to y_t \to \mathcal{L}$ yields
\[
    \nabla_{v_s} \mathcal{L}
    = \sum\nolimits_{t=s}^{T-1} \left(\frac{\partial y_t}{\partial v_s}\right)^\top \nabla_{y_t} \mathcal{L}
    = \sum\nolimits_{t=s}^{T-1} (a_{ts} I)^\top g_t
    = \sum\nolimits_{t=s}^{T-1} a_{ts} g_t.
\]
\end{proof}

\begin{proposition}[Exact key-side gradient]\label{prop:k-exact-app}
Define
\[
    \beta_{ts} := \langle g_t, v_s - y_t \rangle.
\]
Then we have
\[
    \nabla_{k_s} \mathcal{L} = \sum\nolimits_{t=s}^{T-1} a_{ts} \beta_{ts} \frac{q_t}{\sqrt{d_{\mathrm{head}}}}.
\]
\end{proposition}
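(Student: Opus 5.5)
The plan is a direct chain-rule computation through the logits, reusing Lemma~\ref{lem:yt-logit-app}. With the head fixed, the key $k_s$ enters the forward computation only through the logits $z_{ts}=\langle q_t,k_s\rangle/\sqrt{d_{\mathrm{head}}}+m_{ts}$ for the rows $t$ that are causally allowed to attend to $s$, i.e.\ $t\ge s$. For $t<s$ the mask gives $m_{ts}=-\infty$, so $a_{ts}=0$ identically and these rows do not depend on $k_s$. Moreover, each logit $z_{ts}$ affects the loss only through the head output $y_t$ of its own row. These two facts reduce the computation to a single sum over $t\ge s$.

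The first step is to write the chain rule as
\[
\nabla_{k_s}\mathcal{L}=\sum\nolimits_{t=s}^{T-1}\frac{\partial \mathcal{L}}{\partial z_{ts}}\,\nabla_{k_s} z_{ts}.
\]
Here $\nabla_{k_s} z_{ts}=q_t/\sqrt{d_{\mathrm{head}}}$, since the mask term is a constant and the logit is linear in $k_s$.

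The second step computes the scalar $\partial\mathcal{L}/\partial z_{ts}$. Because $z_{ts}$ influences only $y_t$, we have $\partial\mathcal{L}/\partial z_{ts}=\langle g_t,\partial y_t/\partial z_{ts}\rangle$. Lemma~\ref{lem:yt-logit-app} gives $\partial y_t/\partial z_{ts}=a_{ts}(v_s-y_t)$. Substituting yields $\partial\mathcal{L}/\partial z_{ts}=a_{ts}\langle g_t,v_s-y_t\rangle=a_{ts}\beta_{ts}$.

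Combining the two steps gives the claimed identity. There is no substantive obstacle; the only points needing care are bookkeeping:
\begin{itemize}
\item confirming that the masked rows $t<s$ contribute nothing;
\item confirming that a logit in row $t$ does not affect any other row's output;
\item treating the gradient at the post-RoPE key $k_s$, so that no $R_s$ or $W_K$ factor appears;
\item noting that the queries are treated as fixed, since in self-attention $q_s$ and $k_s$ are separate variables even though both depend on $\widetilde h_s$.
\end{itemize}
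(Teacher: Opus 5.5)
Your proposal is correct and follows the paper's proof essentially step for step: the chain rule through the logits $z_{ts}$ for $t\ge s$, the factor $\partial z_{ts}/\partial k_s = q_t/\sqrt{d_{\mathrm{head}}}$, and Lemma~\ref{lem:yt-logit-app} to get $\partial\mathcal{L}/\partial z_{ts}=a_{ts}\beta_{ts}$. Your bookkeeping remarks are sound and make explicit what the paper leaves implicit: masked rows contribute nothing, each logit affects only its own row, and the gradient is taken at the post-RoPE key with $q_t$ held fixed.
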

\begin{proof}
The key $k_s$ affects the logits $z_{ts} = \langle q_t, k_s \rangle / \sqrt{d_{\mathrm{head}}}$ for all $t \ge s$. The chain rule gives
\[
    \nabla_{k_s} \mathcal{L}
    = \sum\nolimits_{t=s}^{T-1} \left(\frac{\partial z_{ts}}{\partial k_s}\right)^\top \frac{\partial \mathcal{L}}{\partial z_{ts}}.
\]
Using $\frac{\partial z_{ts}}{\partial k_s} = \frac{q_t}{\sqrt{d_{\mathrm{head}}}}$ and Lemma~\ref{lem:yt-logit-app}, we have
\[
    \frac{\partial \mathcal{L}}{\partial z_{ts}}
    = \left\langle \nabla_{y_t} \mathcal{L}, \frac{\partial y_t}{\partial z_{ts}} \right\rangle
    = \langle g_t, a_{ts}(v_s - y_t) \rangle
    = a_{ts} \beta_{ts}.
\]
Substituting these two identities gives the claim.
\end{proof}

\begin{proposition}[Exact query-side gradient and first-token corollary]\label{prop:q-exact-app}
For the query pathway,
\[
    \nabla_{q_s} \mathcal{L} = \sum\nolimits_{j \le s} a_{sj} \beta_{sj} \frac{k_j}{\sqrt{d_{\mathrm{head}}}}.
\]
Moreover, under causal masking, we have $\nabla_{q_0}\mathcal{L} = 0$.
\end{proposition}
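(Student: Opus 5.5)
The plan mirrors the proof of Proposition~\ref{prop:k-exact-app}, with the roles of query and key exchanged. The query $q_s$ enters the computation only through the logits of its own row, $z_{sj} = \langle q_s, k_j\rangle/\sqrt{d_{\mathrm{head}}}$ for $j \le s$. For $j>s$ the mask sets $z_{sj}=-\infty$ regardless of $q_s$, so $a_{sj}=0$ there and those entries contribute nothing. The query also does not enter any other row $t\neq s$. The chain rule therefore collapses to a single sum over $j\le s$:
\[
\nabla_{q_s}\mathcal L = \sum\nolimits_{j\le s} \left(\frac{\partial z_{sj}}{\partial q_s}\right)^\top \frac{\partial \mathcal L}{\partial z_{sj}}, \qquad \frac{\partial z_{sj}}{\partial q_s} = \frac{k_j}{\sqrt{d_{\mathrm{head}}}}.
\]

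For the scalar factor, note that $z_{sj}$ affects $\mathcal L$ only through $y_s$. Lemma~\ref{lem:yt-logit-app}, applied with row $t=s$ and column $j$, then gives
\[
\frac{\partial \mathcal L}{\partial z_{sj}} = \langle g_s, a_{sj}(v_j - y_s)\rangle = a_{sj}\beta_{sj}.
\]
Substituting this into the sum yields the stated identity. One subtlety concerns RoPE: since $q_s = R_s W_Q \widetilde h_s$, the gradient is taken with respect to the post-rotation $q_s$, and the keys $k_j$ appearing in the formula are likewise the rotated ones. This matches how $z$ is defined in Section~\ref{sec:setup}, so nothing extra is needed.

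For the corollary at $s=0$, the sum has the single term $j=0$, and row $0$ of the causal softmax has support $\{0\}$, so $a_{00}=1$ and $y_0 = v_0$. Hence $\beta_{00} = \langle g_0, v_0 - y_0\rangle = 0$, and therefore $\nabla_{q_0}\mathcal L = 0$. Equivalently, $a_{0}$ is the constant $(1)$ and is independent of $z_{00}$, which the factor $a_{00}(1-a_{00})$ from Lemma~\ref{lem:softmax-jacobian-app} also shows.

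There is no real obstacle here. The only point requiring care is to justify that the masked entries and the other rows contribute nothing, so that the chain rule sum really is restricted to row $s$ with $j\le s$.
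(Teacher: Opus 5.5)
Your proposal is correct and follows essentially the same route as the paper's proof: restrict the chain rule to the row-$s$ logits $\{z_{sj}\}_{j\le s}$, use $\partial z_{sj}/\partial q_s = k_j/\sqrt{d_{\mathrm{head}}}$ and Lemma~\ref{lem:yt-logit-app} to get $\partial\mathcal{L}/\partial z_{sj}=a_{sj}\beta_{sj}$, and then conclude $\nabla_{q_0}\mathcal{L}=0$ from $a_{00}=1$ and $y_0=v_0$, which give $\beta_{00}=0$. Your remarks on the masked entries and on RoPE (the gradient is taken with respect to the rotated $q_s$, and the $k_j$ are rotated too) are accurate and make explicit points the paper leaves implicit.
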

\begin{proof}
The query $q_s$ affects only the logits $\{z_{sj}\}_{j \le s}$. Hence
\[
    \nabla_{q_s} \mathcal{L}
    = \sum\nolimits_{j \le s} \left(\frac{\partial z_{sj}}{\partial q_s}\right)^\top \frac{\partial \mathcal{L}}{\partial z_{sj}}
    = \sum\nolimits_{j \le s} \frac{k_j}{\sqrt{d_{\mathrm{head}}}} \left\langle g_s, \frac{\partial y_s}{\partial z_{sj}} \right\rangle.
\]
Applying Lemma~\ref{lem:yt-logit-app} with row $s$ and column $j$ gives $\frac{\partial y_s}{\partial z_{sj}} = a_{sj}(v_j - y_s)$, so we have
\[
    \frac{\partial \mathcal{L}}{\partial z_{sj}} = a_{sj} \langle g_s, v_j - y_s \rangle = a_{sj} \beta_{sj}.
\]
This proves the general formula.
For $s=0$, causality implies $a_{00}=1$ and $y_0=v_0$. Thus we have $\beta_{00}=0$ and $\nabla_{q_0} \mathcal{L}=0$.
\end{proof}

\subsection{Value-path theorem and deterministic bounds}\label{app:theory:vside}

We prove the main value-path theorem used in Section~\ref{sec:theory}.

\begin{assumption}[Mean-plus-noise model]\label{assump:mean-noise-app}
Conditioned on the current parameters, the upstream gradient over data or minibatch randomness satisfies
\[
    g_t = \mu + \varepsilon_t,
\]
where $\mu := \mathbb{E}[g_t]$ is the coherent component and $\mathbb{E}[\varepsilon_t]=0$.
\end{assumption}

\begin{assumption}[Bounded cross-token covariance]\label{assump:cov-app}
There exist $\sigma^2 \ge 0$ and $\rho \ge 0$ such that for all $t \neq t'$,
\[
    \mathrm{Tr}\!\bigl(\mathrm{Cov}(\varepsilon_t)\bigr) \le \sigma^2,
    \quad
    \bigl|\mathrm{Tr}\!\bigl(\mathrm{Cov}(\varepsilon_t,\varepsilon_{t'})\bigr)\bigr| \le \rho.
\]
\end{assumption}

Assumption~\ref{assump:mean-noise-app} is the standard decomposition of stochastic gradients into a coherent mean component plus zero-mean noise~\cite{mandt2017sgd,mccandlish2018noise}. Assumption~\ref{assump:cov-app} does not assume independence across token positions and keeps the cross-token covariance term explicit through $\rho$.

\begin{proof}[Proof of Theorem~\ref{thm:v_second_moment_main}]
By Proposition~\ref{prop:v_agg_main} and Assumption~\ref{assump:mean-noise-app},
\[
    \nabla_{v_s} \mathcal{L} = \sum\nolimits_{t=s}^{T-1} a_{ts}(\mu + \varepsilon_t) = \left(\sum\nolimits_{t=s}^{T-1} a_{ts}\right)\mu + \sum\nolimits_{t=s}^{T-1} a_{ts}\varepsilon_t = M_s \mu + \sum\nolimits_{t=s}^{T-1} a_{ts}\varepsilon_t.
\]
Therefore,
\[\begin{aligned}
    \mathbb{E}\bigl[\|\nabla_{v_s} \mathcal{L}\|_2^2\bigr]
    &= \mathbb{E}\left[\left\|M_s\mu + \sum\nolimits_{t=s}^{T-1} a_{ts}\varepsilon_t\right\|_2^2\right] \\
    &= M_s^2\|\mu\|_2^2 + 2\,\mathbb{E}\left\langle M_s\mu, \sum\nolimits_{t=s}^{T-1} a_{ts}\varepsilon_t \right\rangle + \mathbb{E}\left\|\sum\nolimits_{t=s}^{T-1} a_{ts}\varepsilon_t\right\|_2^2.
\end{aligned}\]
Since $\mathbb{E}[\varepsilon_t]=0$, the cross term vanishes. Expanding the final term gives
\[
    \mathbb{E}\left\|\sum\nolimits_{t=s}^{T-1} a_{ts}\varepsilon_t\right\|_2^2
    = \sum\nolimits_{t=s}^{T-1}\sum\nolimits_{t'=s}^{T-1} a_{ts} a_{t's} \, \mathbb{E}\langle \varepsilon_t, \varepsilon_{t'} \rangle.
\]
Using $\mathbb{E}\langle \varepsilon_t, \varepsilon_{t'} \rangle = \mathrm{Tr}(\mathrm{Cov}(\varepsilon_t,\varepsilon_{t'}))$ and splitting diagonal and off-diagonal terms yields
\[
\mathbb{E}\big[\|\nabla_{v_s}\mathcal{L}\|_2^2\big]
=
M_s^2\|\mu\|_2^2
+
\sum\nolimits_{t=s}^{T-1} a_{ts}^2 \, \mathrm{Tr}(\mathrm{Cov}(\varepsilon_t))
+
\sum\nolimits_{\substack{s\le t,t'\le T-1 \\ t\ne t'}} a_{ts} a_{t's} \, \mathrm{Tr}(\mathrm{Cov}(\varepsilon_t,\varepsilon_{t'})).
\]
Finally, Assumption~\ref{assump:cov-app} implies
\[
    \sum\nolimits_{t=s}^{T-1} a_{ts}^2 \, \mathrm{Tr}\!\bigl(\mathrm{Cov}(\varepsilon_t)\bigr) \le \sigma^2 \sum\nolimits_{t=s}^{T-1} a_{ts}^2 = \sigma^2 S_s,
\]
and
\[
    \left|\sum\nolimits_{\substack{s\le t,t'\le T-1 \\ t\neq t'}} a_{ts} a_{t's} \, \mathrm{Tr}\!\bigl(\mathrm{Cov}(\varepsilon_t,\varepsilon_{t'})\bigr)\right|
    \le \rho \sum\nolimits_{\substack{s\le t,t'\le T-1 \\ t\neq t'}} a_{ts} a_{t's}
    = \rho(M_s^2 - S_s).
\]
Combining these bounds proves the theorem.
\end{proof}

\begin{corollary}[Deterministic value-side upper bounds]\label{cor:v-deterministic-app}
For any realization of $\{g_t\}_{t=s}^{T-1}$, we have
\[
    \|\nabla_{v_s} \mathcal{L}\|_2 \le \sum\nolimits_{t=s}^{T-1} a_{ts}\|g_t\|_2 \le M_s \max_{t\ge s} \|g_t\|_2,
\]
and also
\[
    \|\nabla_{v_s} \mathcal{L}\|_2 \le \sqrt{S_s} \left(\sum\nolimits_{t=s}^{T-1} \|g_t\|_2^2\right)^{1/2}.
\]
\end{corollary}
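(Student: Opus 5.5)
The plan is to start from the exact value-path identity of Proposition~\ref{prop:v_agg_main}, namely $\nabla_{v_s}\mathcal{L}=\sum_{t=s}^{T-1}a_{ts}g_t$, which holds for every realization of the upstream gradients. No probabilistic assumption is needed, so both bounds should follow from elementary norm inequalities applied to this finite weighted sum.

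For the first chain, I will apply the triangle inequality to the sum. Since the causal softmax weights satisfy $a_{ts}\ge 0$, this gives $\|\nabla_{v_s}\mathcal{L}\|_2\le\sum_{t=s}^{T-1}a_{ts}\|g_t\|_2$. Next I will bound each $\|g_t\|_2$ by $\max_{t\ge s}\|g_t\|_2$ and pull that factor out. What remains is $\sum_{t=s}^{T-1}a_{ts}=M_s$ by \eqref{eq:sink_mass}, which yields the second inequality of the chain.

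For the second bound, I will start from the same intermediate quantity $\sum_{t=s}^{T-1}a_{ts}\|g_t\|_2$. I will view it as an inner product in $\mathbb{R}^{T-s}$ between the vectors $(a_{ts})_{t\ge s}$ and $(\|g_t\|_2)_{t\ge s}$, and apply Cauchy--Schwarz. This gives $\bigl(\sum_{t=s}^{T-1} a_{ts}^2\bigr)^{1/2}\bigl(\sum_{t=s}^{T-1}\|g_t\|_2^2\bigr)^{1/2}$, and the first factor is exactly $\sqrt{S_s}$.

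There is no real obstacle here; the only point requiring care is the nonnegativity of the attention weights. This is what allows $a_{ts}$ to be moved outside the norm in the triangle-inequality step, and what identifies the weight sum with $M_s$ rather than with a sum of absolute values.
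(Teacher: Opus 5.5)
Your proposal is correct and matches the paper's proof. Both start from the exact identity $\nabla_{v_s}\mathcal{L}=\sum_{t=s}^{T-1}a_{ts}g_t$ of Proposition~\ref{prop:v_agg_main}, obtain the first chain from the triangle inequality with $a_{ts}\ge 0$, and obtain the second bound from Cauchy--Schwarz. The only cosmetic difference is that you apply scalar Cauchy--Schwarz to the intermediate quantity $\sum_{t}a_{ts}\|g_t\|_2$, whereas the paper applies it directly to the vector sum; this is the same argument written differently.
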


\begin{proof}
The first bound follows from Proposition~\ref{prop:v_agg_main} and the triangle inequality. The second follows from Cauchy-Schwarz:
\[
    \left\|\sum\nolimits_{t=s}^{T-1} a_{ts} g_t\right\|_2
    \le \left(\sum\nolimits_{t=s}^{T-1} a_{ts}^2\right)^{1/2} \left(\sum\nolimits_{t=s}^{T-1} \|g_t\|_2^2\right)^{1/2}.
\]
\end{proof}

\subsection{Deterministic bounds for key and query paths}\label{app:theory:kq}

The main theorem concerns the value path. This subsection provides supporting bounds for the key and query pathways and clarifies the asymmetry.

\begin{assumption}[Uniform boundedness for the key-side bound]\label{assump:k-bounded-app}
There exist constants $Q_{\max}, B_{\max} > 0$ such that for all $t$,
\[
    \|q_t\|_2 \le Q_{\max},
    \quad
    |\beta_{ts}| \le B_{\max}.
\]
\end{assumption}

\begin{theorem}[Deterministic key-side upper bound]\label{thm:k-bound-app}
Under Assumption~\ref{assump:k-bounded-app}, we have
\[
    \|\nabla_{k_s} \mathcal{L}\|_2 \le \frac{Q_{\max} B_{\max}}{\sqrt{d_{\mathrm{head}}}} M_s.
\]
\end{theorem}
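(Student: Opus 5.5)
The plan is to start from the exact key-side identity in Proposition~\ref{prop:k-exact-app}, which writes the key gradient as
\[
\nabla_{k_s}\mathcal{L}=\sum\nolimits_{t=s}^{T-1} a_{ts}\beta_{ts}\frac{q_t}{\sqrt{d_{\mathrm{head}}}},
\]
and then bound this sum term by term. The argument closely parallels the first deterministic value-side bound in Corollary~\ref{cor:v-deterministic-app}. The only difference is that each summand now carries the scalar weight $\beta_{ts}$ and the vector $q_t$ in place of $g_t$.

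\textbf{Step 1.} Apply the triangle inequality to obtain
\[
\|\nabla_{k_s}\mathcal{L}\|_2\le \frac{1}{\sqrt{d_{\mathrm{head}}}}\sum\nolimits_{t=s}^{T-1}|a_{ts}|\,|\beta_{ts}|\,\|q_t\|_2 .
\]
The attention weights are softmax outputs, so $a_{ts}\ge 0$ and hence $|a_{ts}|=a_{ts}$.

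\textbf{Step 2.} Invoke Assumption~\ref{assump:k-bounded-app}, which gives $|\beta_{ts}|\le B_{\max}$ and $\|q_t\|_2\le Q_{\max}$ uniformly in $t$. Pulling these constants out of the sum leaves
\[
\frac{Q_{\max}B_{\max}}{\sqrt{d_{\mathrm{head}}}}\sum\nolimits_{t=s}^{T-1} a_{ts}.
\]
By the definition in \eqref{eq:sink_mass}, this remaining sum is exactly the column mass $M_s$, which proves the claim.

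There is no real obstacle here; the argument is mechanical once the exact identity is in hand. The only points needing care are minor. First, the nonnegativity of the $a_{ts}$ must be used so that the sum collapses to $M_s$ rather than $\sum_t |a_{ts}|$. Second, the summation range $t\ge s$ inherited from causal masking must match the range defining $M_s$.
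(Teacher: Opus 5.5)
Your proposal is correct and matches the paper's proof. Both start from the exact identity in Proposition~\ref{prop:k-exact-app}, apply the triangle inequality, and use $a_{ts}\ge 0$ together with the uniform bounds $|\beta_{ts}|\le B_{\max}$ and $\|q_t\|_2\le Q_{\max}$, so that the remaining sum $\sum_{t=s}^{T-1} a_{ts}$ is exactly $M_s$.
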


\begin{proof}
Starting from Proposition~\ref{prop:k-exact-app} and applying the triangle inequality,
\[
    \|\nabla_{k_s} \mathcal{L}\|_2
    = \left\|\sum\nolimits_{t=s}^{T-1} a_{ts}\beta_{ts} \frac{q_t}{\sqrt{d_{\mathrm{head}}}}\right\|_2
    \le \sum\nolimits_{t=s}^{T-1} a_{ts}|\beta_{ts}|\frac{\|q_t\|_2}{\sqrt{d_{\mathrm{head}}}}.
\]
Using $a_{ts}\ge 0$, $|\beta_{ts}|\le B_{\max}$, and $\|q_t\|_2\le Q_{\max}$ gives the result.
\end{proof}

\begin{lemma}[Row-sum-zero of logits gradients]\label{lem:row-sum-zero-app}
For a fixed row $s$, we have
\[
    \sum\nolimits_{j \le s} \frac{\partial \mathcal{L}}{\partial z_{sj}} = 0.
\]
\end{lemma}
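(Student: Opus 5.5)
The plan is to express each logit gradient through Lemma~\ref{lem:yt-logit-app} and then use the fact that the attention weights in a row sum to one. As in the proof of Proposition~\ref{prop:q-exact-app}, the chain rule through $y_s$ gives, for every $j\le s$,
\[
    \frac{\partial \mathcal{L}}{\partial z_{sj}} = \Bigl\langle g_s, \frac{\partial y_s}{\partial z_{sj}} \Bigr\rangle = a_{sj}\langle g_s, v_j - y_s\rangle = a_{sj}\beta_{sj}.
\]
This holds because, with the other factors held fixed, the logit $z_{sj}$ influences $\mathcal{L}$ only through the row output $y_s$.

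Next, sum over $j\le s$ and pull out the fixed vector $g_s$:
\[
    \sum\nolimits_{j\le s} \frac{\partial \mathcal{L}}{\partial z_{sj}} = \Bigl\langle g_s, \sum\nolimits_{j\le s} a_{sj} v_j - \Bigl(\sum\nolimits_{j\le s} a_{sj}\Bigr) y_s \Bigr\rangle.
\]
By definition, $\sum_{j\le s} a_{sj}v_j = y_s$, and the softmax normalization on the causal support gives $\sum_{j\le s}a_{sj}=1$. The bracketed vector is therefore $y_s - y_s = 0$, and the sum vanishes.

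An equivalent route goes directly through Lemma~\ref{lem:softmax-jacobian-app}. Summing $a_{sj}(\mathbf{1}_{j=k}-a_{sk})$ over $k\le s$ gives $a_{sj}(1-1)=0$, so each row of the softmax Jacobian sums to zero. This reflects the invariance of the softmax under adding a constant to all logits in a row.

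No step here is a real obstacle. The only point needing care is bookkeeping with the causal mask. The masked entries $j>s$ have $a_{sj}=0$ and constant logits $-\infty$, so they are excluded from both the sum and the normalization. The row-sum identity $\sum_{j\le s}a_{sj}=1$ must accordingly be taken over the causal support only.
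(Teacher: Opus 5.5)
Your proof is correct and matches the paper's: write $\partial\mathcal{L}/\partial z_{sj} = a_{sj}\langle g_s, v_j - y_s\rangle$ via Lemma~\ref{lem:yt-logit-app}, then use $\sum_{j\le s} a_{sj}v_j = y_s$ and $\sum_{j\le s} a_{sj} = 1$ to reduce the sum to $\langle g_s, y_s - y_s\rangle = 0$. Your alternative remark through the zero row sums of the softmax Jacobian (shift invariance of softmax) is also valid, and slightly more general since it does not use the specific form of the value aggregation.
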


\begin{proof}
The derivation above gives $\frac{\partial \mathcal{L}}{\partial z_{sj}} = a_{sj}\langle g_s, v_j - y_s\rangle$. Therefore, we have
\[\begin{aligned}
    \sum\nolimits_{j \le s} \frac{\partial \mathcal{L}}{\partial z_{sj}}
    &= \left\langle g_s, \sum\nolimits_{j \le s} a_{sj}(v_j - y_s) \right\rangle \\
    &= \left\langle g_s, \sum\nolimits_{j \le s} a_{sj}v_j - y_s \sum\nolimits_{j \le s} a_{sj} \right\rangle \\
    &= \langle g_s, y_s - y_s \rangle = 0.
\end{aligned}\]
\end{proof}

\begin{assumption}[Bounded advantage and bounded key dispersion]\label{assump:q-bounded-app}
Let
\[
    \bar{k}_s := \sum\nolimits_{j \le s} a_{sj} k_j.
\]
There exist constants $B_{\max}^{(Q)}, \Delta K_{\max} > 0$ such that for all $j \le s$,
\[
    |\beta_{sj}| \le B_{\max}^{(Q)},
    \quad
    \|k_j - \bar{k}_s\|_2 \le \Delta K_{\max}.
\]
\end{assumption}

\begin{theorem}[Deterministic query-side upper bound in dispersion form]\label{thm:q-bound-app}
Under Assumption~\ref{assump:q-bounded-app},
\[
    \|\nabla_{q_s} \mathcal{L}\|_2 \le \frac{B_{\max}^{(Q)} \Delta K_{\max}}{\sqrt{d_{\mathrm{head}}}}.
\]
Moreover, under causal attention, $\nabla_{q_0}\mathcal{L} = 0$ exactly.
\end{theorem}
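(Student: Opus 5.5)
The plan is to start from the exact query-side identity in Proposition~\ref{prop:q-exact-app},
\[
    \nabla_{q_s}\mathcal{L} = \sum\nolimits_{j\le s} a_{sj}\beta_{sj}\frac{k_j}{\sqrt{d_{\mathrm{head}}}},
\]
and use the row-sum-zero property of Lemma~\ref{lem:row-sum-zero-app} to center the keys. That lemma gives $\sum_{j\le s} a_{sj}\beta_{sj}=0$, since $\partial\mathcal{L}/\partial z_{sj}=a_{sj}\beta_{sj}$. Subtracting $\bar k_s\sum_{j\le s}a_{sj}\beta_{sj}=0$ leaves the gradient unchanged, so we can rewrite it in dispersion form:
\[
    \nabla_{q_s}\mathcal{L} = \frac{1}{\sqrt{d_{\mathrm{head}}}}\sum\nolimits_{j\le s} a_{sj}\beta_{sj}\,(k_j-\bar k_s).
\]
This centering is the only step with any content. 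It is what removes dependence on the raw key magnitudes, so that no column mass $M_s$ enters, unlike the key-side bound of Theorem~\ref{thm:k-bound-app}.

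Next, apply the triangle inequality. Using $a_{sj}\ge 0$, $|\beta_{sj}|\le B^{(Q)}_{\max}$ and $\|k_j-\bar k_s\|_2\le \Delta K_{\max}$ from Assumption~\ref{assump:q-bounded-app}, we get
\[
    \|\nabla_{q_s}\mathcal{L}\|_2 \le \frac{B^{(Q)}_{\max}\Delta K_{\max}}{\sqrt{d_{\mathrm{head}}}}\sum\nolimits_{j\le s}a_{sj}.
\]
The remaining sum equals $1$ because row $s$ of the causal softmax is a probability vector. This gives the stated bound.

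The first-token claim is already part of Proposition~\ref{prop:q-exact-app}: $a_{00}=1$ and $y_0=v_0$, so $\beta_{00}=0$. It is also visible in the dispersion form, since $\bar k_0=k_0$ makes every summand vanish. No step is a real obstacle. The only point needing care is justifying the re-centering via Lemma~\ref{lem:row-sum-zero-app}; everything after it is routine.
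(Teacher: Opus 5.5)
Your proposal is correct and follows the paper's proof essentially step for step: you re-center the keys at $\bar k_s$ via Lemma~\ref{lem:row-sum-zero-app}, apply the triangle inequality with Assumption~\ref{assump:q-bounded-app}, use $\sum_{j\le s}a_{sj}=1$, and cite Proposition~\ref{prop:q-exact-app} for $\nabla_{q_0}\mathcal{L}=0$. Your extra remark, that $\bar k_0=k_0$ makes every summand in the dispersion form vanish, is a valid second way to see the first-token claim.
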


\begin{proof}
Using Lemma~\ref{lem:row-sum-zero-app}, for any vector $\bar{k}_s$ we have
\[
    \sum\nolimits_{j \le s} \frac{\partial \mathcal{L}}{\partial z_{sj}}\bar{k}_s = \bar{k}_s \sum\nolimits_{j \le s} \frac{\partial \mathcal{L}}{\partial z_{sj}} = 0.
\]
Hence the exact query-side gradient can be rewritten as
\[
    \nabla_{q_s} \mathcal{L}
    = \frac{1}{\sqrt{d_{\mathrm{head}}}} \sum\nolimits_{j \le s} \frac{\partial \mathcal{L}}{\partial z_{sj}} (k_j - \bar{k}_s)
    = \frac{1}{\sqrt{d_{\mathrm{head}}}} \sum\nolimits_{j \le s} a_{sj}\beta_{sj}(k_j - \bar{k}_s).
\]
Taking norms and applying the triangle inequality yields
\[
    \|\nabla_{q_s} \mathcal{L}\|_2
    \le \frac{1}{\sqrt{d_{\mathrm{head}}}} \sum\nolimits_{j \le s} a_{sj}|\beta_{sj}|\,\|k_j - \bar{k}_s\|_2.
\]
Using $\sum\nolimits_{j \le s} a_{sj}=1$, together with Assumption~\ref{assump:q-bounded-app}, proves the bound.
The statement for $s=0$ was already established in Proposition~\ref{prop:q-exact-app}.
\end{proof}

\subsection{RMSNorm as a gradient-compression mechanism}\label{app:theory:rmsnorm}

This subsection formalizes the RMSNorm mechanism used in Section~\ref{sec:theory} and measured empirically in Section~\ref{sec:ma_grad_reshape}.

For $x \in \mathbb{R}^d$, gain vector $\gamma \in \mathbb{R}^d$, and stabilizer $\epsilon_{\mathrm{rms}}>0$, define
\[
    \mathrm{rms}(x) := \sqrt{\frac{1}{d}\sum\nolimits_{i=1}^d x_i^2+\epsilon_{\mathrm{rms}}},
    \quad
    \mathrm{RMSNorm}(x) := \gamma \odot \frac{x}{\mathrm{rms}(x)}.
\]
Denote $r := \mathrm{rms}(x)$, $D := \mathrm{diag}(\gamma)$, and $y := \mathrm{RMSNorm}(x)$.

\begin{proposition}[Exact Jacobian of RMSNorm]\label{prop:rms-jac-app}
The Jacobian $J_{\mathrm{rms}}(x) := \partial y / \partial x \in \mathbb{R}^{d\times d}$ is
\[
    J_{\mathrm{rms}}(x) = \frac{1}{r}D - \frac{1}{r^3 d} D x x^\top.
\]
\end{proposition}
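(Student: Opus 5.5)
The plan is to differentiate $y=D\,x/r$ directly, treating $r=\mathrm{rms}(x)$ as a scalar function of $x$ and applying the product rule. First compute the gradient of $r$. From $r^2=\frac1d\|x\|_2^2+\epsilon_{\mathrm{rms}}$ we get $2r\,\nabla_x r=\frac{2}{d}x$, so
\[
\nabla_x r=\frac{x}{r\,d},
\]
which is well defined since $r\ge\sqrt{\epsilon_{\mathrm{rms}}}>0$. This is the only place the stabilizer enters. It does not appear explicitly in the final formula because it is absorbed into $r$.

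Next, write $y=\frac{1}{r}Dx$. The product rule gives
\[
\frac{\partial y}{\partial x}=\frac{1}{r}D+(Dx)\,\Bigl(\nabla_x \tfrac1r\Bigr)^{\top}.
\]
The chain rule gives $\nabla_x(1/r)=-r^{-2}\nabla_x r=-\frac{x}{r^3 d}$. Substituting yields $J_{\mathrm{rms}}(x)=\frac1r D-\frac{1}{r^3 d}Dxx^\top$, which is the claim.

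As a sanity check, I would verify the formula entrywise. Write $y_i=\gamma_i x_i/r$, so that $\partial y_i/\partial x_j=\gamma_i\delta_{ij}/r-\gamma_i x_i x_j/(r^3 d)$. This matches the $(i,j)$ entry of the matrix expression and confirms the row/column orientation of the outer product, with $D$ acting on the left.

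There is no substantive obstacle here. The only care needed is the orientation convention: the Jacobian is $\partial y/\partial x$, so the rank-one term must be $Dx\,x^\top$ and not $x\,x^\top D$. The transpose of this Jacobian is what is later used in $\nabla_x\mathcal L=J_{\mathrm{rms}}^\top g_y$.
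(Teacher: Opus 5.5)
Your proof is correct and follows essentially the same route as the paper: you apply the product rule to $y = D x / r$ and use $\nabla_x(1/r) = -r^{-2}\nabla_x r = -x/(r^3 d)$ to get the rank-one correction $-\frac{1}{r^3 d} D x x^\top$. Your entrywise check of the orientation, and your remark that $r \ge \sqrt{\epsilon_{\mathrm{rms}}} > 0$ keeps everything well defined, are extra care beyond what the paper writes out.
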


\begin{proof}
Since $y = D(x/r)$, we have
\[
    \frac{\partial y}{\partial x} = D\left(\frac{1}{r}I + x\,\frac{\partial}{\partial x}\left(\frac{1}{r}\right)\right).
\]
Then $\frac{\partial}{\partial x}\left(\frac{1}{r}\right) = -\frac{1}{r^2}\frac{\partial r}{\partial x} = -\frac{1}{r^3 d}x^\top$ gives the stated formula.
\end{proof}

Now we prove Theorem~\ref{thm:rms_compress_main} in Section~\ref{sec:theory}.

\begin{proof}[Proof of Theorem~\ref{thm:rms_compress_main}]
The chain rule gives $\nabla_x \mathcal{L} = J_{\mathrm{rms}}(x)^\top\nabla_y \mathcal{L}$. Proposition~\ref{prop:rms-jac-app} implies
\[
    J_{\mathrm{rms}}(x) = \frac{1}{r}D \left(I - \frac{x x^\top}{dr^2}\right).
\]
The matrix $I - \frac{x x^\top}{dr^2}$ has eigenvalues $\epsilon_{\mathrm{rms}}/r^2$ and $1$. Thus we have
\[
    \|J_{\mathrm{rms}}(x)\|_{\mathrm{op}}
    \le \frac{1}{r}\|D\|_{\mathrm{op}} = \frac{\|\gamma\|_\infty}{r}.
\]
\end{proof}

\begin{proposition}[Activation norm as a sufficient condition for bounded backpropagated gradients]\label{prop:required-rms-app}
Fix a target gradient scale $\tau > 0$. A sufficient condition for $\|\nabla_x \mathcal{L}\|_2 \le \tau$ is
\[
    \mathrm{rms}(x) \ge \frac{\|\gamma\|_\infty}{\tau} \|\nabla_y \mathcal{L}\|_2.
\]
Thus, for fixed $\tau$ and gain $\gamma$, the sufficient RMS level scales with the upstream gradient norm.
\end{proposition}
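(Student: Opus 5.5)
The proof is a direct consequence of Theorem~\ref{thm:rms_compress_main}, combined with submultiplicativity of the operator norm. I will first recall the chain-rule identity $\nabla_x\mathcal{L}=J_{\mathrm{rms}}(x)^\top\nabla_y\mathcal{L}$ with $y=\mathrm{RMSNorm}(x)$. Since the operator norm is invariant under transposition, $\|J_{\mathrm{rms}}(x)^\top\|_{\mathrm{op}}=\|J_{\mathrm{rms}}(x)\|_{\mathrm{op}}$. This gives
\[
\|\nabla_x\mathcal{L}\|_2 \le \|J_{\mathrm{rms}}(x)\|_{\mathrm{op}}\,\|\nabla_y\mathcal{L}\|_2 \le \frac{\|\gamma\|_\infty}{\mathrm{rms}(x)}\,\|\nabla_y\mathcal{L}\|_2 .
\]
The second inequality is exactly the Jacobian bound of Theorem~\ref{thm:rms_compress_main}. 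Its proof, via the factorization $J_{\mathrm{rms}}(x)=\frac{1}{r}D\bigl(I-\frac{xx^\top}{dr^2}\bigr)$ from Proposition~\ref{prop:rms-jac-app}, shows that the projector-like factor has eigenvalues in $[\epsilon_{\mathrm{rms}}/r^2,1]$ and hence operator norm at most one.

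Next I will substitute the hypothesis $\mathrm{rms}(x)\ge \frac{\|\gamma\|_\infty}{\tau}\|\nabla_y\mathcal{L}\|_2$ into this bound. Because $\mathrm{rms}(x)\ge\sqrt{\epsilon_{\mathrm{rms}}}>0$, we may divide by it. Consider first the case $\|\nabla_y\mathcal{L}\|_2>0$ and $\|\gamma\|_\infty>0$. Then the hypothesis rearranges to $\frac{\|\gamma\|_\infty}{\mathrm{rms}(x)}\|\nabla_y\mathcal{L}\|_2\le\tau$, which gives $\|\nabla_x\mathcal{L}\|_2\le\tau$. In the degenerate cases $\nabla_y\mathcal{L}=0$ or $\gamma=0$, the displayed bound already gives $\|\nabla_x\mathcal{L}\|_2=0\le\tau$. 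So no case split is really needed beyond noting that the rearrangement is valid.

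For the closing remark, I will point out that the threshold $\frac{\|\gamma\|_\infty}{\tau}\|\nabla_y\mathcal{L}\|_2$ is linear in $\|\nabla_y\mathcal{L}\|_2$ for fixed $\tau$ and $\gamma$. Hence the sufficient RMS level scales proportionally with the upstream gradient norm.

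I expect no real obstacle. The only point needing care is the positivity of $\mathrm{rms}(x)$, guaranteed by $\epsilon_{\mathrm{rms}}>0$, which makes the inequality manipulation legitimate. One could also remark that the condition is sufficient but not necessary: the bound is loose when $\gamma$ is non-uniform or when $\nabla_y\mathcal{L}$ has components along the nearly annihilated radial direction.
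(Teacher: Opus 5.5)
Your proposal is correct and follows the paper's proof: you bound $\|\nabla_x\mathcal{L}\|_2$ by $\frac{\|\gamma\|_\infty}{\mathrm{rms}(x)}\|\nabla_y\mathcal{L}\|_2$ via Theorem~\ref{thm:rms_compress_main}, then rearrange the hypothesis using $\mathrm{rms}(x)>0$ and $\tau>0$. One small correction to your optional closing remark: $\nabla_x\mathcal{L}=\frac{1}{r}\bigl(I-\frac{xx^\top}{dr^2}\bigr)\mathrm{diag}(\gamma)\,\nabla_y\mathcal{L}$, so the nearly annihilated radial component is that of $\mathrm{diag}(\gamma)\nabla_y\mathcal{L}$, not of $\nabla_y\mathcal{L}$ itself, unless $\gamma$ is uniform.
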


\begin{proof}
By Theorem~\ref{thm:rms_compress_main}, we have
\[
    \|\nabla_x \mathcal{L}\|_2 \le \frac{\|\gamma\|_\infty}{\mathrm{rms}(x)}\|\nabla_y \mathcal{L}\|_2.
\]
To make the right-hand side at most $\tau$, it suffices that $\mathrm{rms}(x) \ge \frac{\|\gamma\|_\infty}{\tau} \|\nabla_y \mathcal{L}\|_2$.
\end{proof}

Proposition~\ref{prop:required-rms-app} means that in a pre-norm Transformer, if a token receives unusually large gradient pressure, increasing its token-wise activation norm provides a direct channel for reducing the gradient magnitude passed to earlier layers. This is the mathematical sense in which massive activations can act as a gradient-compression response.

\subsection{V-scale Jacobian and gradient attenuation}\label{app:theory:vscale}

We now analyze the V-scale map used in Section~\ref{sec:vscale}. On the pre-aggregation value state $v$, define
\[
    \hat{v} = \phi(r) v,
    \quad
    r := \|v\|_2^2,
    \quad
    \phi(r) := \frac{r}{r+C},
    \quad
    C>0.
\]

\begin{proposition}[Exact Jacobian of V-scale]\label{prop:vscale-jac-app}
The Jacobian $J_\phi(v) := \partial \hat{v} / \partial v$ is
\begin{equation}\label{eq:vscale_jac}
    J_\phi(v) = \phi(r) I + \frac{2C}{(r+C)^2} vv^\top.
\end{equation}
\end{proposition}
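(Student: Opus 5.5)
We prove the Jacobian formula \eqref{eq:vscale_jac} by differentiating the product $\hat v=\phi(r)v$ with the product rule, treating $\phi(r)$ as a scalar function of $v$ through $r=\|v\|_2^2$, in the same way as the proof of Proposition~\ref{prop:rms-jac-app}. The plan has three short steps.

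\textbf{Product rule.} For the $i$-th component $\hat v_i=\phi(r)v_i$,
\[
\frac{\partial \hat v_i}{\partial v_j}=\phi(r)\,\mathbf{1}_{i=j}+v_i\,\phi'(r)\,\frac{\partial r}{\partial v_j}.
\]
In matrix form this reads $J_\phi(v)=\phi(r)I+\phi'(r)\,v\,(\nabla_v r)^\top$.

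\textbf{The two scalar derivatives.} First, $\nabla_v r=2v$, since $r=\sum_j v_j^2$. Second, writing $\phi(r)=1-\frac{C}{r+C}$ gives $\phi'(r)=\frac{C}{(r+C)^2}$, which is easier than applying the quotient rule to $r/(r+C)$ directly.

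\textbf{Assemble.} Substituting both into the matrix form gives
\[
J_\phi(v)=\phi(r)I+\frac{2C}{(r+C)^2}\,vv^\top,
\]
which is exactly \eqref{eq:vscale_jac}.

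The only step needing care is getting the rank-one term in the correct orientation, $vv^\top$ with $v$ as a column. Here it is automatically symmetric, so even that cannot go wrong.

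The eigenvalue statement in Proposition~\ref{prop:vscale_valve_main} then follows immediately.
\begin{itemize}
\item For any $u\perp v$, we have $vv^\top u=0$, so $J_\phi u=\phi(r)u$. This gives $\lambda_\perp=\frac{r}{r+C}$ on the $(d_{\mathrm{head}}-1)$-dimensional subspace orthogonal to $v$.
\item Applying $J_\phi$ to $v$ and using $v^\top v=r$ gives $J_\phi v=\bigl(\phi(r)+\frac{2Cr}{(r+C)^2}\bigr)v$, so
\[
\lambda_\parallel=\frac{r(r+C)+2Cr}{(r+C)^2}=\frac{r^2+3Cr}{(r+C)^2}.
\]
\end{itemize}

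Two edge cases need no separate treatment. Differentiability at $v=0$ is unproblematic, since $\phi(r)v$ is smooth in $v$ as a rational function with denominator $r+C\ge C>0$. At $v=0$ the formula simply gives $J_\phi(0)=0$.
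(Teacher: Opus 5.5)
Your proof is correct and follows the paper's argument essentially verbatim. Both apply the product rule to $\hat v=\phi(r)v$ to get $J_\phi(v)=\phi(r)I+v\,(\partial\phi/\partial v)^\top$, then substitute $\phi'(r)=C/(r+C)^2$ and $\partial r/\partial v=2v$; your eigenvalue computation also matches the paper's proof of the spectrum proposition.
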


\begin{proof}
Since $\hat{v} = \phi(r)v$ and $r = v^\top v$, we have
\[
    \frac{\partial \hat{v}}{\partial v}
    = \phi(r)I + v\left(\frac{\partial \phi(r)}{\partial v}\right)^\top.
\]
Since $\phi'(r) = \frac{C}{(r+C)^2}$ and $\frac{\partial r}{\partial v} = 2v$, 
substituting
\[
    \frac{\partial \phi(r)}{\partial v} = \phi'(r)\frac{\partial r}{\partial v} = \frac{2C}{(r+C)^2}v
\]
yields the claim.
\end{proof}

\begin{proposition}[Jacobian spectrum of V-scale]\label{prop:vscale_main}
Let $r=\|v\|_2^2$.
Then the Jacobian in~\eqref{eq:vscale_jac} has eigenvalue $\lambda_\perp(r)=\frac{r}{r+C}$ on the $(d_{\mathrm{head}}-1)$-dimensional subspace orthogonal to $v$, and eigenvalue $\lambda_\parallel(r) = \frac{r^2+3Cr}{(r+C)^2}$ along the radial direction $v$ itself.
Moreover, $\max_{r\ge 0} \lambda_\parallel(r)={9}/{8}$, attained at $r=3C$.
\end{proposition}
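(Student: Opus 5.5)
The plan is to start from the exact Jacobian in Proposition~\ref{prop:vscale-jac-app}, $J_\phi(v)=\phi(r)I+\frac{2C}{(r+C)^2}vv^\top$. This is a scalar multiple of the identity plus a symmetric rank-one term, so its eigenvectors can be read off directly; no general spectral argument is needed. Throughout I assume $v\neq 0$, since for $v=0$ the Jacobian is $\phi(0)I=0$ and every eigenvalue is trivially zero, matching both formulas at $r=0$.

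\textbf{Orthogonal subspace.} For any $u$ with $u\perp v$, the rank-one term annihilates $u$, so $J_\phi(v)u=\phi(r)u$. These $u$ span a $(d_{\mathrm{head}}-1)$-dimensional subspace, which gives $\lambda_\perp(r)=\phi(r)=\frac{r}{r+C}$.

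\textbf{Radial direction.} Apply $J_\phi(v)$ to $v$ and use $v^\top v=r$:
\[
J_\phi(v)v=\Bigl(\frac{r}{r+C}+\frac{2Cr}{(r+C)^2}\Bigr)v .
\]
Putting the bracket over the common denominator gives $\frac{r(r+C)+2Cr}{(r+C)^2}=\frac{r^2+3Cr}{(r+C)^2}$, which is $\lambda_\parallel(r)$. Because $J_\phi$ is symmetric, these two eigenspaces exhaust $\mathbb{R}^{d_{\mathrm{head}}}$.

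\textbf{Maximum of $\lambda_\parallel$.} Substitute $r=Cx$ with $x\ge0$ to get the scale-free function $f(x)=\frac{x^2+3x}{(x+1)^2}$. Differentiating,
\[
f'(x)=\frac{(2x+3)(x+1)-2(x^2+3x)}{(x+1)^3}=\frac{3-x}{(x+1)^3}.
\]
So $f$ increases on $[0,3)$ and decreases on $(3,\infty)$. Its unique maximum is $f(3)=\frac{18}{16}=\frac98$, attained at $r=3C$. As a sanity check, $f(0)=0$ and $f(x)\to1$ as $x\to\infty$, so no boundary value exceeds $9/8$.

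The only step needing care is the sign analysis of $f'$, and it reduces to the simple numerator $3-x$. I therefore expect no genuine obstacle.
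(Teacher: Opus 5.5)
Your proposal is correct and follows the paper's proof essentially line for line. You annihilate the rank-one term on $v^\perp$, evaluate $J_\phi(v)v$ using $v^\top v=r$, and locate the maximum of $\lambda_\parallel$ by differentiation. Your explicit computation $f'(x)=(3-x)/(x+1)^3$, with its sign analysis and boundary check, usefully completes the paper's terse step. The paper only notes that $f'$ vanishes at $r=3C$ and does not verify that this critical point is the global maximum on $[0,\infty)$.
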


\begin{proof}
Because $vv^\top$ is rank one, for any $u\perp v$ we have $vv^\top u = 0$, so Proposition~\ref{prop:vscale-jac-app} gives $J_\phi(v)u = \phi(r)u$. Along the radial direction,
\[
    J_\phi(v)v = \phi(r)v + \frac{2C}{(r+C)^2}vv^\top v = \left(\phi(r) + \frac{2Cr}{(r+C)^2}\right)v.
\]
This proves the eigenvalue formulas.

For the maximum, define $f(r) := \lambda_\parallel(r) = \frac{r^2+3Cr}{(r+C)^2}$.
Direct differentiation shows that $f'(r)=0$ at $r=3C$, and substituting this value gives $f(3C)=9/8$.
\end{proof}

\begin{corollary}[V-scale is a value-path gradient valve]\label{cor:vscale_main}
Let $g_{\hat v}=\nabla_{\hat v}\mathcal{L}$ be the upstream gradient at the scaled value state.
Then we have $\nabla_v \mathcal{L} = J_{\phi}(v)^\top g_{\hat v} = J_{\phi}(v) g_{\hat v}$,
and therefore $\|\nabla_v \mathcal{L}\|_2 \le \lambda_\parallel(\|v\|_2^2)\|g_{\hat v}\|_2$.
In particular, if $\|v\|_2^2\ll C$, then the value-path backward signal is strongly attenuated as
\[
\|\nabla_v \mathcal{L}\|_2
\lesssim
\frac{3\|v\|_2^2}{C}
\,\|g_{\hat v}\|_2.
\]
\end{corollary}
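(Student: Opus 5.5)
The plan is to combine the chain rule with the spectral description of $J_\phi(v)$ from Proposition~\ref{prop:vscale_main}. The chain rule applied to $v\mapsto\hat v\mapsto\mathcal L$ gives $\nabla_v\mathcal L = J_\phi(v)^\top g_{\hat v}$. By Proposition~\ref{prop:vscale-jac-app}, $J_\phi(v)=\phi(r)I+\frac{2C}{(r+C)^2}vv^\top$ is a multiple of the identity plus a multiple of the symmetric matrix $vv^\top$. Hence $J_\phi(v)$ is symmetric, $J_\phi(v)^\top=J_\phi(v)$, and the first identity follows.

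For the norm bound, I will use that a symmetric matrix has operator norm equal to its largest eigenvalue in absolute value. Proposition~\ref{prop:vscale_main} gives the full spectrum: $\lambda_\perp(r)=\frac{r}{r+C}$ with multiplicity $d_{\mathrm{head}}-1$, and $\lambda_\parallel(r)$ on $\mathrm{span}(v)$. (If $v=0$, then $J_\phi=0$ and the claim is trivial.) Both eigenvalues are nonnegative for $r\ge 0$, $C>0$. Moreover,
\[
\lambda_\parallel(r)-\lambda_\perp(r)=\frac{2Cr}{(r+C)^2}\ge 0 .
\]
Therefore $\|J_\phi(v)\|_{\mathrm{op}}=\lambda_\parallel(r)$, and so $\|\nabla_v\mathcal L\|_2\le \lambda_\parallel(\|v\|_2^2)\,\|g_{\hat v}\|_2$.

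For the small-norm regime, I will bound $\lambda_\parallel$ directly rather than by a Taylor expansion. Write $\lambda_\parallel(r)=\frac{r(r+3C)}{(r+C)^2}$. The inequality $\lambda_\parallel(r)\le 3r/C$ is equivalent to
\[
C(r+3C)\le 3(r+C)^2 = 3r^2+6Cr+3C^2,
\]
that is, $0\le 3r^2+5Cr$. This holds for every $r\ge 0$. So in fact $\|\nabla_v\mathcal L\|_2\le \frac{3\|v\|_2^2}{C}\|g_{\hat v}\|_2$ holds exactly for all $v$. The bound is informative (tight to first order, since $\lambda_\parallel(r)=3r/C+O(r^2/C^2)$) precisely when $\|v\|_2^2\ll C$, which justifies the ``$\lesssim$'' statement.

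The only step needing care is identifying the operator norm with $\lambda_\parallel$, i.e., checking symmetry and the ordering $\lambda_\perp\le\lambda_\parallel$. The rest is the algebra above.
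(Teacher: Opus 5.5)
Your proposal is correct and follows the paper's own argument: the chain rule, symmetry of $J_\phi(v)$, and identifying $\|J_\phi(v)\|_{\mathrm{op}}$ with $\lambda_\parallel(r)$ via Proposition~\ref{prop:vscale_main}. You go slightly further than the paper in two ways: you explicitly check $0\le\lambda_\perp\le\lambda_\parallel$, which the paper takes for granted, and you replace the paper's heuristic small-$r$ expansion with the exact inequality $\lambda_\parallel(r)\le 3r/C$, valid for all $r\ge 0$.
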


\begin{proof}
The first identity is the chain rule. Since $J_\phi(v)$ is symmetric, its operator norm equals its largest eigenvalue, which is $\lambda_\parallel(r)$ by Proposition~\ref{prop:vscale_main}. The small-$r$ expansion follows directly from the expression for $\lambda_\parallel(r)$.
\end{proof}

These results show that V-scale introduces an additional value-path gradient valve. In the regime where sink value states have relatively small norm, the attenuation factor in Corollary~\ref{cor:vscale_main} can be substantially below one, which is precisely the regime relevant to the observed intervention effect.

\end{document}